\theoremstyle{plain}
\newtheorem{theorem}{Theorem}[section]
\newtheorem{proposition}[theorem]{Proposition}
\newtheorem{corollary}[theorem]{Corollary}
\theoremstyle{definition}
\newtheorem{assumption}[theorem]{Assumption}
\theoremstyle{remark}
\newtheorem{remark}[theorem]{Remark}
\theoremstyle{definition}
\newtheorem{example}{Example}
\def\adl@drawiv#1#2#3{%
        \hskip.5\tabcolsep
        \xleaders#3{#2.5\@tempdimb #1{1}#2.5\@tempdimb}%
                #2\z@ plus1fil minus1fil\relax
        \hskip.5\tabcolsep}
\newcommand{\cdashlinelr}[1]{%
  \noalign{\vskip\aboverulesep
           \global\let\@dashdrawstore\adl@draw
           \global\let\adl@draw\adl@drawiv}
  \cdashline{#1}
  \noalign{\global\let\adl@draw\@dashdrawstore
           \vskip\belowrulesep}}
\newcommand{\bmx}{\bm{x}}
\newcommand{\bmy}{\bm{y}}
\newcommand{\bmfs}{\bm{f_s}}
\newcommand{\bmf}{\bm{f}}
\newcommand{\bms}{\bm{s}}
\newcommand{\bma}{\bm{a}}
\newcommand{\bmb}{\bm{b}}
\newcommand{\bmc}{\bm{c}}
\newcommand{\bmal}{\bm{\alpha}}
\newcommand{\bmbe}{\bm{\beta}}
\newcommand{\bmga}{\bm{\gamma}}
\newcommand{\blue}[1]{#1}
\title{Transfer Learning with Affine Model Transformation}
\author{%
  Shunya Minami \\
  The Institute of Statistical Mathematics\\
  \texttt{mshunya@ism.ac.jp} \\
  \And
  Kenji Fukumizu \\
  The Institute of Statistical Mathematics\\
  \texttt{fukumizu@ism.ac.jp} \\
  \AND
  Yoshihiro Hayashi \\
  The Institute of Statistical Mathematics\\
  \texttt{yhayashi@ism.ac.jp} \\
  \And
  Ryo Yoshida \\
  The Institute of Statistical Mathematics\\
  \texttt{yoshidar@ism.ac.jp} \\
}
\begin{document}

\maketitle

\begin{abstract}
Supervised transfer learning has received considerable attention due to its potential to boost the predictive power of machine learning in scenarios where data are scarce. 
Generally, a given set of source models and a dataset from a target domain are used to adapt the pre-trained models to a target domain by statistically learning domain shift and domain-specific factors. 
While such procedurally and intuitively plausible methods have achieved great success in a wide range of real-world applications,  the lack of a theoretical basis hinders further methodological development.
This paper presents a general class of transfer learning regression called affine model transfer, following the principle of expected-square loss minimization. 
It is shown that the affine model transfer broadly encompasses various existing methods, including the most common procedure based on neural feature extractors. 
Furthermore, the current paper clarifies theoretical properties of the affine model transfer such as generalization error and excess risk.
Through several case studies, we demonstrate the practical benefits of modeling and estimating inter-domain commonality and domain-specific factors separately with the affine-type transfer models. 
\end{abstract}

\section{Introduction}
Transfer learning (TL) is a methodology 
to improve the predictive performance of machine learning in a target domain with limited data by reusing knowledge gained from training in related source domains. 
Its great potential has been demonstrated in various real-world problems, including computer vision \citep{Krizhevsky2012ImageNetCW, Csurka2017DomainAF}, natural language processing \citep{Ruder2019TransferLI, Devlin2019BERTPO}, biology \citep{Sevakula2019TransferLF}, and materials science \citep{Yamada2019PredictingMP, wu2019machine,Ju2019ExploringDL}. 
Notably, most of the outstanding successes of TL to date have relied on the feature extraction ability of deep neural networks.
For example, a conventional method reuses feature representations encoded in an intermediate layer of a pre-trained model as an input for the target task or uses samples from the target domain to fine-tune the parameters of the pre-trained source model \citep{yosinski2014transferable}. 
While such methods are operationally plausible and intuitive, they lack methodological principles and remain theoretically unexplored in terms of their learning capability for limited data. 
This study develops a principled methodology generally applicable to various kinds of TL.

In this study, we focus on supervised TL settings. 
In particular, we deal with settings where, given feature representations obtained from training in the source domain, we use samples from the target domain to model and estimate the domain shift to the target.
This procedure is called hypothesis transfer learning (HTL); several methods have been proposed, such as using a linear transformation function \citep{kuzborskij2013stability, kuzborskij2017fast} and considering a general class of continuous transformation functions \citep{du2017hypothesis}.
If the transformation function appropriately captures the functional relationship between the source and target domains, only the domain-specific factors need to be additionally learned, which can be done efficiently even with a limited sample size.
In other words, the performance of HTL depends strongly on whether the transformation function appropriately represents the cross-domain shift.
However, the general methodology for modeling and estimating such domain shifts has been less studied. 

This study derives a theoretically optimal class of supervised TL that minimizes the expected $\ell_2$ loss function of the HTL. The resulting function class takes the form of an affine coupling $g_1(\bmfs) + g_2(\bmfs) \cdot g_3(\bmx)$ of three functions $g_1, g_2$ and $g_3$, where the shift from a given source feature $f_s$ to the target domain is represented by the functions $g_1$ and $g_2$, and the domain-specific factors are represented by $g_3(\bmx)$ for any given input $\bmx$. 
These functions can be estimated simultaneously using conventional supervised learning algorithms such as kernel methods or deep neural networks.
Hereafter, we refer to this framework as the \textit{affine model transfer}.
As described later, we can formulate a wide variety of TL algorithms within the affine model transfer, including the widely used neural feature extractors, offset and scale HTLs \citep{kuzborskij2013stability, kuzborskij2017fast, du2017hypothesis}, and Bayesian TL \citep{Minami2021AGC}. We clarify theoretical properties of the affine model transfer such as generalization error and excess risk.

To summarize, the contributions of our study are as follows:
\vspace{-0.5\baselineskip}
{
\setlength{\leftmargini}{20pt} 
\begin{itemize}
\setlength{\topsep}{0pt} 
\setlength{\itemsep}{0pt} 
    \item 
    The affine model transfer is proposed to adapt source features to the target domain by separately estimating cross-domain shift and domain-specific factors.

    \item The affine form is derived theoretically as an optimal class based on the squared loss for the target task. 
    
    \item The affine model transfer encompasses several existing TL methods, including neural feature extraction.  It can work with any type of source model, including non-machine learning models such as physical models as well as multiple source models. 
    
    \item For each of the three functions $g_1$, $g_2$, and $g_3$, we provide an efficient and stable estimation algorithm when modeled using the kernel method.

    \item Two theoretical properties of the affine transfer model are shown: the generalization and the excess risk bound.
\end{itemize}
}
\vspace{-0.5\baselineskip}
With several applications, we compare the affine model transfer with other TL algorithms, discuss its strengths, and demonstrate the advantage of being able to estimate cross-domain shifts and domain-specific factors separately.

\section{Transfer Learning via Transformation Function}
\label{sec:method}
\subsection{Affine Model Transfer}
\label{sec:affine}
This study considers regression problems with squared loss.
We assume that the output of the target domain $y \! \in \! \mathcal{Y} \! \subset \! \mathbb{R}$ follows $y \! = \! f_t(\bmx) \! + \! \epsilon$, where $f_t \! : \! \mathcal{X} \! \rightarrow \! \mathbb{R}$ is the true model on the target domain, and the observation noise $\epsilon$ has mean zero and variance $\sigma^2$.
We are given $n$ samples $\{(\bmx_i, y_i)\}_{i=1}^n \! \in \! (\mathcal{X}\! \times \! \mathcal{Y})^n$ from the target domain and the feature representation $\bmfs(\bmx) \! \in \! \mathcal{F}_s$ from one or more source domains.
Typically, $\bmfs$ is given as a vector, including the output of the source models, observed data in the source domains, or learned features in a pre-trained model, but it can also be a non-vector feature such as a tensor, graph, or text.
Hereafter, $\bmfs$ is referred to as the source features.

\blue{
In this paper, we focus on transfer learning with variable transformations as proposed in \citep{du2017hypothesis}. 
For an illustration of the concept, consider the case where there exists a relationship between the true functions $f^*_s(\bmx) \in \mathbb{R}$ and $f^*_t(\bmx) \in \mathbb{R}$ such that $f^*_t(\bmx) = f^*_s(\bmx) + \bmx^\top \bm{\theta}^*, \bmx \in \mathbb{R}^d$ with an unknown parameter $\bm{\theta}^* \in \mathbb{R}^d$.
If $f^*_s$ is non-smooth, a large number of training samples is needed to learn $f^*_t$ directly. 
However, since the difference $f^*_t - f^*_s$ is a linear function with respect to the unknown $\bm{\theta}^*$, it can be learned with fewer samples if prior information about $f^*_s$ is available. 
For example, a target model can be obtained by adding $f_s$ to the model $g$ trained for the intermediate variable $z = y - f_s(\bmx)$.
}

The following is a slight generalization of TL procedure provided in \citep{du2017hypothesis}:
\vspace{-0.25\baselineskip}
{
\setlength{\leftmargini}{20pt} 
\begin{enumerate}
\setlength{\itemsep}{0pt} 
    \item With the source features, perform a variable transformation of the observed outputs as $z_i = \phi(y_i, \bmfs(\bmx_i))$, using the data transformation function $\phi : \mathcal{Y} \times \mathcal{F}_s \rightarrow \mathbb{R}$.

    \item Train an intermediate model $\hat{g}(\bmx)$ using the transformed sample set $\{ (\bmx_i, z_i)\}_{i=1}^n$ to predict the transformed output $z$ for any given $\bmx$.
    
    \item Obtain a target model $\hat{f}_t(\bmx)=\psi(\hat{g}(\bmx), \bmfs(\bmx))$ using the model transformation function $\psi : \mathbb{R} \times \mathcal{F}_s \rightarrow \mathcal{Y}$ that combines $\hat{g}$ and $\bmfs$ to define a predictor.
\end{enumerate}
}
\vspace{-0.25\baselineskip}
In particular, \citep{du2017hypothesis} considers the case where the model transformation function is equal to the inverse of the data transformation function. We consider a more general case that eliminates this constraint.

\blue{
The objective of step 1 is to identify a transformation $\phi$ that maps the output variable $y$ to the intermediate variable $z$, making the variable suitable for learning. 
In step 2, a predictive model for $z$ is constructed. 
Since the data is limited in many TL setups, a simple model, such as a linear model, should be used as $g$. 
Step 3 is to transform the intermediate model $g$ into a predictive model $f_t$ for the original output $y$.
}

This class of TL includes several approaches proposed in previous studies.
For example, \citep{kuzborskij2013stability, kuzborskij2017fast} proposed a learning algorithm consisting of linear data transformation and linear model transformation: $\phi \! = \! y \!  - \! \langle \bm{\theta}, \bmfs \rangle$ and $\psi \! = \! g(\bmx) \! + \! \langle \bm{\theta} , \bmfs \rangle$ with pre-defined weights $\bm{\theta}$. 
In this case, factors unexplained by the linear combination of source features are learned with $g$, and the target output is predicted additively with the common factor $\langle \bm{\theta}, \bmfs \rangle$ and the additionally learned $g$. 
In \citep{Minami2021AGC}, it is shown that a type of Bayesian TL is equivalent to using the following transformation functions; for $\mathcal{F}_s \! \subset \! \mathbb{R}$, $\phi \! = \! (y \! - \! \tau f_s)/(1 \! - \! \tau)$ and $\psi \! = \! \rho g(\bmx) \! + \! (1 \! - \! \rho) f_s$ with two varying hyperparameters $\tau \! < \! 1$ and $0 \! \leq \! \rho \! \leq \! 1$. 
This includes TL using density ratio estimation \citep{DBLP:conf/sdm/LiuF16} and neural network-based fine-tuning as special cases when the two hyperparameters belong to specific regions.

The performance of this TL strongly depends on the design of the two transformation functions $\phi$ and $\psi$.
In the sequel, we theoretically derive the optimal form of transformation functions under the squared loss scenario. 
For simplicity, we denote the transformation functions as $\phi_{f_s}\! (\cdot) \! = \! \phi(\cdot, \bmfs)$ on $\mathcal{Y}$ and $\psi_{f_s}\! (\cdot) \! = \! \psi(\cdot, \bmfs)$ on $\mathbb{R}$.
To derive the optimal class of $\phi$ and $\psi$, note first that the TL procedure described above can be formulated in population as solving two successive least square problems;
%
\[
(i) \ g^* := \arg\min_g \mathbb{E}_{p_t}\Vert \phi_{f_s}(Y) - g(\bm{X}) \Vert^2, \hspace{25pt}
(ii) \ \min_\psi \mathbb{E}_{p_t}\Vert Y - \psi_{f_s}(g^*(\bm{X})) \Vert^2.
\]
%
Since the regression function that minimizes the mean squared error is the conditional mean, the first problem is solved by $g_\phi^*(\bmx)=\mathbb{E}[\phi_{f_s}(Y)|\bm{X}=\bmx]$, which depends on $\phi$. We can thus consider the optimal transformation functions $\phi$ and $\psi$ by the following minimization: 
\begin{equation}\label{eq:optimal_trans}
\min_{\psi,\phi}  \mathbb{E}_{p_t}\Vert Y - \psi_{f_s}(g_{\phi}^*(\bm{X})) \Vert^2.
\end{equation}
It is easy to see that Eq.~\eqref{eq:optimal_trans} is equivalent to the following consistency condition:
\[
\psi_{f_s}\bigl( g^*_{\phi}(\bmx)\bigr) = \mathbb{E}_{p_t}[Y|\bm{X}=\bmx].
\]

From the above observation, we make three assumptions to derive the optimal form of $\psi$ and $\phi$:
\begin{assumption}[Differentiability]
    \label{asmp:diff}
    The data transformation function $\phi$ is differentiable with respect to the first argument.
\end{assumption}
\begin{assumption}[Invertibility]
    \label{asmp:inverse}
    The model transformation function $\psi$ is invertible with respect to the first argument, i.e., its inverse $\psi_{f_s}^{-1}$ exists.
\end{assumption}
\begin{assumption}[Consistency]
    \label{asmp:consist}
    For any distribution on the target domain $p_t(\bmx,y)$, and for all $\bmx \in \mathcal{X}$,
    \[
        \psi_{f_s}(g^*(\bmx)) = \mathbb{E}_{p_t}[Y|\bm{X}=\bmx],
    \]
    where $g^*(\bmx)=\mathbb{E}_{p_t}[\phi_{f_s}(Y)|\bm{X}=\bmx]$.
\end{assumption}
%
%
%
\blue{
Assumption \ref{asmp:inverse} is commonly used in most existing HTL settings, such as \citep{kuzborskij2013stability} and \citep{du2017hypothesis}.
It assumes a one-to-one correspondence between the predictive value $\hat{f}_t(\bmx)$ and the output of the intermediate model $\hat{g}(\bmx)$. 
If this assumption does not hold, then multiple values of $\hat{g}$ correspond to the same predicted value $\hat{f}_t$, which is unnatural. 
}
Note that Assumption \ref{asmp:consist} corresponds to the unbiased condition of \citep{du2017hypothesis}.

We now derive the properties that the optimal transformation functions must satisfy.
\begin{theorem}
    \label{thm:affine}
    Under Assumptions \ref{asmp:diff}-\ref{asmp:consist}, the transformation functions $\phi$ and $\psi$ satisfy the following two properties:
    \[
    (i) \ \ \ \psi^{-1}_{f_s} = \phi_{f_s}. \hspace{50pt}
    (ii) \ \ \ \psi_{f_s}(g) = g_1(\bmfs) + g_2(\bmfs) \cdot g,
    \]
    where $g_1$ and $g_2$ are some functions.
\end{theorem}
The proof is given in Section \ref{sec:apd-proof-mainTheorem} in Supplementary Material.
Despite not initially assuming that the two transformation functions are inverses, Theorem \ref{thm:affine} implies they must indeed be inverses.
Furthermore, the mean squared error is minimized when the data and model transformation functions are given by an affine transformation and its inverse, respectively. 
In summary, under the expected squared loss minimization with the HTL procedure, the optimal class for HTL model is expressed as follows:
\begin{equation*}
    \mathcal{H} = 
        \big\{ 
            \bmx \mapsto g_1(\bmfs) 
            + g_2(\bmfs) g_3(\bmx) 
            \ | \
            g_j \in \mathcal{G}_j, j=1,2,3
        \big\},
\end{equation*}
where $\mathcal{G}_1, \mathcal{G}_2$ and $\mathcal{G}_3$ are the arbitrarily function classes.
Here, each of $g_1$ and $g_2$ is modeled as a function of $\bmfs$ that represents common factors across the source and target domains.
$g_3$ is modeled as a function of $\bmx$, in order to capture the domain-specific factors unexplainable by the source features.

We have derived the optimal form of the transformation functions when the squared loss is employed. 
Even for general convex loss functions, (i) of Theorem \ref{thm:affine} still holds. 
However, (ii) of Theorem \ref{thm:affine} does not generally hold because the optimal transformation function depends on the loss function. 
Extensions to other losses are briefly discussed in Section \ref{sec:apd-other}, but the establishment of a complete theory is a future work.

Here, the affine transformation is found to be optimal in terms of minimizing the mean squared error. 
We can also derive the same optimal function by minimizing the upper bound of the estimation error in the HTL procedure, as discussed in Section \ref{sec:apd-bound}.

\blue{
One of key principles for the design of $g_1$, $g_2$, and $g_3$ is interpretability. 
In our model, $g_1$ and $g_2$ primarily facilitate knowledge transfer, while the estimated $g_3$ is used to gain insight on domain-specific factors. 
For instance, in order to infer cross-domain differences, we could design $g_1$ and $g_2$ by the conventional neural feature extraction, and a simple, highly interpretable model such as a linear model could be used for $g_3$. 
Thus, observing the estimated regression coefficients in $g_3$, one can statistically infer which features of $\bmx$ are related to inter-domain differences. 
This advantage of the proposed method is demonstrated in Section \ref{sec:SciDoc} and Section \ref{sec:cp}.
}

\subsection{Relation to Existing Methods}
The affine model transfer encompasses some existing TL procedures.
For example, by setting $g_1(\bmfs) = 0$ and $g_2(\bmfs) = 1$, the prediction model is estimated without using the source features, which corresponds to an ordinary direct learning, i.e., a learning scheme without transfer.
Furthermore, various kinds of HTLs can be formulated by imposing constraints on $g_1$ and $g_2$. In prior work, \citep{kuzborskij2013stability} employs a two-step procedure where the source features are combined with pre-defined weights, and then the auxiliary model is additionally learned for the residuals unexplainable by the source features. 
The affine model transfer can represent this HTL as a special case by setting $g_2=1$.
\citep{du2017hypothesis} uses the transformed output $z_i = y_i / f_s$ with the output value $f_s \in \mathbb{R}$ of a source model, and this cross-domain shift is then regressed onto $\bmx$ using a target dataset. This HTL corresponds to $g_1=0$ and $g_2 = f_s$.

When a pre-trained source model is provided as a neural network, TL is usually performed with the intermediate layer as input to the model in the target domain. 
This is called a feature extractor or frozen featurizer and has been experimentally and theoretically proven to have strong transfer capability as the de facto standard for TL \citep{yosinski2014transferable, tripuraneni2020theory}. 
The affine model transfer encompasses the neural feature extraction as a special subclass, which is equivalent to setting $g_2(\bmfs) g_3(\bmx) \! = \! 0$. 
A performance comparison of the affine model transfer with the neural feature extraction is presented in Section \ref{sec:exp} and Section \ref{sec:exp-nn}.
\blue{The relationships between these existing methods and the affine model transfer are illustrated in Figure \ref{fig:methods} and Figure \ref{fig:relatedWork}}

The affine model transfer can also be interpreted as generalizing the feature extraction by adding a product term $g_2(\bmfs) g_3(\bmx)$. 
This additional term allows for the inclusion of unknown factors in the transferred model that are unexplainable by source features alone. 
Furthermore, this encourages the avoidance of a negative transfer, a phenomenon where prior learning experiences interfere with training in a new task.
The usual TL based only on $\bmfs$ attempts to explain and predict the data generation process in the target domain using only the source features. 
However, in the presence of domain-specific factors, a negative transfer can occur owing to a lack of descriptive power.
The additional term compensates for this shortcoming. 
The comparison of behavior for the case with the non-relative source features is described in Section \ref{sec:exp-sarcos}.

The affine model transfer can be naturally expressed as an architecture of network networks. This architecture, called affine coupling layers, is widely used for invertible neural networks in flow-based generative modeling \citep{dinh2014nice, Dinh2017DensityEU}.
Neural networks based on affine coupling layers have been proven to have universal approximation ability \citep{teshima2020coupling}.
This implies that the affine transfer model has the potential to represent a wide range of function classes, despite its simple architecture based on the affine coupling of three functions.
\begin{figure}[t]
  \begin{minipage}[b]{0.50\linewidth}
        \centering
      \begin{minipage}[b]{0.3\columnwidth}
        \centering
        \includegraphics[width=\columnwidth]{./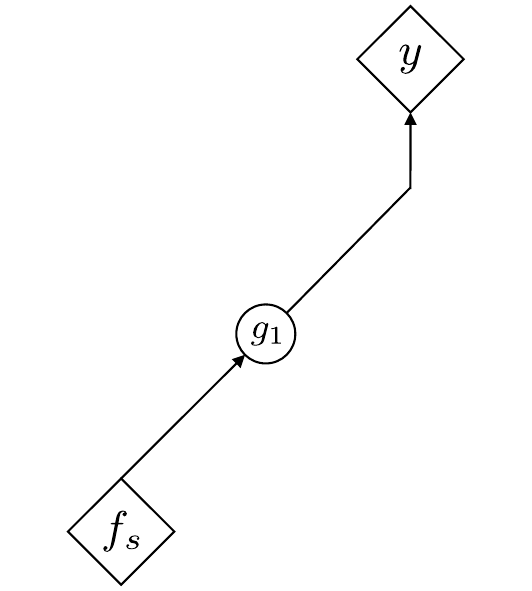}
      \end{minipage}
      \begin{minipage}[b]{0.3\columnwidth}
        \centering
        \includegraphics[width=\columnwidth]{./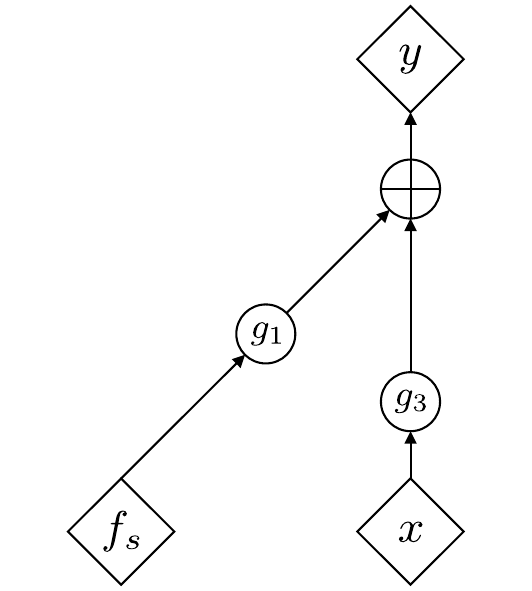}
      \end{minipage}
      \begin{minipage}[b]{0.3\columnwidth}
        \centering
        \includegraphics[width=\columnwidth]{./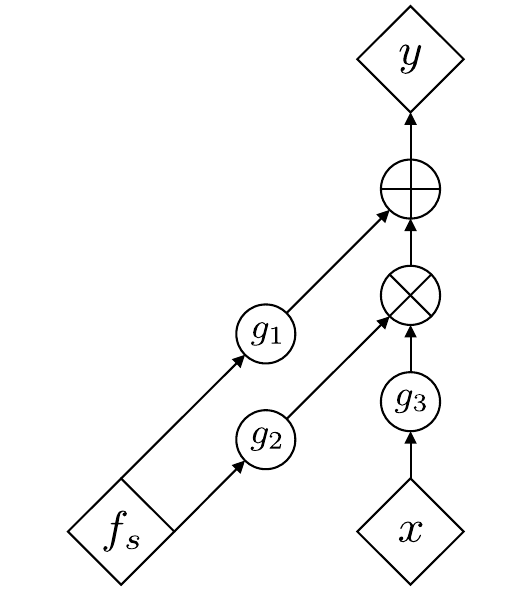}
      \end{minipage}
      \caption{
          Architectures of (a) feature extraction, (b) HTL in \citep{kuzborskij2013stability}, and (c) affine model transfer.
      }
      \label{fig:methods}
  \end{minipage}\hfil
  \begin{minipage}[b]{0.45\linewidth}
  \begin{algorithm}[H]
    \caption{Block relaxation algorithm \citep{zhou2013tensor}.}
    \label{alg:block}
    \begin{algorithmic}
        \STATE \textbf{Initialize:} $\bma_0$, $\bmb_0$ $\neq$ $\bm{0}$,  $\bmc_0$ $\neq$ $\bm{0}$
        \REPEAT
            \STATE $\bma_{t+1} = \text{arg~min}_{\bma} \, F(\bma, \, \, \bmb_{t}, \, \bmc_{t})$
            \STATE $\bmb_{t+1} = \text{arg~min}_{\bmb} \, F(\bma_{t+1}, \, \bmb, \ \bmc_t)$
            \STATE $\bmc_{t+1} = \text{arg~min}_{\bmc} \, F(\bma_{t+1}, \, \bmb_{t+1}, \, \bmc)$
        \UNTIL{convergence}
    \end{algorithmic}
\end{algorithm}
  \end{minipage}
\end{figure}

\section{Modeling and Estimation}
\label{sec:modeling}
In this section, we focus on using kernel methods for the affine transfer model and provide the estimation algorithm.
Let $\mathcal{H}_1, \mathcal{H}_2$ and $\mathcal{H}_3$ be reproducing kernel Hilbert spaces (RKHSs) with positive-definite kernels $k_1, k_2$ and $k_3$, which define the feature mappings $\Phi_1 \! : \! \mathcal{F}_s \! \rightarrow \! \mathcal{H}_1, \Phi_2 \! : \! \mathcal{F}_s \! \rightarrow \! \mathcal{H}_2$ and $\Phi_3 \! : \! \mathcal{X} \! \rightarrow \! \mathcal{H}_3$,  respectively.
Denote $\Phi_{1,i} \! = \! \Phi_1(\bmfs(\bmx_i)), \Phi_{2,i}\!  = \! \Phi_2(\bmfs(\bmx_i)), \Phi_{3,i} \! = \! \Phi_3(\bmx_i)$.
For the proposed model class, the $\ell_2$-regularized empirical risk with the squared loss is given as follows:
\begin{equation}
    \label{eq:obj}
        F_{\alpha, \beta, \gamma}
        = \frac{1}{n}\sum_{i=1}^n 
        \big\{ 
            y_i 
            \! - \! \langle \alpha, \Phi_{1,i} \rangle
            \! - \! \langle \beta, \Phi_{2,i} \rangle
              \langle \gamma, \Phi_{3,i} \rangle
        \big\}^2
        + \lambda_1 \| \alpha \|_{\mathcal{H}_1}^2 
        + \lambda_2 \| \beta \|_{\mathcal{H}_2}^2
        + \lambda_3 \| \gamma \|_{\mathcal{H}_3}^2,
\end{equation}
where $\lambda_1, \! \lambda_2, \! \lambda_3 \! \ge \! 0$ are hyperparameters for the regularization.
According to the representer theorem, the minimizer of $F_{\alpha, \beta, \gamma}$ with respect to the parameters $\alpha \! \in \! \mathcal{H}_1$, $\beta \! \in \! \mathcal{H}_2$, and $\gamma \! \in \! \mathcal{H}_3$ reduces to
%
$
    \alpha = \sum_{i=1}^n a_i \Phi_{1,i}, 
    \beta = \sum_{i=1}^n b_i \Phi_{2,i}, 
    \gamma = \sum_{i=1}^n c_i \Phi_{3,i},
$
%
with the $n$-dimensional unknown parameter vectors $\bma, \bmb, \bmc \in \mathbb{R}^n$.
Substituting this expression into Eq.~\eqref{eq:obj}, we obtain the objective function as
\begin{equation}
    \label{eq:obj_dual}
    F_{\alpha, \beta, \gamma}
    = 
    \frac{1}{n} \| y - K_1 \bma - (K_2\bmb) \circ (K_3\bmc) \|_2^2 
    + \lambda_1 \bma^{\top} K_1 \bma + \lambda_2 \bmb^{\top} K_2 \bmb + \lambda_3 \bmc^{\top} K_3 \bmc 
    \eqqcolon
    F(\bma,\bmb,\bmc).
\end{equation}
Here, the symbol $\circ$ denotes the Hadamard product.
$K_I$ is the Gram matrix associated with the kernel $k_I$ for $I \in \{1,2,3\}$. $k_I^{(i)}=[k_I(\bmx_i, \bmx_1) \cdots k_I(\bmx_i, \bmx_n)]^\top$ denotes the $i$-th column of the Gram matrix. The $n \times n$ matrix $M^{(i)}$ is given by the tensor product $M^{(i)} = k_2^{(i)} \otimes k_3^{(i)}$ of $k_2^{(i)}$ and $k_3^{(i)}$.

Because the model is linear with respect to parameter $a$ and bilinear for $b$ and $c$, the optimization of Eq.~\eqref{eq:obj_dual} can be solved using well-established techniques for the low-rank tensor regression.
In this study, we use the block relaxation algorithm \citep{zhou2013tensor} as described in Algorithm \ref{alg:block}. 
It updates $a, b$, and $c$ by repeatedly fixing two of the three parameters and minimizing the objective function for the remaining one. 
Fixing two parameters, the resulting subproblem can be solved analytically because the objective function is expressed in a quadratic form for the remaining parameter. 

Algorithm \ref{alg:block} can be regarded as repeating the HTL procedure introduced in Section \ref{sec:affine}; alternately estimates the parameters $(\bma,\bmb)$ of the transformation function and the parameters $\bmc$ of the model for the given transformed data $\{ (\bmx_i, z_i)\}_{i=1}^n$. 
\blue{
The function $F$ in Algorithm \ref{alg:block} is not jointly convex in general. However, when employing methods like kernel methods or generalized linear models, and fixing two parameters, $F$ exhibits convexity with respect to the remaining parameter. According to \citep{zhou2013tensor}, when each sub-minimization problem is convex, Algorithm \ref{alg:block} is guaranteed to converge to a stationary point. Furthermore, \citep{zhou2013tensor} showed that consistency and asymptotic normality hold for the alternating minimization algorithm.
}
%
%

\section{Theoretical Results}
\label{sec:theory}
In this section, we present two theoretical properties, the generalization bound and excess risk bound.

Let $(\mathcal{Z},P)$ be an arbitrary probability space, and $\{z_i\}_{i=1}^n$ be independent random variables
with distribution $P$.
For a function $f \! :\! \mathcal{Z} \! \rightarrow \! \mathbb{R}$, let the expectation of $f$ with respect to $P$ and its empirical counterpart denote respectively by  
%
$
    P f \! = \! \mathbb{E}_P f(z), 
    P_n f \! = \! \frac{1}{n} \sum_{i=1}^n f(z_i).
$
%
We use  a non-negative loss $\ell (y, y')$  such that it is bounded from above by $L > 0$ and  for any fixed $y' \! \in \! \mathcal{Y}$, $y \! \mapsto \! \ell(y, y') $ is $\mu_\ell$-Lipschitz for some $\mu_\ell > 0$.

Recall that the function class proposed in this work is
\begin{equation*}
    \mathcal{H} = 
        \big\{ 
            x \mapsto g_1(\bmfs) 
            + g_2(\bmfs) g_3(\bmx) 
            \ | \
             g_j \in \mathcal{G}_, j=1,2,3 
        \big\}.
\end{equation*}
In particular, the following discussion in this section assumes that $g_1, g_2$, and $g_3$ are represented by linear functions on the RKHSs.

\subsection{Generalization Bound}
The optimization problem is expressed as follows:
\begin{align}
\label{eq:opt}
    \min_{\alpha, \beta, \gamma} P_n \ell \big( 
        y, \,
        \langle \alpha, \Phi_1 \rangle_{\mathcal{H}_1} 
        + \langle \beta, \Phi_2 \rangle_{\mathcal{H}_2} \langle \gamma , \Phi_3 \rangle_{\mathcal{H}_3} 
    \big)
    + \lambda_\alpha \| \alpha \|_{\mathcal{H}_1}^2 + \lambda_\beta \| \beta \|_{\mathcal{H}_2}^2 + \lambda_\gamma \| \gamma \|_{\mathcal{H}_3}^2,
\end{align}
where $\Phi_1 = \Phi_1(\bmfs(\bmx)), \Phi_2 = \Phi_2(\bmfs(\bmx))$ and $\Phi_3 = \Phi_3(\bmx)$ denote the feature maps.
Without loss of generality, it is assumed that $\| \Phi_i \|_{\mathcal{H}_i}^2 \leq 1$ ($i=1,2,3$) and $\lambda_\alpha, \lambda_\beta, \lambda_\gamma > 0$.
Hereafter, we will omit the suffixes $\mathcal{H}_i$ in the norms if there is no confusion.

Let $(\hat{\alpha}, \hat{\beta}, \hat{\gamma})$ be a solution of Eq.~\eqref{eq:opt}, and denote the corresponding function in $\mathcal{H}$ as $\hat{h}$.
For any $\alpha$, we have
\begin{align}
        \lambda_\alpha \| \hat{\alpha} \|^2
        \! \leq \!
        P_n \ell (y, \! 
                  \langle \hat{\alpha}, \! \Phi_1 \rangle 
                  \! + \! \langle \hat{\beta}, \! \Phi_2 \rangle \! 
                    \langle \hat{\gamma}, \! \Phi_3 \rangle \! )
        \! + \! \lambda_\alpha \|\hat{\alpha}\|^2
        \! \! + \! \lambda_\beta \|\hat{\beta}\|^2
        \! \! + \! \lambda_\gamma \|\hat{\gamma}\|^2 \nonumber
        \! \leq \!
        P_n \ell (y, \! \langle \alpha, \! \Phi_1 \rangle \! ) \! + \! \lambda_\alpha\| \alpha \|^2 \! , \label{eq:norm-bound}
\end{align}
where we use the fact that $\ell(\cdot, \cdot)$ and $\| \cdot \|$ are non-negative, and $(\hat{\alpha}, \hat{\beta}, \hat{\gamma})$ is the minimizer of  Eq.~\eqref{eq:opt}.
Denoting 
$
    \hat{R}_s = \inf_\alpha \{ 
        P_n \ell(y, \langle \alpha, \Phi_1 \rangle) + \lambda_\alpha \| \alpha \|^2
    \},
$
we obtain
$
    \| \hat{\alpha} \|^2 \leq \lambda_\alpha^{-1} \hat{R}_s.
$
Because the same inequality 
holds for $\lambda_\beta \| \hat{\beta} \|^2, \lambda_\gamma \| \hat{\gamma} \|^2$ and $P_n \ell(y, \hat{h})$, we have
$
    \| \hat{\beta} \|^2 \leq \lambda_\beta^{-1} \hat{R}_s,
    \| \hat{\gamma} \|^2 \leq \lambda_\gamma^{-1} \hat{R}_s
$
and
$
    P_n \ell(y, \hat{h}) \leq \hat{R}_s.
$
Moreover, we have 
$
    P \ell (y, \hat{h}) 
    = \mathbb{E} P_n \ell(y, \hat{h}) 
    \leq \mathbb{E} \hat{R}_s.
$
Therefore, it is sufficient to consider the following hypothesis class $\tilde{\mathcal{H}}$ and loss class $\mathcal{L}$:
\begin{align*}
    &\tilde{\mathcal{H}} = 
        \! \big\{ 
            x \! \mapsto \! \langle \alpha, \! \Phi_1 \rangle
            \! + \! \langle \beta, \! \Phi_2 \rangle \! \langle \gamma , \! \Phi_3 \rangle 
            \big| \
            \! \| \alpha \|^2 \! \leq \! \lambda_\alpha^{-1} \! \hat{R}_s, \ 
            \! \| \beta  \|^2 \! \leq \! \lambda_\beta^{-1}  \! \hat{R}_s, \ 
            \! \| \gamma \|^2 \! \leq \! \lambda_\gamma^{-1} \! \hat{R}_s, 
            \! P \ell (y, h)  \! \leq \! \mathbb{E} \hat{R}_s
        \big\},\\
    &\mathcal{L} 
        = \big\{ 
            (x,y) \mapsto \ell (y,h) 
            \ \ | \ \ 
            h \in \tilde{\mathcal{H}}
        \big\}.
\end{align*}
%

Here, we show the generalization bound of the proposed model class.
The following theorem is based on \citep{kuzborskij2017fast}, showing that the difference between the generalization error and empirical error can be bounded using the magnitude of the relevance of the domains.
\begin{theorem}
\label{thm:gen}
There exists a constant $C$ depending only on $\lambda_\alpha, \lambda_\beta, \lambda_\gamma$ and $L$ such that, for any $\eta > 0$ and $h \in \tilde{\mathcal{H}}$, with probability at least $1-e^{-\eta}$,
\begin{align*}
    P \ell (y, \, h) - P_n \ell (y, \, h)
    =
        \ {\tilde O} \bigg( 
            \bigg( 
                \sqrt{\frac{R_s}{n}}
                +  \frac{\mu_\ell^2 C^2 + \sqrt{\eta}}{n}
            \bigg) \big( 
                C + \sqrt{\eta}
            \big)
            + \frac{C^2 + \eta}{n}
        \bigg),
\end{align*}
where $R_s=\inf_{\alpha} \{ P \ell (y, \langle \alpha, \Phi_1 \rangle) + \lambda_\alpha \| \alpha \|^2 \}$.
\end{theorem}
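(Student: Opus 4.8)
The plan is to bound the supremum $\sup_{h\in\mathcal H}\big(P\ell(y,h)-P_n\ell(y,h)\big)$ by a standard local-Rademacher / Talagrand-type argument, exactly as in \citet{kuzborskij2017fast}, and then track how the three-factor structure $h = \langle\alpha,\Phi_1\rangle + \langle\beta,\Phi_2\rangle\langle\gamma,\Phi_3\rangle$ enters the complexity term. First I would invoke Talagrand's concentration inequality for the empirical process indexed by the loss class $\mathcal L$: since $\ell$ is bounded by $L$ and the variance of each $\ell(y,h)$ is controlled by $P\ell(y,h)^2 \le L\,P\ell(y,h) \le L\,\mathbb E[\hat R_s]$ (using the constraint $P\ell(y,h)\le\mathbb E[\hat R_s]$ built into $\mathcal H$), one gets, with probability $\ge 1-e^{-\eta}$,
\[
\sup_{h\in\mathcal H}\big(P\ell - P_n\ell\big)
\;\lesssim\; \mathbb E\,\mathfrak R_n(\mathcal L) + \sqrt{\frac{L\,\mathbb E[\hat R_s]\,\eta}{n}} + \frac{L\eta}{n},
\]
where $\mathfrak R_n(\mathcal L)$ is the empirical Rademacher complexity of $\mathcal L$. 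By the $\mu_\ell$-Lipschitz property of $\ell$ in its first argument and the contraction lemma, $\mathfrak R_n(\mathcal L)\le \mu_\ell\,\mathfrak R_n(\mathcal H)$.

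The next step is to control $\mathfrak R_n(\mathcal H)$. The additive part $\{x\mapsto\langle\alpha,\Phi_1\rangle : \|\alpha\|^2\le\lambda_\alpha^{-1}\hat R_s\}$ is a ball in the RKHS $\mathcal H_1$ with $\|\Phi_1\|\le 1$, so its Rademacher complexity is $O\big(\sqrt{\hat R_s/(\lambda_\alpha n)}\big)$; replacing $\hat R_s$ by $R_s$ costs only lower-order terms after taking expectations, which is where the $\sqrt{R_s/n}$ in the statement comes from. For the product term I would bound $|\langle\beta,\Phi_2\rangle\langle\gamma,\Phi_3\rangle|\le \|\beta\|\|\gamma\| \le \lambda_\beta^{-1/2}\lambda_\gamma^{-1/2}\hat R_s =: C'\hat R_s$ pointwise, so this class is uniformly bounded; its Rademacher complexity can be handled either by a covering-number bound for the bilinear class (covering $\mathcal H_2$- and $\mathcal H_3$-balls separately and using Lipschitzness of the product map on the bounded domain) or by the crude bound $\mathfrak R_n \le C'\hat R_s\cdot O(1/\sqrt n)$ refined with the boundedness to an $O(1/n)$ "fast-rate" contribution after the peeling/self-bounding step. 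Collecting the RKHS-ball radii into the single constant $C$ (depending on $\lambda_\alpha,\lambda_\beta,\lambda_\gamma,L$) and substituting $\hat R_s \le$ its expectation plus deviations yields the two product groupings $(\sqrt{R_s/n}+ (\mu_\ell^2C^2+\sqrt\eta)/n)(\sqrt L C + \sqrt{L\eta})$ and the residual $(C^2L+L\eta)/n$, where the $\mu_\ell^2C^2/n$ piece tracks the Lipschitz contraction applied to the fast-rate part and $\sqrt\eta/n$ the concentration slack.

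The main obstacle is the bilinear term $\langle\beta,\Phi_2\rangle\langle\gamma,\Phi_3\rangle$: it is \emph{not} a linear functional of a single RKHS element, so the clean "norm-ball Rademacher complexity $=$ radius$/\sqrt n$" identity does not apply directly, and one must either (a) use a covering-number argument exploiting that the product is Lipschitz on the compact set $\|\beta\|\le\lambda_\beta^{-1/2}\sqrt{\hat R_s},\ \|\gamma\|\le\lambda_\gamma^{-1/2}\sqrt{\hat R_s}$, or (b) linearize via the tensor feature map $\Phi_2\otimes\Phi_3$ in $\mathcal H_2\otimes\mathcal H_3$ (noting $\|\Phi_2\otimes\Phi_3\|\le 1$) and observe that the admissible parameters form a rank-one subset of the product-RKHS ball, whose complexity is bounded by that of the full ball. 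I would take route (b): it keeps everything inside the RKHS-ball framework, makes the constant $C$ transparent as a function of $\lambda_\alpha,\lambda_\beta,\lambda_\gamma$, and lets the remainder of the argument—peeling on $P\ell(y,h)$, self-bounding of the variance, and the final union over $\eta$—proceed verbatim as in \citet{kuzborskij2017fast}. The only care needed is that the data-dependent radii depend on $\hat R_s$, which is itself random; this is resolved by the already-established deterministic bound $\|\hat\alpha\|^2\le\lambda_\alpha^{-1}\hat R_s$ etc.\ together with $P\ell(y,\hat h)\le\mathbb E[\hat R_s]$, so that the final bound is stated in terms of the population quantity $R_s$.
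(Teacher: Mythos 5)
Your proposal is correct and follows essentially the same route as the paper: the bilinear term is handled exactly as you describe in your route (b), by passing to the tensor feature map $\Phi_2\otimes\Phi_3$ and bounding $\|\beta\otimes\gamma\|^2\le \lambda_\beta^{-1}\lambda_\gamma^{-1}\hat R_s^2$ so that the standard RKHS-ball Rademacher bound applies, giving $\hat{\mathfrak R}_S(\mathcal H)\le C\sqrt{\hat R_s/n}$ with $C=\lambda_\alpha^{-1/2}+\sqrt{L/(\lambda_\beta\lambda_\gamma)}$, after which contraction, the local Rademacher fixed-point $r^*=\mu_\ell^2C^2/n$ (Corollary 3.5 of Bartlett et al.), and the optimistic-rate theorem of Srebro et al.\ assemble the stated bound. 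The only detail your sketch leaves implicit is that the paper obtains the final $P\ell-P_n\ell$ form by first bounding $P_n\ell(y,\hat h)\le C'(R_s+\mu_\ell^2C^2/n+\eta/n)$ and then invoking the smoothness-based generalization theorem, but this is exactly the "peeling/self-bounding" step you allude to.
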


Because $\Phi_1$ is the feature map from the source feature space $\mathcal{F}_s$ into the RKHS $\mathcal{H}_1$, $R_s$ corresponds to the true risk of training in the target domain using only the source features $f_s$.
If this is sufficiently small, e.g., $R_s = \tilde{O}(n^{-1/2})$, the convergence rate indicated by Theorem \ref{thm:gen} becomes $n^{-1}$, which is an improvement over the naive convergence rate $n^{-1/2}$.
This means that if the source task yields feature representations strongly related to the target domain, training in the target domain is accelerated.
Theorem \ref{thm:gen} measures this cross-domain relation using the metric $R_s$.

Theorem \ref{thm:gen} is based on Theorem 11 of \citep{kuzborskij2017fast} in which the function class $g_1 \! +\!  g_3$ is considered.
Our work differs in the following two points: the source features are modeled not only additively but also multiplicatively, i.e., we consider the function class $g_1 \! +\!  g_2 \! \cdot \! g_3$, and we also consider the estimation of the parameters for the source feature combination, i.e., the parameters of the functions $g_1$ and $g_2$.
In particular, the latter affects the resulting rate.
With fixed the source combination parameters, the resulting rate improves only up to $n^{\! -3/4}$.
The details are discussed in Section \ref{sec:apd-proof-gen}.

\subsection{Excess Risk Bound}
\label{sec:excess}
Here, we analyze the excess risk, which is the difference between the risk of the estimated function and the smallest possible risk within the function class.

Recall that we consider the functions $g_1, g_2$ and $g_3$ to be elements of the RKHSs $\mathcal{H}_1, \mathcal{H}_2$ and $\mathcal{H}_3$ with kernels $k_1, k_2$ and $k_3$, respectively.
Define the kernel $k^{(1)} \! = \! k_1$, $k^{(2)} \! = \! k_2 \! \cdot \! k_3$ and $k \! = \! k^{(1)} \! + \! k^{(2)}$.
Let $\mathcal{H}^{(1)}, \mathcal{H}^{(2)}$ and $\mathcal{H}$ be the RKHS with $k^{(1)}, k^{(2)}$ and $k$ respectively.
For $m \! = \! 1, 2$, consider the normalized Gram matrix $K^{(m)} \! = \! \frac{1}{n} (k^{(m)}(\bmx_i, \bmx_j))_{i,j=1,\dots,n}$ and its eigenvalues $(\hat{\lambda}^{(m)}_i)_{i=1}^n$ , arranged in a nonincreasing order.

We make the following additional assumptions: 
\begin{assumption}
    \label{asmp:reg}
    There exists $h^* \in \mathcal{H}$ and $h^{(m)*} \in \mathcal{H}^{(m)}$ ($m=1,2$) such that $P (y-h^*(\bmx))^2 = \inf_{h \in \mathcal{H}} P (y-h(\bmx))^2$ and $P (y-h^{(m)*}(\bmx))^2 = \inf_{h \in \mathcal{H}^{(m)}} P (y-h(\bmx))^2$.
\end{assumption}
\begin{assumption}
    \label{asmp:decay}
    For $m=1,2$, there exist $a_m > 0$ and $s_m \in (0,1)$ such that $\hat{\lambda}_j^{(m)} \leq a_m j^{-1/s_m}$.
\end{assumption}

Assumption \ref{asmp:reg} is used in \citep{Bartlett2005LocalRC} and is not overly restrictive as it holds for many regularization algorithms and convex, uniformly bounded function classes.
In the analysis of kernel methods, Assumption \ref{asmp:decay} is standard \citep{Steinwart2008SupportVM}, and is known to be equivalent to the classical covering or entropy number assumption \citep{Steinwart2009OptimalRF}. 
The inverse decay rate $s_m$ measures the complexity of the RKHS, with larger values corresponding to more complex function spaces.

\begin{theorem}
\label{thm:excess-rate}
Let $\hat{h}$ be any element of $\mathcal{H}$ satisfying $P_n (y - \hat{h}(\bmx))^2 = \inf_{h \in \mathcal{H}} P_n (y - h(\bmx))^2$.
Under Assumptions \ref{asmp:reg} and \ref{asmp:decay}, for any $\eta > 0$, with probability at least $1-5e^{-\eta}$,
\begin{equation*}
    P(y - \hat{h}(\bmx))^2 - P(y - h^*(\bmx))^2 = O \Bigl( \,
        n^{-\frac{1}{1+\max{ \{ s_1, s_2 \} }}}
    \Bigr).
\end{equation*}
\end{theorem}

Theorem \ref{thm:excess-rate} suggests that the convergence rate of the excess risk depends on the decay rates of the eigenvalues of two Gram matrices $K^{(1)}$ and $K^{(2)}$.
The inverse decay rate $s_1$ of the eigenvalues of $K^{(1)} \! = \! \frac{1}{n} (k_1(\bmfs(\bmx_i), \bmfs(\bmx_j)))_{i,j=1,\dots,n}$ represents the learning efficiency using only the source features, while $s_2$ is the inverse decay rate of the eigenvalues of the Hadamard product of $K_2 \! = \! \frac{1}{n} (k_2(\bmfs(\bmx_i), \bmfs(\bmx_j))_{i,j=1,\dots,n}$ and $K_3 \! = \! \frac{1}{n} (k_3(\bmx_i, \bmx_j))_{i,j=1,\dots,n}$, which addresses 
the effect of combining the source features and original input.
While rigorous discussion on the relationship between the spectra of two Gram matrices $K_2, K_3$ and their Hadamard product $K_2 \! \circ \! K_3$ seems difficult, 
intuitively, the smaller the overlap between the space spanned by the source features and by the original input, the smaller the overlap between $\mathcal{H}_2$ and $\mathcal{H}_3$.
In other words, as the source features and original input have different information, the tensor product $\mathcal{H}_2 \! \otimes \! \mathcal{H}_3$ will be more complex, and the decay rate $1/s_2$ is expected to be larger.
In Section \ref{sec:apd-eigen}, we experimentally confirm this speculation.

\section{Experimental Results}
\label{sec:exp}
We demonstrate the potential of the affine model transfer through two case studies: (i) the prediction of feed-forward torque at seven joints of the robot arm \citep{williams2006gaussian}, and (ii) the prediction of review scores and decisions of scientific papers \citep{Singh2022SciRepEvalAM}.
The experimental details are presented in Section \ref{sec:apd-exp-detail}. Additionally, two case studies in materials science are presented in Section \ref{sec:apd-addExp}.
The Python code is available at \url{https://github.com/mshunya/AffineTL}.

\subsection{Kinematics of the Robot Arm}
\label{sec:exp-sarcos}
\newcommand{\emp}[1]{#1}
\newcommand{\empbest}[1]{\textbf{#1}}
\newcommand{\ttest}[1]{*#1}
\begin{table*}[!t]
    \centering
	\caption{
        Performance on predicting the torque values at the first and seventh joints of the SARCOS robot arm. 
        The mean and standard deviation of the RMSE are reported for varying numbers of training samples. 
        For each task and $n$, the case with the smallest mean RMSE is indicated by the bold type. An asterisk indicates a case where the RMSEs of 20 independent experiments were significantly improved over {\bf Direct} at the 1\% significance level, according to the Welch’s t-test.
        \blue{$d$ represent the dimension of the original input $\bmx$ (i.e., $d=21$). }
 }
 \vspace{0.5\baselineskip}
    \resizebox{\columnwidth}{!}{%
    \begin{tabular}{ccrrrrrrr}
	\toprule
	\multicolumn{1}{c}{\multirow{3}{*}{Target}}
    & \multicolumn{1}{c}{\multirow{3}{*}{Model}} 
	& \multicolumn{7}{c}{Number of training samples}\\
	& &\multicolumn{3}{c}{$n<d$} 
	& \multicolumn{1}{c}{$n \approx d$} 
	& \multicolumn{3}{c}{$n>d$} \\
	\addlinespace[-0.5mm] 
	\cmidrule(lr){3-5}\cmidrule(lr){6-6}\cmidrule(lr){7-9}
	\addlinespace[-0.5mm] 
	& & \multicolumn{1}{c}{5} 
      & \multicolumn{1}{c}{10}
      & \multicolumn{1}{c}{15}
      & \multicolumn{1}{c}{20} 
	 & \multicolumn{1}{c}{30} 
	 & \multicolumn{1}{c}{40} 
	 & \multicolumn{1}{c}{50}
	 \\
	\midrule
	\multirow{11}{*}{Torque 1} 
    & Direct 
        & 21.3 $\pm$ 2.04 
        & 18.9 $\pm$ 2.11 
        & \empbest{17.4 $\pm$ 1.79} 
        & 15.8 $\pm$ 1.70 
        & 13.7 $\pm$ 1.26
        & 12.2 $\pm$ 1.61 
        & 10.8 $\pm$ 1.23 \\
    & Only source 
        & 24.0 $\pm$ 6.37 
        & 22.3 $\pm$ 3.10 
        & 21.0 $\pm$ 2.49 
        & 19.7 $\pm$ 1.34
        & 18.5 $\pm$ 1.92 
        & 17.6 $\pm$ 1.59 
        & 17.3 $\pm$ 1.31 \\
    & Augmented 
        & 21.8 $\pm$ 2.88 
        & 19.2 $\pm$ 1.37 
        & 17.8 $\pm$ 2.30 
        & \empbest{15.7 $\pm$ 1.53} 
        & \empbest{13.3 $\pm$ 1.19} 
        & \empbest{11.9 $\pm$ 1.37} 
        & \empbest{10.7 $\pm$ 0.954} \\
    & HTL-offset 
        & 23.7 $\pm$ 6.50 
        & 21.2 $\pm$ 3.85 
        & 19.8 $\pm$ 3.23 
        & 17.8 $\pm$ 2.35 
        & 16.2 $\pm$ 3.31 
        & 15.0 $\pm$ 3.16
        & 15.1 $\pm$ 2.76 \\
    & HTL-scale 
        & 23.3 $\pm$ 4.47 
        & 22.1 $\pm$ 5.31 
        & 20.4 $\pm$ 3.84 
        & 18.5 $\pm$ 2.72 
        & 17.6 $\pm$ 2.41 
        & 16.9 $\pm$ 2.10 
        & 16.7 $\pm$ 1.74 \\
    \cdashlinelr{2-9}
    & AffineTL-full 
        & \empbest{21.2 $\pm$ 2.23} 
        & \empbest{18.8 $\pm$ 1.31} 
        & 18.6 $\pm$ 2.83 
        & 15.9 $\pm$ 1.65 
        & 13.7 $\pm$ 1.53 
        & 12.3 $\pm$ 1.45 
        & 11.1 $\pm$ 1.12 \\
    & AffineTL-const 
        & 21.2 $\pm$ 2.21
        & 18.8 $\pm$ 1.44 
        & 17.7 $\pm$ 2.44
        & 15.9 $\pm$ 1.58 
        & 13.4 $\pm$ 1.15 
        & 12.2 $\pm$ 1.54
        & 10.9 $\pm$ 1.02 \\
    \cmidrule{2-9}
    & Fine-tune 
        & 25.0 $\pm$ 7.11 
        & 20.5 $\pm$ 3.33
        & 18.6 $\pm$ 2.10
        & 17.6 $\pm$ 2.55 
        & 14.1 $\pm$ 1.39 
        & 12.6 $\pm$ 1.13 
        & 11.1 $\pm$ 1.03 \\
    & MAML 
        & 29.8 $\pm$ 12.3 
        & 22.5 $\pm$ 3.21 
        & 20.8 $\pm$ 2.12 
        & 20.3 $\pm$ 3.14 
        & 16.7 $\pm$ 3.00
        & 14.4 $\pm$ 1.85
        & 13.4 $\pm$ 1.19 \\
    & $L^2$-SP 
        & 24.9 $\pm$ 7.09
        & 20.5 $\pm$ 3.30
        & 18.8 $\pm$ 2.04 
        & 18.0 $\pm$ 2.45 
        & 14.5 $\pm$ 1.36 
        & 13.0 $\pm$ 1.13 
        & 11.6 $\pm$ 0.983 \\
    & PAC-Net 
        & 25.2 $\pm$ 8.68 
        & 22.7 $\pm$ 5.60
        & 20.7 $\pm$ 2.65 
        & 20.1 $\pm$ 2.16 
        & 18.5 $\pm$ 2.77 
        & 17.6 $\pm$ 1.85 
        & 17.1 $\pm$ 1.38 \\

	\midrule
	\addlinespace[1mm]

	\multirow{11}{*}{Torque 7} 
    & Direct 
        & 2.66 $\pm$ 0.307 
        & 2.13 $\pm$ 0.420 
        & 1.85 $\pm$ 0.418 
        & 1.54 $\pm$ 0.353 
        & 1.32 $\pm$ 0.200 
        & 1.18 $\pm$ 0.138 
        & 1.05 $\pm$ 0.111 \\
    & Only source 
        & 2.31 $\pm$ 0.618 
        & \ttest{1.73 $\pm$ 0.560} 
        & \ttest{1.49 $\pm$ 0.513} 
        & \ttest{1.22 $\pm$ 0.269} 
        & \ttest{1.09 $\pm$ 0.232} 
        & \ttest{0.969 $\pm$ 0.144} 
        & \ttest{0.927 $\pm$ 0.170} \\
    & Augmented 
        & 2.47 $\pm$ 0.406 
        & 1.90 $\pm$ 0.515 
        & 1.67 $\pm$ 0.552 
        & \ttest{1.31 $\pm$ 0.214} 
        & 1.16 $\pm$ 0.225 
        & \ttest{0.984 $\pm$ 0.149} 
        & \ttest{0.897 $\pm$ 0.138} \\
    & HTL-offset 
        & 2.29 $\pm$ 0.621 
        & \ttest{\empbest{1.69 $\pm$ 0.507}} 
        & \ttest{1.49 $\pm$ 0.513} 
        & \ttest{1.22 $\pm$ 0.269} 
        & \ttest{1.09 $\pm$ 0.233} 
        & \ttest{0.969 $\pm$ 0.144}
        & \ttest{0.925 $\pm$ 0.171} \\
    & HTL-scale 
        & 2.32 $\pm$ 0.599 
        & \ttest{1.71 $\pm$ 0.516}
        & 1.51 $\pm$ 0.513 
        & \ttest{1.24 $\pm$ 0.271}
        & \ttest{1.12 $\pm$ 0.234} 
        & \ttest{0.999 $\pm$ 0.175}
        & 0.948 $\pm$ 0.172 \\
    \cdashlinelr{2-9}
    & AffineTL-full 
        & \ttest{\empbest{2.23 $\pm$ 0.554}} 
        & \ttest{1.71 $\pm$ 0.501} 
        & \ttest{\empbest{1.45 $\pm$ 0.458}}
        & \ttest{1.21 $\pm$ 0.256} 
        & \ttest{1.06 $\pm$ 0.219} 
        & \ttest{0.974 $\pm$ 0.164}
        & \ttest{\empbest{0.870 $\pm$ 0.121}} \\
    & AffineTL-const 
        & \ttest{2.30 $\pm$ 0.565}
        & \ttest{1.73 $\pm$ 0.420} 
        & \ttest{1.48 $\pm$ 0.527}
        & \ttest{\empbest{1.20 $\pm$ 0.243}}
        & \ttest{\empbest{1.04 $\pm$ 0.217}} 
        & \ttest{\empbest{0.963 $\pm$ 0.161}}
        & \ttest{0.884 $\pm$ 0.136} \\
    \cmidrule{2-9}
    & Fine-tune 
        & \ttest{2.33 $\pm$ 0.511} 
        & \ttest{1.62 $\pm$ 0.347} 
        & \ttest{1.35 $\pm$ 0.340} 
        & \ttest{1.12 $\pm$ 0.165} 
        & \ttest{0.959 $\pm$ 0.12} 
        & \ttest{0.848 $\pm$ 0.0824}
        & \ttest{0.790 $\pm$ 0.0547} \\
    & MAML 
        & 2.54 $\pm$ 1.29 
        & 1.90 $\pm$ 0.507
        & 1.67 $\pm$ 0.313 
        & 1.63 $\pm$ 0.282 
        & 1.28 $\pm$ 0.272 
        & 1.20 $\pm$ 0.199 
        & 1.06 $\pm$ 0.111 \\
    & $L^2$-SP 
        & \ttest{2.33 $\pm$ 0.509}
        & \ttest{1.65 $\pm$ 0.378} 
        & \ttest{1.35 $\pm$ 0.340}
        & \ttest{1.12 $\pm$ 0.165} 
        & \ttest{0.968 $\pm$ 0.114} 
        & \ttest{0.858 $\pm$ 0.0818} 
        & \ttest{0.802 $\pm$ 0.0535} \\
    & PAC-Net 
        & 2.24 $\pm$ 0.706 
        & \ttest{1.61 $\pm$ 0.394} 
        & \ttest{1.43 $\pm$ 0.389}
        & \ttest{1.24 $\pm$ 0.177}
        & \ttest{1.18 $\pm$ 0.100} 
        & 1.13 $\pm$ 0.0726 
        &1.100 $\pm$ 0.0589 \\
    \bottomrule
	\end{tabular}
    } 
	\label{tbl:sarcos-main}
\end{table*}
We experimentally investigated the learning performance of the affine model transfer, compared to several existing methods. The objective of the task is to predict the feed-forward torques, required to follow the desired trajectory, at seven different joints of the SARCOS robot arm \citep{williams2006gaussian}.
Twenty-one features representing the joint position, velocity, and acceleration were used as the input $\bmx$. 
The target task is to predict the torque value at one joint.
The representations encoded in the intermediate layer of the source neural network for predicting the other six joints were used as the source features $\bmfs \in \mathbb{R}^{16}$.
The experiments were conducted with seven different tasks (denoted as Torque 1-7) corresponding to the seven joints.
For each target task, a training set of size $n \in \{5,10,15,20,30,40,50\}$ was randomly constructed 20 times, and the performances were evaluated using the test data. 

The following seven methods were compared, including two existing HTL procedures:
\vspace{-0.5\baselineskip}
\begin{description}
\setlength{\leftskip}{-15pt}
\setlength{\itemsep}{-2pt}
\setlength{\itemindent}{-10pt}
    \item[Direct]\mbox{}
        Train a model using the target input $\bmx$ with no transfer.
    \item[Only source]\mbox{}
        Train a model $g(\bmfs)$ using only the source feature $\bmfs$.
    \item[Augmented]\mbox{}
        Perform a regression with the augmented input vector concatenating $\bmx$ and $\bmfs$.
    \item[HTL-offset]\citep{kuzborskij2013stability}\mbox{} \  
        Calculate the transformed output $z_i = y_i - g_{\text{only}}(\bmfs)$ where $g_{\text{only}}(\bmfs)$ is the model pre-trained using {\bf Only source}, and train an additional model with input $\bmx_i$ to predict $z_i$.
    \item[HTL-scale]\citep{du2017hypothesis}\mbox{} \ 
        Calculate the transformed output $z_i = y_i / g_{\text{only}}(\bmfs)$, and train an additional model with input $\bmx_i$ to predict $z_i$.
    \item[AffineTL-full]\mbox{}
        Train the model $g_1 + g_2 \cdot g_3$.
    \item[AffineTL-const]\mbox{}
        Train the model $g_1 + g_3$.
\end{description}
\vspace{-0.5\baselineskip}
Kernel ridge regression with the Gaussian kernel $\exp ( -\|\bmx-\bmx'\|^2 / 2\ell^2 )$ was used for each procedure. The scale parameter $\ell$ was fixed to the square root of the dimension of the input. 
The regularization parameter in the kernel ridge regression and $\lambda_\alpha, \lambda_\beta$, and $\lambda_\gamma$ in the affine model transfer were selected through 5-fold cross-validation.
In addition to the seven feature-based methods, four weight-based TL methods were evaluated: fine-tuning, MAML \citep{Finn2017ModelAgnosticMF}, $L^2$-SP \citep{xuhong2018explicit}, and PAC-Net \citep{myung2022pac}.

Table \ref{tbl:sarcos-main} summarizes the prediction performance of the seven different procedures for varying numbers of training samples in two representative tasks: Torque 1 and Torque 7.
The joint of Torque 1 is located closest to the root of the arm. 
Therefore, the learning task for Torque 1 is less relevant to those for the other joints, and the transfer from Torque 2--6 to Torque 1 would not work.
In fact, as shown in Table \ref{tbl:sarcos-main}, no method showed a statistically significant improvement to {\bf Direct}.
In particular, {\bf Only source} failed to acquire predictive ability, and {\bf HTL-offset} and {\bf HTL-scale} likewise showed poor prediction performance owing to the negative effect of the failure in the variable transformation. 
In contrast, the two affine transfer models showed almost the same predictive performance as {\bf Direct}, which is expressed as its submodel, and successfully suppressed the occurrence of negative transfer.

Because Torque 7 was measured at the joint closest to the end of the arm, its value strongly depends on those at the other six joints, and the procedures with the source features were more effective than in the other tasks. 
In particular, {\bf AffineTL} achieved the best performance among the other feature-based methods.
This is consistent with the theoretical result that the transfer capability of the affine model transfer can be further improved when the risk of learning using only the source features is sufficiently small.

In Table \ref{tbl:sarcos-apd} in Section \ref{sec:apd-sarcos}, we present the results for all tasks.
In most cases, {\bf AffineTL} achieved the best performance among the feature-based methods.
In several other cases, {\bf Direct} produced the best results; in almost all cases, {\bf Only source} and the two HTLs showed no advantage over {\bf AffineTL}. 
Comparing the weight-based and feature-based methods, we noticed that the weight-based methods showed higher performance with large sample sizes. 
Nevertheless, in scenarios with extremely small sample sizes (e.g., $n \! = \! 5$ or $10$), {\bf AffineTL} exhibited comparable or even superior performance.

\blue{
The strength of our method compared to weight-based TLs including fine-tuning is that it does not degrade its performance in cases where cross-domain relationships are weak.
While fine-tuning outperformed our method in cases of Torque 7, the performance of fine-tuning was significantly degraded as the source-target relationship became weaker, as seen in Torque 1 case.
In contrast, our method was able to avoid negative transfer even for such cases. 
This characteristic is particularly beneficial because, in many cases, the degree of relatedness between the domains is not known in advance.
Furthermore, weight-based methods can sometimes be unsuitable, especially when transferring knowledge from large models, such as LLMs. In these scenarios, fine-tuning all parameters is unfeasible, and feature-based TL is preferred. Our approach often outperforms other feature-based methods.
}

\subsection{Evaluation of Scientific Documents}
\label{sec:SciDoc}
Through a case study in natural language processing, we compare the performance of the affine model transfer with that of ordinary feature extraction-based TL and show the advantage of being able to estimate domain shift and domain-specific factors separately.

We used SciRepEval \citep{Singh2022SciRepEvalAM}, a benchmark dataset of scientific documents. 
The dataset consists of abstracts, review scores, and decision statuses of papers submitted to various machine learning conferences. 
We focused on two primary tasks: a regression task to predict the average review score, and a binary classification task to determine the acceptance or rejection status of each paper.
The original input $\bmx$ was represented by a two-gram bag-of-words vector of the abstract. 
For the source features $\bmfs$, we utilized text embeddings of the abstract generated by the pre-trained language models; BERT \citep{devlin2018bert}, SciBERT \citep{Beltagy2019SciBERTAP}, T5 \citep{2020t5}, and GPT-3 \citep{brown2020language}. 
In the affine model transfer, we employed neural networks with two hidden layers to model $g_1$ and $g_2$, and a linear model for $g_3$.
For comparison, we also evaluated the performance of the ordinary feature extraction-based TL using a two-layer neural network with $\bmfs$ as inputs. 
We used 8,166 training samples and evaluated the performance of the model on 2,043 test samples.

Table \ref{tbl:scidocs} shows the root mean square error (RMSE) for the regression task and accuracy for the classification task. 
In the regression tasks, the RMSEs of the affine model transfer were significantly improved over the ordinary feature extraction for the four types of text feature embedding.
We also observed the improvements in accuracy for the classification task even though the affine model transfer was derived on the basis of regression settings.
While the pre-trained language models have the remarkable ability to represent text quality and structure, their representation ability to perform prediction tasks for machine learning documents is not sufficient.
The affine model transfer effectively bridged this gap by learning the additional target-specific factor via the target task, resulting in improved prediction performance in both regression and classification tasks.

Table \ref{tbl:phrase} provides a list of phrases that were estimated to have a positive or negative effect on the review scores.
Because we restricted a network to output positive values for $g_2$, the influence of each phrase could be inferred from the estimated coefficients of the linear model $g_3$.
Specifically, phrases such as \textit{"tasks including"} and \textit{"new state"} were estimated to have positive influences on the predicted score. 
These phrases often appear in contexts such as \textit{"demonstrated on a wide range of tasks including"} or \textit{"establishing a new state-of-the-art result,"}, suggesting that superior experimental results tend to yield higher peer review scores.
In addition, the phrase \textit{"theoretical analysis"} was also identified to have a positive effect on the review score, reflecting the significance of theoretical validation in machine learning research.
On the contrary, general phrases with broader meanings such as \textit{"recent advances"} and \textit{"machine learning,"} contributed to lower scores. 
This observation suggests the importance of explicitly stating the novelty and uniqueness of research findings and refraining from using generic terminologies.

As illustrated in this example, integrating modern deep learning techniques and highly interpretable transfer models through the mechanism of the affine model transfer not only enhances prediction performance, but also provides valuable insights into domain-specific factors.

\begin{table}[!t]
    \centering
    \begin{minipage}{0.48\textwidth}
        \centering
        \caption{
            Prediction performance of peer review scores and acceptance/rejection for submitted papers. 
            The mean and standard deviation of the RMSE and accuracy are reported for the affine model transfer (AffineTL) and feature extraction (FE). 
            Definitions of asterisks and boldface letters are the same as in Table \ref{tbl:sarcos-main}.
        }
        \resizebox{\columnwidth}{!}{%
        \begin{tabular}{ccrr}
        \toprule
             & & \multicolumn{1}{c}{Regression} & \multicolumn{1}{c}{Classification} \\ 
             \midrule 
             \addlinespace[1mm]
             \multirow{2}{*}{BERT} 
                & FE     & 1.3086 $\pm$ 0.0035 & 0.6250 $\pm$ 0.0217 \\
                & AffineTL & \empbest{\ttest{1.3069 $\pm$ 0.0042}} & \empbest{0.6252 $\pm$ 0.0163} \\
            \midrule
             \multirow{2}{*}{SciBERT} 
                & FE     & 1.2856 $\pm$ 0.0144 & \empbest{0.6520 $\pm$ 0.0106} \\
                & AffineTL & \empbest{\ttest{1.2797 $\pm$ 0.0122}} & 0.6507 $\pm$ 0.0124 \\
            \midrule
             \multirow{2}{*}{T5} 
                & FE     & 1.3486 $\pm$ 0.0175 & 0.6344 $\pm$ 0.0079 \\
                & AffineTL & \empbest{\ttest{1.3442 $\pm$ 0.0030}} & \empbest{0.6366 $\pm$ 0.0065} \\
            \midrule
             \multirow{2}{*}{GPT-3} 
                & FE     & 1.3284 $\pm$ 0.0138 & 0.6279 $\pm$ 0.0181 \\
                & AffineTL & \empbest{\ttest{1.3234 $\pm$ 0.0140}} & \empbest{\ttest{0.6386 $\pm$ 0.0095}} \\
            \bottomrule
        \end{tabular}
        }
        \label{tbl:scidocs}
        \end{minipage}
    \hfill
    \begin{minipage}{0.48\textwidth}
        \centering
        \caption{
            Phrases with the top and bottom ten regression coefficients for $g_3$ in the affine transfer model for the regression task with SciBERT.
        }
        \resizebox{\columnwidth}{!}{%
        \begin{tabular}{crr}
            \toprule
             & \multicolumn{1}{c}{Positive} & \multicolumn{1}{c}{Negative} \\
             \midrule 
             \addlinespace[1mm]
             1 & tasks including & recent advances \\ 
             2 & new state & novel approach \\
             3 & high quality & latent space \\
             4 & recently proposed & learning approach \\
             5 & latent variable & neural architecture \\
             6 & number parameters & machine learning \\
             7 & theoretical analysis & attention mechanism \\
             8 & policy gradient & reinforcement learning \\
             9 & inductive bias & proposed framework \\
             10 & image generation & descent sgd \\
            \bottomrule
        \end{tabular}
        }
        \label{tbl:phrase}
    \end{minipage}
\end{table}

\subsection{Case Studies in Materials Science}
We conducted two additional case studies, both of which pertain to scientific tasks in the field of materials science.
One experiment aims to examine the relationship between qualitative differences in source features and learning behavior of the affine model transfer. In the other experiment, we demonstrate the potential utility of the affine model transfer as a calibration tool bridging computational models and real-world systems. In particular, we highlight the benefits of separately modeling and estimating domain-specific factors through a case study in polymer
chemistry. The objective is to predict the specific heat capacity at constant pressure of any given organic polymer
with its chemical structure in the polymer’s repeating unit.
Specifically, we conduct TL to bridge the gap between experimental values and physical properties calculated from
molecular dynamics simulations.
The details are shown in Section \ref{sec:apd-addExp} in Supplementary Material,

\section{Conclusions}

In this study, we introduced a general class of TL based on affine model transformations, and clarified their learning capability and applicability. 
The proposed affine model transformation was shown to be an optimal class that minimizes the expected squared loss in the HTL procedure. 
The model is contrasted with widely applied TL methods, such as re-using features from pre-trained models, which lack theoretical foundation.  
The affine model transfer is model-agnostic; it is easily combined with any machine learning models, features, and physical models. 
Furthermore, in the model, domain-specific factors are involved in incorporating the source features. 
From this property, the affine transfer has the ability to handle domain common and unique factors simultaneously and separately. 

The advantages of the model were verified theoretically and experimentally in this study. We showed theoretical results on the generalization bound and excess risk bound when the regression tasks are solved by kernel methods. It is shown that if the source feature is strongly related to the target domain, the convergence rate of the generalization bound is improved from naive learning. The excess risk of the proposed TL is evaluated using the eigen-decay of the product kernel, which also illustrates the effect of the overlap between the source and target tasks. In our numerical studies, the affine model transfer generally outperforms in test errors when the target and source tasks have a similarity. We have also seen in the example of NLP that the proposed affine model transfer can identify the (non-)valuable phrases for high-quality papers. This can be done by the affine representation of cross-domain shift and domain-specific factors in our model.

\begin{ack}
This work was supported by JST SPRING Grant No. JPMJSP2104, JST CREST Grants No. JPMJCR22O3 and No. JPMJCR19I3, MEXT KAKENHI Grant-in-Aid for Scientific Research on Innovative Areas (Grant No. 19H50820), the Grant-in-Aid for Scientific Research (A) (Grant No. 19H01132) and Grant-in-Aid for Research Activity Start-up (Grant No. 23K19980) from the Japan Society for the Promotion of Science (JSPS), and the MEXT Program for Promoting Researches on the Supercomputer Fugaku (No. hp210264). 
\end{ack}

\bibliographystyle{ieeetr}
\bibliography{reference}

\clearpage

\renewcommand{\thefigure}{S.\arabic{figure}}
\renewcommand{\thetable}{S.\arabic{table}}
\renewcommand{\theequation}{S.\arabic{equation}}
\renewcommand{\thealgorithm}{S.\arabic{algorithm}}
\setcounter{figure}{0}
\setcounter{table}{0}
\setcounter{equation}{0}
\setcounter{algorithm}{0}



\appendix

\makeatletter
\@toptitlebar
{\centering \Large Supplementary Material \\ \LARGE\bf Transfer Learning with Affine Model Transformation \par}
\@bottomtitlebar
\vskip 0.3in \@minus 0.1in
\makeatother

\theoremstyle{plain}
\newtheorem*{assumption*}{\assumptionnumber}
\providecommand{\assumptionnumber}{}
\makeatletter
\newenvironment{assumption_apd}[2]
 {%
  \renewcommand{\assumptionnumber}{Assumption #1#2}%
  \begin{assumption*}%
  \protected@edef\@currentlabel{#1#2}%
 }
 {%
  \end{assumption*}
 }
\makeatother

\section{Other Perspectives on Affine Model Transfer}
\subsection{Transformation Functions for General Loss Functions}
\label{sec:apd-other}
Here we discuss the optimal transformation function for general loss functions.

Let $\ell(y, y') \ge 0$ be a convex loss function that returns zero if and only if $y=y'$, and let $g^*(\bmx)$ be the optimal predictor that minimizes the expectation  of $\ell$ with respect to the distribution $p_t$ followed by $\bmx$ and $y$ transformed by $\phi$:
\[
    g^*(\bmx) = {\rm arg} \min_g {\mathbb{E}}_{p_t} \big[ \ell(g(\bmx), \phi_{f_s}(y)) \big].
\]
The function $g$ that minimizes the expected loss
\[
    {\mathbb{E}}_{p_t} \big[ \ell(g(\bmx), \phi_{f_s}(y)) \big]
    = \iint \ell(g(\bmx), \phi_{f_s}(y)) p_t(\bmx,y) {\rm d}\bmx {\rm d}y
\]
should be a solution to the Euler-Lagrange equation
\begin{equation}
\label{eq:ELeq}
    \frac{\partial}{\partial g(\bmx)} \int \ell (g(\bmx), \phi_{f_s}(y)) p_t(\bmx,y) {\rm d}y
    = \int \frac{\partial}{\partial g(\bmx)} \ell(g(\bmx), \phi_{f_s}(y)) p_t(y|\bmx) {\rm d}y \, p_t(\bmx)
    = 0.
\end{equation}
Denote the solution of Eq.~\eqref{eq:ELeq} by $G(\bmx;\phi_{f_s})$.
While $G$ depends on the loss $\ell$ and distribution $p_t$, we omit those from the argument for notational simplicity.
Using this function, the minimizer of the expected loss $\mathbb{E}_{\bmx,y}[\ell(g(\bmx), y)]$ can be expressed as $G(\bmx; {\rm id})$, where ${\rm id}$ represents the identity function.

Here, we consider the following assumption to hold, which generalizes Assumption \ref{asmp:consist} in the main text:
\begin{assumption_apd}{2.3}{(b)}
\label{asmp:consist-gen}
    For any distribution on the target domain $p_t(\bmx,y)$ and all $\bmx \in \mathcal{X}$, the following relationship holds:
    \[
        \psi_{f_s}(g^*(\bmx)) = {\rm arg} \min_g \mathbb{E}_{\bmx,y}[\ell(g(\bmx), y)].
    \]
    Equivalently, the transformation functions $\phi_{f_s}$ and $\psi_{f_s}$ satisfy
    \begin{equation}
    \label{eq:consist-gen}
        \psi_{f_s} \bigl( G(\bmx;\phi_{f_s})\bigr) = G(\bmx;{\rm id}).
    \end{equation}
\end{assumption_apd}
Assumption \ref{asmp:consist-gen} states that if the optimal predictor $G(\bmx;\phi_{f_s})$ for the data transformed by $\phi$ is given to the model transformation function $\psi$, it is consistent with the overall optimal predictor $G(\bmx;{\rm id})$ in the target region in terms of the loss function $\ell$. We consider all pairs of $\psi$ and $\phi$ that satisfy this consistency condition.

Here, let us consider the following proposition:
\begin{proposition}
\label{prop:inv-gen}
Under Assumption \ref{asmp:diff}, \ref{asmp:inverse} and \ref{asmp:consist-gen}, $\psi^{-1}_{f_s} = \phi_{f_s}$.
\end{proposition}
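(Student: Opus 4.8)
\emph{Proof proposal.} The plan is to mirror the argument used for part (i) of Theorem~\ref{thm:affine}, exploiting the fact that Assumption~\ref{asmp:consist-gen} is required to hold for \emph{every} target distribution $p_t(x,y)$, in particular for degenerate ones. First I would specialize the conditional distribution to a point mass, $p_t(y\mid x) = \delta_{y_0}$ for an arbitrary $y_0 \in \mathcal{Y}$, and evaluate the two predictors appearing in Eq.~\eqref{eq:consist-gen}. Since $\ell$ is convex and vanishes exactly on the diagonal, for this choice the map $g \mapsto \mathbb{E}_{p_t}[\ell(g(x), y)] = \ell(g(x), y_0)$ is minimized precisely at $g(x) = y_0$, so $G(x;\mathrm{id}) = y_0$; likewise $g \mapsto \mathbb{E}_{p_t}[\ell(g(x), \phi_{f_s}(y))] = \ell(g(x), \phi_{f_s}(y_0))$ is minimized at $g(x) = \phi_{f_s}(y_0)$, so $G(x;\phi_{f_s}) = \phi_{f_s}(y_0)$.

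Substituting these two identities into the consistency relation Eq.~\eqref{eq:consist-gen} then yields $\psi_{f_s}(\phi_{f_s}(y_0)) = y_0$ for every $y_0 \in \mathcal{Y}$, i.e.\ $\psi_{f_s}\circ\phi_{f_s} = \mathrm{id}$ on $\mathcal{Y}$. Under Assumption~\ref{asmp:inverse}, $\psi_{f_s}$ admits an inverse $\psi_{f_s}^{-1}$ (with respect to the first argument); composing the last identity on the left with $\psi_{f_s}^{-1}$ gives $\phi_{f_s} = \psi_{f_s}^{-1}$, which is the claim. Note that Assumption~\ref{asmp:diff} (differentiability of $\phi$) plays no role here and is only needed for part~(ii) of Theorem~\ref{thm:affine}; the argument uses solely invertibility of $\psi_{f_s}$ together with the structural properties of $\ell$.

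The one delicate point — and the step I expect to need the most care — is the justification that the point-mass minimizers are exactly $y_0$ and $\phi_{f_s}(y_0)$, rather than merely stationary points of the Euler--Lagrange equation Eq.~\eqref{eq:ELeq}. This relies on the stated properties of $\ell$ (nonnegativity, convexity, and $\ell(y,y')=0$ iff $y=y'$), which force the global minimizer to be unique and to lie on the diagonal. If one prefers to avoid the Dirac delta altogether, the same conclusion can be obtained by taking a sequence of densities concentrating at $y_0$ and passing to the limit, using continuity of $\psi_{f_s}$ (which follows from its invertibility). Either way, once the two predictors have been pinned down, the remainder of the argument is immediate.
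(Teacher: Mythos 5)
Your proposal is correct and follows essentially the same route as the paper's proof: specialize to the point mass $p_t(y\mid x)=\delta_{y_0}$, use the fact that $\ell(y,y')=0$ iff $y=y'$ to pin down $G(x;\phi_{f_s})=\phi_{f_s}(y_0)$ and $G(x;\mathrm{id})=y_0$, and conclude $\psi_{f_s}(\phi_{f_s}(y_0))=y_0$ from the consistency condition. Your added remarks — that the global-minimizer justification deserves care beyond mere stationarity of the Euler--Lagrange equation, and that differentiability of $\phi$ is not actually used — are correct refinements of the paper's argument rather than a departure from it.
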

\begin{proof}
The proof is analogous to that of Theorem \ref{thm:affine} in Section \ref{sec:apd-proof-mainTheorem}.
For any $y_0 \in \mathcal{Y}$, let $p_t(y|\bmx) = \delta_{y_0}$.
Combining this with Eq.~\eqref{eq:ELeq} leads to
\[
    \frac{\partial}{\partial g(\bmx)} \ell(g(\bmx), \phi_{f_s}(y_0)) = 0 \ (\forall y_0 \in \mathcal{Y}).
\]
Because $\ell(y,y')$ returns the minimum value zero if and only if $y=y'$, we obtain $G(\bmx;\phi_{f_s})=\phi_{f_s}(y_0)$.
Similarly, we have $G(\bmx;{\rm id})=y_0$. 
From these two facts and Assumption \ref{asmp:consist-gen}, we have $\psi_{f_s}(\phi_{f_s}(y_0)) = y_0$, proving that the proposition is true.
\end{proof}
Proposition \ref{prop:inv-gen} indicates that the first statement of Theorem \ref{thm:affine} holds for general loss functions.
However, the second claim of Theorem \ref{thm:affine} generally depends on the type of loss function.
Through the following examples, we describe the optimal class of transformation functions for several loss functions.
\begin{example}[Squared loss]
Let $\ell(y,y') = |y-y'|^2$.
As a solution of Eq. \eqref{eq:ELeq}, we can see that the optimal predictor is the conditional expectation $\mathbb{E}_{p_t} [\phi_{f_s}(Y)|\bm{X}=\bmx]$.
As discussed in Section \ref{sec:affine}, the transformation functions $\phi_{f_s}$ and $\psi_{f_s}$ should be affine transformations.
\end{example}
\begin{example}[Absolute loss]
Let $\ell(y,y') = |y-y'|$. 
Substituting this into Eq.~\eqref{eq:ELeq}, we have
\begin{align*}
    0 
    &= \int \frac{\partial}{\partial g(\bmx)} \bigl| g(\bmx) - \phi_{f_s}(y) \bigr| p_t(y|\bmx) {\rm d}y \\
    &= \int {\rm sign} \bigl( g(\bmx) - \phi_{f_s}(y)\bigr) p_t(y|\bmx) {\rm d}y \\
    &= \int_{\phi_{f_s}(y) \ge g(\bmx)} p_t(y|\bmx){\rm d}y - \int_{\phi_{f_s}(y) < g(\bmx)} p_t(y|\bmx){\rm d}y.
\end{align*}
Assuming that $\phi_{f_s}$ is monotonically increasing, we have
\[
    0 = \int_{y \ge \phi_{f_s}^{-1} (g(\bmx))} p_t(y|\bmx){\rm d}y - \int_{y < \phi_{f_s}^{-1} (g(\bmx))} p_t(y|\bmx){\rm d}y.
\]
This yields
\[
    \int_{\phi_{f_s}^{-1} (g(\bmx))}^\infty p_t(y|\bmx){\rm d}y = \int_{-\infty}^{\phi_{f_s}^{-1} (g(\bmx))} p_t(y|\bmx){\rm d}y.
\]
The same result is obtained even if $\phi_{f_s}$ is monotonically decreasing.
Consequently,
\[
    \phi_{f_s}^{-1}(g(\bmx)) = {\rm Median}[Y|\bm{X}=\bmx],
\]
which results in
\[
    G(\bmx;\phi_{f_s}) = \phi_{f_s}\bigl( {\rm Median}[Y|\bm{X}=\bmx] \bigr).
\]
This implies that Eq.~\eqref{eq:consist-gen} holds for any $\phi_{f_s}$ including an affine transformation, and the function form cannot be identified. from this analysis.
\end{example}
\begin{example}[Exponential-squared loss]
As an example where the affine transformation is not optimal, consider the loss function $\ell(y,y') = |e^y - e^{y'}|^2$.
Substituting this into Eq.~\eqref{eq:ELeq}, we have
\begin{align*}
    0 
    &= \int \frac{\partial}{\partial g(\bmx)} \bigl| \exp(g(\bmx)) - \exp(\phi_{f_s}(y)) \bigr|^2 p_t(y|\bmx) {\rm d}y \\
    &= 2 \exp(g(\bmx)) \int \bigl( \exp(g(\bmx)) - \exp(\phi_{f_s}(y)) \bigr) p_t(y|\bmx) {\rm d}y.
\end{align*}
Therefore,
\[
    G(\bmx;\phi_{f_s}) = \log \mathbb{E} \bigl[ \exp(\phi_{f_s}(Y)) | \bm{X}=\bmx \bigr].
\]
Consequently, Eq.~\eqref{eq:consist-gen} becomes
\[
    \log \mathbb{E} \bigl[ \exp(\phi_{f_s}(Y)) \bigr] 
    = \phi_{f_s} \bigl( \log \mathbb{E} \bigl[ \exp(Y) \bigr] \bigr).
\]
Even if $\phi_{f_s}$ is an affine transformation, this equation does not generally hold.
\end{example}

\subsection{Analysis of the Optimal Function Class Based on the Upper Bound of the Estimation Error}
\label{sec:apd-bound}
Here, we discuss the optimal class for the transformation function based on the upper bound of the estimation error.

In addition to Assumptions \ref{asmp:diff} and \ref{asmp:inverse}, we assume the following in place of Assumption \ref{asmp:consist}:
\begin{assumption}
    \label{asmp:lip}
    The transformation functions $\phi$ and $\psi$ are Lipschitz continuous with respect to the first argument, i.e., there exist constants $\mu_\phi$ and $\mu_\psi$ such that,
\begin{equation*}
    \phi(a, c) - \phi(a', c) \leq \mu_\phi \|a - a' \|_2, \hspace{10pt}
    \psi(b, c) - \psi(b', c) \leq \mu_\psi \|b - b' \|_2,
\end{equation*}
for any $a, a' \in \mathcal{Y}$ and $b, b' \in \mathbb{R}$ with any given $c \in \mathcal{F}_s$.
\end{assumption}
Note that each Lipschitz constant is a function of the second argument $\bmfs$, i.e., $\mu_\phi = \mu_\phi(\bmfs)$ and $\mu_\psi = \mu_\psi(\bmfs)$.

Under Assumptions \ref{asmp:diff}, \ref{asmp:inverse} and \ref{asmp:lip}, the estimation error is upper bounded as follows:
\begin{align*}
    \underset{\bmx,y}{\mathbb{E}} \Bigl[ | f_t(\bmx) - \hat{f}_t(\bmx) |^2 \Bigr] 
    &=\underset{\bmx,y}{\mathbb{E}} \Bigl[
        \big| \psi(g(\bmx), \bmfs(\bmx)) - \psi(\hat{g}(\bmx), \bmfs(\bmx)) \big|^2
    \Bigr]\\
    &\leq\underset{\bmx,y}{\mathbb{E}} \Bigl[ 
        \mu_\psi (\bmfs(\bmx))^2 \bigl| g(\bmx) - \hat{g}(\bmx) \bigr|^2
    \Bigr]\\
    &\leq3\underset{\bmx,y}{\mathbb{E}} \Bigl[ 
        \mu_\psi (\bmfs(\bmx))^2 \bigl( 
            \bigl| g(\bmx) - \phi(f_t(\bmx), f_s(\bmx)) \bigr|^2 \\
    & \hspace{120pt} + \bigl| \phi(f_t(\bmx), \bmfs(\bmx)) - \phi(y, \bmfs(\bmx)) \bigr|^2 \\
    & \hspace{120pt} + \bigl| \phi(y, \bmfs(\bmx)) - \hat{g}(\bmx) \bigr|^2 \bigr) \Bigr]\\
    &\leq3\underset{\bmx,y}{\mathbb{E}} \Bigl[ 
        \mu_\psi (\bmfs(\bmx))^2 \bigl| \psi^{-1}(f_t(\bmx), \bmfs(\bmx)) - \phi(f_t(x), \bmfs(\bmx)) \bigr|^2
    \Bigr]\\
    & \hspace{120pt} +3\underset{\bmx,y}{\mathbb{E}} \Bigl[ 
        \mu_\psi (\bmfs(\bmx))^2 \mu_\phi (\bmfs(\bmx))^2 \bigl| f_t(\bmx) - y \bigr|^2
    \Bigr]\\
    & \hspace{120pt} +3\underset{\bmx,y}{\mathbb{E}} \Bigl[ 
        \mu_\psi (\bmfs(\bmx))^2 \bigl| z - \hat{g}(\bmx) \bigr|^2
    \Bigr]\\
    &=3\underset{\bmx,y}{\mathbb{E}} \Bigl[ 
        \mu_\psi (\bmfs(\bmx))^2 \bigl| \psi^{-1}(f_t(\bmx), \bmfs(\bmx)) - \phi(f_t(\bmx), \bmfs(\bmx)) \bigr|^2
    \Bigr]\\
    & \hspace{120pt} +3\sigma^2\underset{\bmx,y}{\mathbb{E}} \Bigl[ 
        \mu_\psi (\bmfs(\bmx))^2 \mu_\phi (\bmfs(\bmx))^2
    \Bigr]\\
    & \hspace{120pt} +3\underset{\bmx,y}{\mathbb{E}} \Bigl[ 
        \mu_\psi (\bmfs(\bmx))^2 \bigl| z - \hat{g}(\bmx) \bigr|^2
    \Bigr].
\end{align*}
The derivation of this inequality is based on \citep{du2017hypothesis}.
We use the Lipschitz property of $\psi$ and $\phi$ for the first and third inequalities, and the second inequality comes from the inequality 
$
    (a-d)^2 \leq 3 (a-b)^2 + 3(b-c)^2 + 3(c-d)^2
$
for $a,b,c,d \in \mathbb{R}$. 

According to this inequality, the upper bound of the estimation error is decomposed into three terms: the discrepancy between the two transformation functions, the variance of the noise, and the estimation error for the transformed data.
Although it is intractable to find the optimal solution of $\phi, \psi$, $\hat{g}$ that minimizes all these terms together, it is possible to find a solution that minimizes the first and second terms expressed as the functions of $\phi$ and $\psi$ only.
Obviously, the first term, which represents the discrepancy between the two transformation functions, reaches its minimum (zero) when $\phi_{f_s}=\psi^{-1}_{f_s}$.
The second term, which is related to the variance of the noise, is minimized when the differential coefficient $\frac{\partial}{\partial u} \psi_{f_s}(u)$ is a constant, i.e., when $\psi_{f_s}$ is a linear function.
This is verified as follows. 
From $\phi_{f_s}=\psi_{f_s}^{-1}$ and the continuity of $\psi_{f_s}$, it follows that
\begin{equation*}
    \mu_\psi = \max \Big| \frac{\partial}{\partial u} \psi_{f_s}(u) \Big|, \hspace{10pt}
    \mu_\phi = \max \Big| \frac{\partial}{\partial u} \psi^{-1}_{f_s}(u) \Big| 
        = \frac{1}{\min | \frac{\partial}{\partial u} \psi_{f_s}(u) |},
\end{equation*}
and thus the product $\mu_\phi \mu_\psi$ takes the minimum value (one) when the maximum and minimum of the differential coefficient are the same.
Therefore, we can write
\begin{equation*}
    \phi(y, \bmfs) = \frac{y - g_1(\bmfs)}{g_2(\bmfs)},\hspace{10pt}
    \psi(g(x), \bmfs)= g_1(\bmfs) + g_2(\bmfs)g(\bmx),
\end{equation*}
where $g_1, g_2 : \mathcal{F}_s \rightarrow \mathbb{R}$ are arbitrarily functions.
Thus, the minimization of the third term in the upper bound of the estimation error can be expressed as
\begin{equation*}
    \min_{g_1,g_2,g} \ \underset{\bmx,y}{\mathbb{E}} | y - g_1(\bmfs) - g_2(\bmfs)g(\bmx) |^2.
\end{equation*}
As a result, the suboptimal function class for the upper bound of the estimated function is given as
\begin{equation*}
    \mathcal{H} = 
        \big\{ 
            x \mapsto g_1(\bmfs) 
            + g_2(\bmfs) \cdot g_3(\bmx) 
            \ | \
            g_1 \in \mathcal{G}_1, 
            g_2 \in \mathcal{G}_2, 
            g_3 \in \mathcal{G}_3
        \big\}.
\end{equation*}
This is the same function class derived in Section \ref{sec:affine}.

\begin{figure}[!t]
\centering
  \begin{minipage}[b]{0.22\columnwidth}
    \centering
    \includegraphics[width=\columnwidth]{./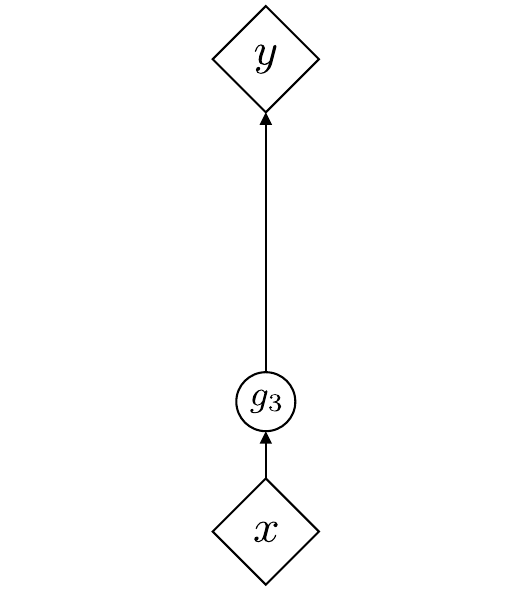}
    \subcaption{Direct learning}
  \end{minipage}
  \begin{minipage}[b]{0.22\columnwidth}
    \centering
    \includegraphics[width=\columnwidth]{./Figures/Figure_Architecture_2_only.pdf}
    \subcaption{Feature extraction}
  \end{minipage}
  \begin{minipage}[b]{0.22\columnwidth}
    \centering
    \includegraphics[width=\columnwidth]{./Figures/Figure_Architecture_3_HTL.pdf}
    \subcaption{HTL-offset}
  \end{minipage}
  \begin{minipage}[b]{0.22\columnwidth}
    \centering
    \includegraphics[width=\columnwidth]{./Figures/Figure_Architecture_4_AffineTL.pdf}
    \subcaption{Affine model transfer}
  \end{minipage}
    \caption{
        Model architectures for the affine model transfer and related procedures.
        (a) Direct learning predicts outputs using only the original inputs $x$, while (b) feature extraction-based neural transfer predicts outputs using only the source features $\bmfs$.
        (c) The HTL procedure proposed in \citep{kuzborskij2013stability} (HTL-offset) constructs the predictor as the sum of $g_1(\bmfs)$ and $g_3(\bmx)$.
        (d) The affine model transfer encompasses these procedures, computing $g_1$ and $g_2$ as functions of the source features and constructing the predictor as an affine combination with $g_3$.
    }
\label{fig:relatedWork}
\end{figure}

\section{Additional Experiments}
\label{sec:apd-addExp}
\subsection{Eigenvalue Decay of the Hadamard Product of Two Gram Matrices}
\label{sec:apd-eigen}
We experimentally investigated how the decay rate $s_2$ in Theorem \ref{thm:excess-rate} is related to the overlap degree in the spaces spanned by the original input $\bmx$ and source features $\bmfs$.

For the original input $\bmx \in \mathbb{R}^{100}$, we randomly constructed a set of 10 orthonormal bases, and then generated 100 samples from their spanning space.
For the source features $\bmfs \in \mathbb{R}^{100}$, we selected $d$ bases randomly from the 10 orthonormal bases selected for $\bmx$ and the remaining $10-d$ bases from their orthogonal complement space. 
We then generated 100 samples of $\bmfs$ from the space spanned by these 10 bases.
The overlap number $d$ can be regarded as the degree of overlap of two spaces spanned by the samples of $\bmx$ and $\bmfs$.
We generated the 100 different sample sets of $\bmx$ and $\bmfs$.

We calculated the Hadamard product of the Gram matrices $K_2$ and $K_3$ using the samples of $\bmx$ and $\bmfs$, respectively. For the computation of $K_2$ and $K_3$, all combinations of the following five kernels were tested:
\begin{description}
\setlength{\leftskip}{20pt}
    \item[Linear kernel] 
        $ \displaystyle k(\bmx, \bmx') = \frac{\bmx^\top \bmx}{2\gamma^2}+1 $,
    \item[Mat\'{e}rn kernel]
        $ \displaystyle
            k(\bmx, \bmx') \! = \! \frac{2^{1-\nu}}{\Gamma(\nu)} 
            \bigg( \!
                \frac{\sqrt{2\nu} \| \bmx \! - \! \bmx' \|_2}{\gamma} 
            \! \bigg)^\nu
            \! K_\nu
            \bigg( \!
                \frac{\sqrt{2\nu} \| \bmx \! - \! \bmx' \|_2}{\gamma} 
            \! \bigg)
        \text{ for }
        \nu \! = \! \frac{1}{2}, \frac{3}{2}, \frac{5}{2}, \infty
        $,
\end{description}
where $K_\nu(\cdot)$ is a modified Bessel function and $\Gamma(\cdot)$ is the gamma function.
Note that for $\nu=\infty$, the Mat\'{e}rn kernel is equivalent to the Gaussian RBF kernel.
The scale parameter $\gamma$ of both kernels was set to $\gamma = \sqrt{{\rm dim}(\bmx)} = \sqrt{10}$.
For a given matrix $K$, the decay rate of the eigenvalues was estimated as the smallest value of $s$ that satisfies $ \lambda_i \leq \|K\|_F^2 \cdot i^{-\frac{1}{s}}$ where $\|\cdot\|_F$ denotes the Frobenius norm.
Note that this inequality holds for any matrices $K$ with $s=1$ \citep{vershynin2018high}.

Figure \ref{fig:eigen} shows the change of the decay rates with respect to varying $d$ for various combinations of the kernels.
In all cases, the decay rate of $K_2 \circ K_3$ showed a clear trend of monotonically decreasing as the degree of overlap $d$ increases.
In other words, the greater the overlap between the spaces spanned by $\bmx$ and $\bmfs$, the smaller the decay rate, and the smaller the complexity of the RKHS $\mathcal{H}_2 \otimes \mathcal{H}_3$.
\begin{figure*}[!t]
    \centering
    \includegraphics[clip, width=\columnwidth]{./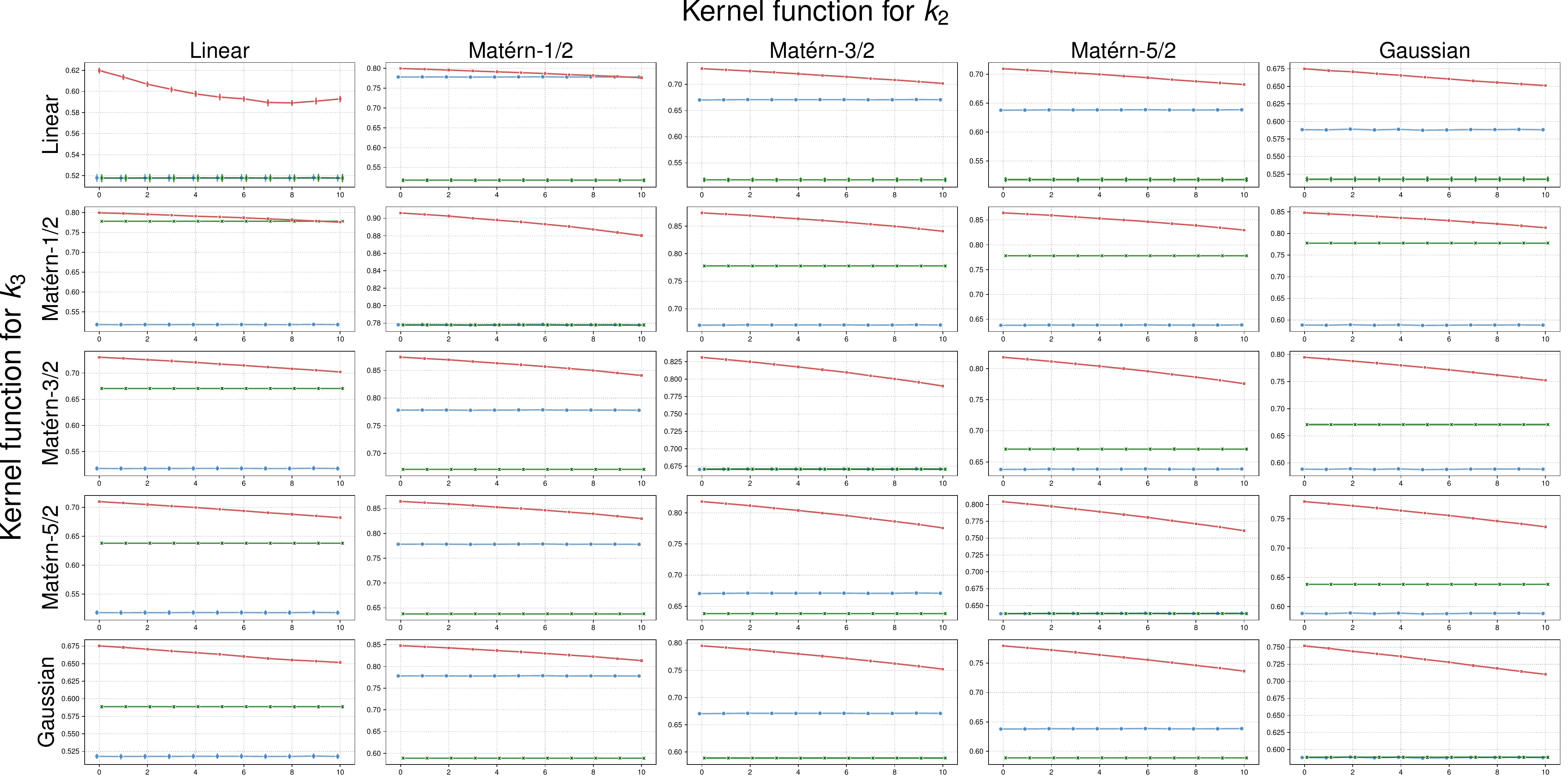}
    \caption{
        Decay rates of eigenvalues of $K_2$ (blue lines), $K_3$ (green lines) and $K_2 \circ K_3$ (red lines) for all combinations of the five different kernels.
        The vertical axis represents the decay rate, and the horizontal axis represents the overlap dimension $d$ in the space where $\bmx$ and $\bmfs$ are distributed. 
    }
    \label{fig:eigen}
\end{figure*}

\subsection{Lattice Thermal Conductivity of Inorganic Crystals}
\label{sec:exp-nn}
Here, we describe the relationship between the qualitative differences in source features and the learning behavior of the affine model transfer, in contrast to ordinary feature extraction using neural networks. 

The target task is to predict the lattice thermal conductivity (LTC) of inorganic crystalline materials, where the LTC is the amount of vibrational energy propagated by phonons in a crystal. 
In general, LTC can be calculated ab initio by performing many-body electronic structure calculations based on quantum mechanics. However, it is quite time-consuming to perform the first-principles calculations for thousands of crystals, which will be used as a training sample set to create a surrogate statistical model.
Therefore, we perform TL with the source task of predicting an alternative, computationally tractable physical property called scattering phase space (SPS), which is known to be physically related to LTC.

\label{sec:apd-ltc}
\subsubsection{Data}
We used the dataset from \citep{Ju2019ExploringDL} that records SPS and LTC for $320$ and $45$ inorganic compounds, respectively. 
The input compounds were translated to $290$-dimensional compositional descriptors using XenonPy \citep{Yamada2019PredictingMP, wu2020iqspr, liu2023quasi, liu2021machine}\footnote{\url{https://github.com/yoshida-lab/XenonPy}}.

\subsubsection{Model Definition and Hyperparameter Search}
Fully connected neural networks were used for both the source and target models, with a LeakyReLU activation function with $\alpha=0.01$.
The model training was conducted using the Adam optimizer \citep{Kingma2015AdamAM}.
Hyperparameters such as the width of the hidden layer, learning rate, number of epochs, and regularization parameters were adjusted with 5-fold cross-validation.
For more details on the experimental conditions and procedure, refer to the provided Python code.
\paragraph{Source Model}
For the preliminary step, neural networks with three hidden layers that predict SPS were trained using 80\% of the 320 samples. 
100 models with different numbers of neurons were randomly generated and the top 10 source models that showed the highest generalization performance in the source domain were selected.
The hidden layer width $L$ was randomly chosen from the range $[50, 100]$, and we trained a neural network with a structure of (input)-$L$-$L$-$L$-1.
Each of the three hidden layers of the source model was used as an input to the transfer models, and we examined the difference in prediction performance for the three layers.

\paragraph{Target Model}
In the target task, an intermediate layer of a source model was used as the feature extractor. 
A model was trained using 40 randomly chosen samples of LTC, and its performance was evaluated with the remaining 5 samples. 
For each of the 10 source models, we performed the training and testing 10 times with different sample partitions and compared the mean values of RMSE among four different methods: (i) the affine model transfer using neural networks to model the three functions $g_1, g_2$ and $g_3$, (ii) a neural network using the XenonPy compositional descriptors as input without transfer, (iii) a neural network using the source features as input, and (iv) fine-tuning of the pre-trained neural networks. 
The width of the layers of each neural network, the number of training epochs, and the dropout rate were optimized during 5-fold cross-validation looped within each training set. 
For the affine model transfer, the functions $g_1$, $g_2$, and $g_3$ were modeled by neural networks.
We used neural networks with one hidden layer for $g_1$, $g_2$ and $g_3$.
%

%
\subsubsection{Results}
\begin{figure*}[!t]
    \centering
    \includegraphics[clip, width=0.8\columnwidth]{./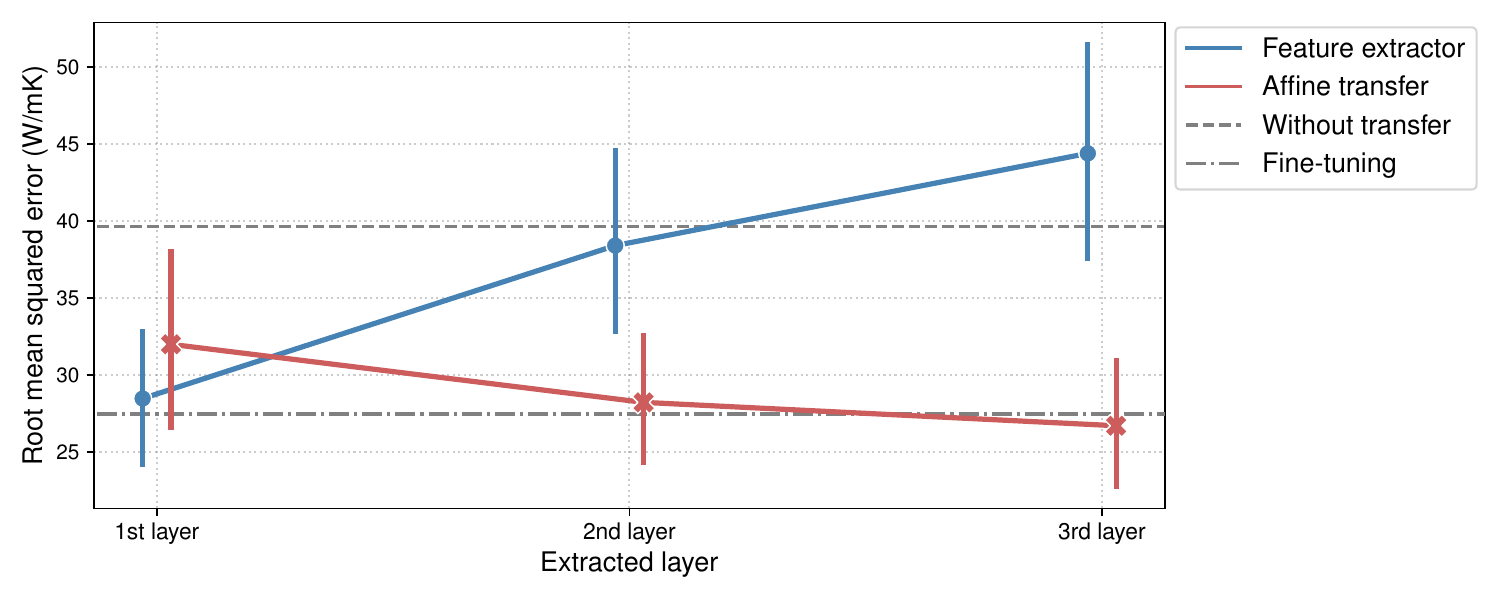}
    \caption{
        Change of RMSE values between the affine transfer model and the ordinary feature extractor when using different levels of intermediate layers as the source features.
        The line plot shows the mean and 95\% confidence interval. 
        As a baseline, RMSE values for direct learning without transfer and fine-tuned neural networks are shown as dotted and dashed lines, respectively.
    }
    \label{fig:ltc}
\end{figure*}

Figure \ref{fig:ltc} shows the change in prediction performance of TL models using source features obtained from different intermediate layers from the first to the third layers. 
The affine transfer model and the ordinary feature extractor showed opposite patterns. 
The performance of the feature extractor improved when the first intermediate layer closest to the input layer was used as the source features and gradually degraded when layers closer to the output were used.
When the third intermediate layer was used, a negative transfer occurred in the feature extractor as its performance became worse than that of the direct learning. 
In contrast, the affine transfer model performs better as the second and third intermediate layers closer to the output were used. 
The affine transfer model using the third intermediate layer reached a level of accuracy slightly better than fine-tuning, which intuitively uses more information to transfer than the extracted features.

In general, the features encoded in an intermediate layer of a neural network are more task-independent as the layer is closer to the input, and the features are more task-specific as the layer is closer to the output \citep{yosinski2014transferable}. 
Because the first layer does not differ much from the original input, using both $\bmx$ and $\bmfs$ in the affine model transfer does not contribute much to performance improvement. 
However, when using the second and third layers as the feature extractors, the use of both $\bmx$ and $\bmfs$ contributes to improving the expressive power of the model, because the feature extractors have acquired different representational capabilities from the original input.
In contrast, a model based only on $\bmfs$ from a source task-specific feature extractor could not account for data in the target domain, so its performance would become worse than direct learning without transfer, i.e., a negative transfer would occur.

\subsection{Heat Capacity of Organic Polymers}
\label{sec:cp}
\begin{figure}[!t]
    \centering
    \includegraphics[clip, width=0.4\columnwidth]{./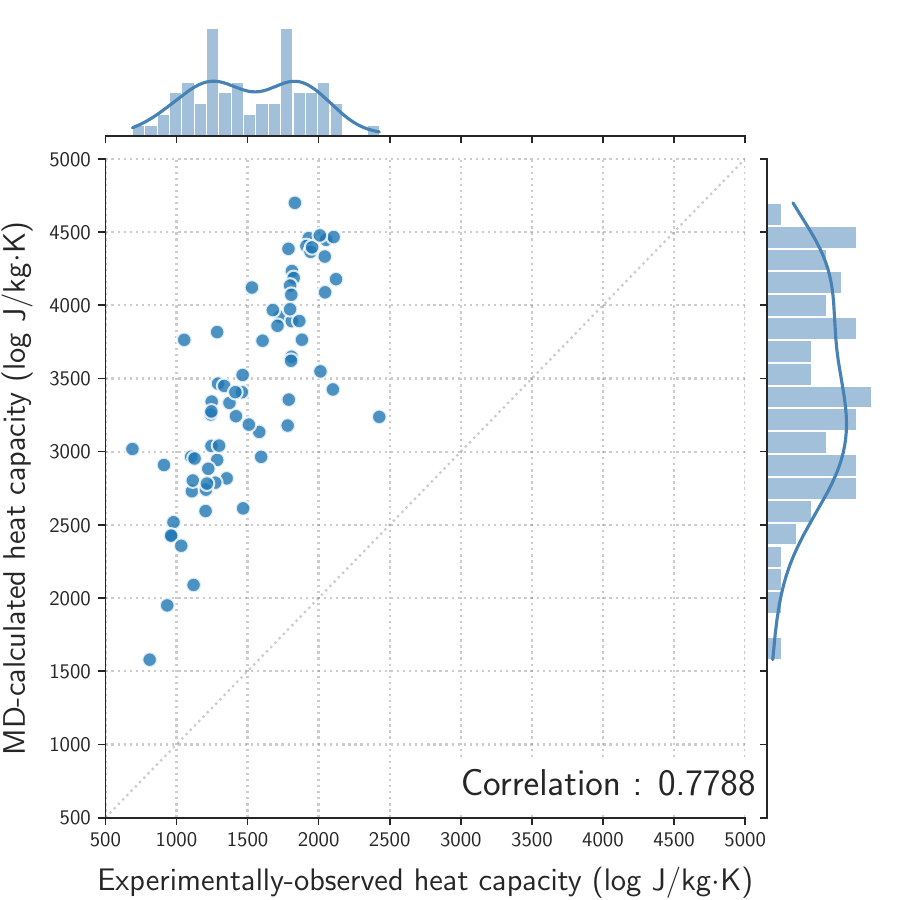}
    \caption{
         MD-calculated (vertical axis) and experimental values (horizontal axis) of the specific heat capacity at constant pressure for various amorphous polymers.
    }
    \label{fig:cp-mse}
\end{figure}
\begin{table}[t]
    \centering
    \caption{
        Force field parameters that form the General AMBER force field \citep{Wang2004DevelopmentAT} version 2 (GAFF2), and their detailed descriptions.
    }
    \vspace{0.5\baselineskip}
    \label{tbl:ff-desc}
    \resizebox{\columnwidth}{!}{%
    \begin{tabular}{ll}
    \hline
        \multicolumn{1}{c}{Parameter}  &\multicolumn{1}{c}{Description} \\
    \hline
        $\text{mass}$    & Atomic mass \\
        $\sigma$         & Equilibrium radius of van der Waals (vdW) interactions \\
        $\epsilon$       & Depth of the potential well of vdW interactions \\
        $\text{charge}$  & Atomic charge of Gasteiger model \\
        $r_0$            & Equilibrium length of chemical bonds \\
        $K_{\rm bond}$   & Force constant of bond stretching \\
        $\text{polar}$   & Bond polarization defined by the absolute value of charge difference between atoms in a bond \\
        $\theta_0$       & Equilibrium angle of bond angles \\
        $K_{\rm angle}$  & Force constant of bond bending \\
        $K_{\rm dih}$    & Rotation barrier height of dihedral angles \\
    \hline
    \end{tabular}
    }
\end{table}
We highlight the benefits of separately modeling and estimating domain-specific factors through a case study in polymer chemistry. 
The objective is to predict the specific heat capacity at constant pressure $C_{\mathrm{P}}$ of any given organic polymer with its chemical structure in the polymer's repeating unit. 
Specifically, we conduct TL to bridge the gap between experimental values and physical properties calculated from molecular dynamics (MD) simulations.

As shown in Figure \ref{fig:cp-mse}, there was a large systematic bias between experimental and calculated values; the MD-calculated properties $C_\mathrm{P}^{\mathrm{MD}}$ exhibited an evident overestimation with respect to their experimental values. 
As discussed in \citep{hayashi2022radonpy}, this bias is inevitable because classical MD calculations do not reflect the presence of quantum effects in the real system.
According to Einstein's theory for the specific heat in physical chemistry, the logarithmic ratio between $C_{\mathrm{P}}^{\mathrm{exp}}$ and $C_{\mathrm{P}}^{\mathrm{MD}}$ can be calibrated by the following equation:
\begin{align}
\label{eq:exp-md}
    \log C_{\rm P}^{\rm exp} 
    = \log C_{\rm P}^{\rm MD}
    + 2 \log \biggl( 
        \frac{\hbar \omega}{k_BT} 
    \biggr) 
    + \log
    \frac{
        \exp \bigl(\frac{\hbar \omega}{k_BT} \bigr)
    }{
        \big[ \exp \bigl(\frac{\hbar \omega}{k_BT} \bigr)-1 \big]^2
    },
\end{align}
where $k_B$ is the Boltzmann constant, $\hbar$ is the Planck constant, $\omega$ is the frequency of molecular vibrations, and $T$ is the temperature.
The bias is a monotonically decreasing function of frequency $\omega$, which is described as a black-box function of polymers with their molecular features.
Hereafter, we consider the calibration of this systematic bias using the affine transfer model.

\subsubsection{Data}
Experimental values of the specific heat capacity of the 70 polymers were collected from PoLyInfo \citep{otsuka2011polyinfo}. 
The MD simulation was also applied to calculate their heat capacities.
For models to predict the log-transformed heat capacity, a given polymer with its chemical structure was translated into the 190-dimensional force field descriptors, using RadonPy \citep{hayashi2022radonpy}\footnote{\url{https://github.com/RadonPy/RadonPy}}.

The force field descriptor represents the distribution of the ten different force field parameters (
$
    t \in \mathcal{T} = \{ \text{mass}, \sigma, \epsilon, \text{charge}, r_{\rm 0}, K_{\rm bond}, \text{polar}, \theta_{\rm 0}, K_{\rm angle}, K_{\rm dih} \}
$
that make up the empirical potential (i.e., the General AMBER force field \citep{Wang2004DevelopmentAT} version 2 (GAFF2)) of the classical MD simulation. 
The detailed descriptions for each parameter are listed in Table \ref{tbl:ff-desc}.
For each $t$, pre-defined values are assigned to their constituent elements in a polymer, such as individual atoms ($\text{mass}$, $\text{charge}$, $\sigma$, and $\epsilon$), bonds ($r_{\rm 0}$, $K_{\rm bond}$, and $\text{polar}$), angles ($\theta_{\rm 0}$ and $K_{\rm angle}$), or dihedral angles ($K_{\rm dih}$), respectively. 
The probability density function of the assigned values of $t$ is then estimated and discretized into 10 points corresponding to 10 different element species such as hydrogen and carbon for $\text{mass}$, and 20 equally spaced grid points for the other parameters. 
The details of the descriptor calculations are described in \citep{kusaba2023repre}.

The source feature $f_s$ was given as the log-transformed value of $C_{\rm P}^{\rm MD}$. 
Therefore, $f_s$ is no longer a function of $\bmx$; this modeling was intended for calibrating the MD-calculated properties. 

We randomly sampled 60 training polymers and tested the prediction performance of a trained model on the remaining 10 polymers 20 times. 
The PoLyInfo sample identifiers for the selected polymers are listed in the code.

\subsubsection{Model Definition and Hyperparameter Search}
As described above, the 190-dimensional force field descriptor consists of ten blocks corresponding to different types of features.
The $J_t$ features that make up block $t$ represent discretized values of the density function of the force field parameters assigned to the atoms, bonds, or dihedral angles that constitute the given polymer. 
Therefore, the regression coefficients of the features within a block should be estimated smoothly.
To this end, we imposed fused regularization on the parameters as
\[
    \lambda_{1} \| \bmga \|^2_2
    + \lambda_{2} \sum_{t \in \mathcal{T}} \sum_{j = 2}^{J_t} \big(\gamma_{t,j} - \gamma_{t,j-1}\big)^2,
\]
where 
$
    \mathcal{T} = \{ 
        \text{mass}, 
        \text{charge},
        \epsilon,
        \sigma,
        K_{\text{bond}},
        r_0,
        K_{\text{angle}},
        \theta,
        K_{\text{dih}}
    \},
$
and $J_t = 10$ for $t = \text{mass}$ and $J_t = 20$ otherwise.
The regression coefficient $\gamma_{t,j}$ corresponds to the $j$-th feature of block $t$.

\paragraph{Ordinary Linear Regression}
The experimental heat capacity $y = \log C_{\rm P}^{\rm exp}$ was regressed on the MD-calculated property, without regularization, as 
$
    \hat{y} = \alpha_0 + \alpha_1 f_s
$
where $\hat{y}$ denotes the conditional expectation and $f_s = \log C_{\rm P}^{\rm MD}$. 

\paragraph{Learning the Log-Difference}
We calculated the log-difference $\log C_{\rm P}^{\rm exp} - \log C_{\rm P}^{\rm MD}$ and trained the linear model with the ridge penalty.
The hyperparameters $\lambda_1$ and $\lambda_2$ for the scale- and smoothness-regularizers were determined based on 5-fold cross validation across 25 equally space grids in the interval $[10^{-2}, 10^2]$ for $\lambda_1$ and across the set $\{50, 100, 150\}$ for $\lambda_2$.

\paragraph{Affine Transfer}
%
\begin{algorithm}[tb]
    \caption{Block relaxation algorithm for the model in Eq.~\eqref{eq:fullmodel}.}
    \label{alg:apd-Cp}
    \begin{algorithmic}
        \STATE \textbf{Initialize:} $\alpha_{0} \leftarrow \hat{\alpha}_{0,\text{olr}}$, $\alpha_{1} \leftarrow \hat{\alpha}_{1,\text{olr}}$, $\beta \leftarrow 0$, $\bmga \leftarrow \hat{\bmga}_{\text{diff}}$
        \REPEAT 
            \STATE $\bmal \leftarrow {\rm arg} \min_{\bmal} F_{\bmal, \beta, \bmga}$
            \STATE $\beta \leftarrow {\rm arg} \min_{\beta} F_{\bmal, \beta, \bmga}$
            \STATE $\bmga \leftarrow {\rm arg} \min_{\bmga} F_{\bmal, \beta, \bmga}$
        \UNTIL{ \rm{convegence} }
    \end{algorithmic}
\end{algorithm}
%
The log-transformed value of $C_{\rm p}^{\rm exp}$ is modeled as
\begin{equation}
\label{eq:fullmodel}
    y := \log C_{\rm P}^{\rm exp} 
    = \underbrace{\alpha_0 + \alpha_1 f_s}_{g_1} 
    - \underbrace{(\beta f_s + 1)}_{g_2} 
    \cdot \underbrace{\bmx^\top \bmga}_{g_3} + \epsilon_\sigma,
\end{equation}
where $\epsilon_\sigma$ represents observation noise, and $\alpha_0, \alpha_1, \beta$ and $\bmga$ are unknown parameters to be estimated. 
When $\alpha_1=1$ and $\beta = 0$, Eq.~\eqref{eq:fullmodel} is consistent with the theoretical equation in  Eq.~\eqref{eq:exp-md} in which the quantum effect is linearly modeled as $\alpha_0 + \bmx^\top \bmga$. 

In the model training, the objective function was given as follows:
\begin{align*}
    F_{\bmal, \beta,\bmga}
    &=
    \frac{1}{n} \sum_{i=1}^n \big\{ y_i - (\alpha_0 + \alpha_1 f_{s, i} - (\beta f_{s, i} + 1) \bmx^\top \bmga) \big\}^2 \\
    &\hspace{50pt}
    + \lambda_\beta \beta^2
    + \lambda_{\bmga,1} \| \bmga \|^2_2
    + \lambda_{\bmga,2} \sum_{t \in T} \sum_{j = 2}^{J_t} \big(\gamma_{t,j} - \gamma_{t,j-1}\big)^2,
\end{align*}
where $\bmal = [\alpha_0 \ \alpha_1]^\top$.
With a fixed $\lambda_\beta = 1$, the remaining hyperparameters $\lambda_{\gamma,1}$ and $\lambda_{\gamma,2}$ were optimized through 5-fold cross validation over 25 equally space grids in the interval $[10^{-2}, 10^2]$ for $\lambda_{\gamma, 1}$ and across the set $\{50, 100, 150\}$ for $\lambda_{\gamma, 2}$.

The algorithm to estimate the parameters $\bmal, \beta$ and $\bmga$ is described in Algorithm \ref{alg:apd-Cp}, where $\alpha_{0, \text{olr}}$ and $\alpha_{1, \text{olr}}$ are the estimated parameters of the ordinary linear regression model, and $\hat{\bmga}_{\text{diff}}$ is the estimated parameter of the log-difference model.
For each step, the full conditional minimization of  $F_{\bmal, \beta, \bmga}$ with respect to each parameter can be made analytically as
\begin{align*}
    {\rm arg} \min_{\bmal} F_{\bmal, \beta, \bmga} &= (F_s^\top F_s)^{-1}F_s^\top(\bmy+(\beta \bmf_{\bms,1:n}+1)\circ(X\bmga)), \\
    {\rm arg} \min_{\beta} F_{\bmal, \beta, \bmga} &= -(\bmf_{\bms,1:n}^\top {\rm diag}(X\bmga)^2 \bmf_{\bms,1:n} + n \lambda_2)^{-1}\bmf_{\bms,1:n}^\top {\rm diag}(X\bmga)(\bmy-F_s\bmal + X\bmga), \\
    {\rm arg} \min_{\bmga} F_{\bmal, \beta, \bmga} &= -(X^\top{\rm diag}(\beta \bmf_{\bms,1:n}+1)^2X+\Lambda)^{-1}X^\top{\rm diag}(\beta \bmf_{\bms,1:n}+1)(\bmy-F_s\bmal),
\end{align*}
where $X$ denote the matrix in which the $i$-th row is $x_i$, $\bmy=[y_1 \cdots y_n]^\top$, $\bmf_{\bms,1:n} = [f_{s,1} \cdots f_{s,n}]^\top$, $F_s = [\bmf_{\bms,1:n} \ \ {\bf 1}]$, and $d=190$.
$\Lambda$ is a matrix including the two regularization parameters $\lambda_{\bmga,1}$ and $\lambda_{\bmga,2}$ as
\begin{equation*}
    \Lambda = D^\top D,
    \text{ where }
    D = \biggl[
    \begin{array}{c}
        \lambda_{\bmga, 1} I_{d} \\
        \lambda_{\bmga, 2} M
    \end{array}
    \biggr], \
    M =
    \begin{bmatrix}
        -1 &  1 & 0 & \cdots & 0 & 0 \\
         0 & -1 & 1 & \cdots & 0 & 0 \\
         \vdots & \vdots & \vdots & \ddots & \vdots & \vdots\\
         0 &  0 & 0 & \cdots & 0 & 0 \\
         \vdots & \vdots & \vdots & \ddots & \vdots & \vdots\\
         0 &  0 & 0 & \cdots & -1 & 1
    \end{bmatrix}
    \begin{array}{l}
    \\
    \\
    \\
    \gets m\text{-th rows} \\
    \\
    \\
    \end{array},
\end{equation*}
where $m \in \{10, 30, 50, 70, 90, 110, 130, 150, 170\}$.
Note that the matrix $M$ is the same as the matrix 
$
    [{\bf 0} \ \ I_{189} ] - [I_{189} \ \ {\bf 0}]
$ 
except that the $m$-th row is all zeros.
Note also that $M \in \mathbb{R}^{189 \times 190}$, and therefore $D \in \mathbb{R}^{279 \times 190}$ and $\Lambda \in \mathbb{R}^{190 \times 190}$.

The stopping criterion of the algorithm was set as
\begin{align}
    \label{eq:apd-stop}
    \max_{\theta \in \{ \bmal,\beta,\bmga \}} 
        \frac{
            \max_i \big|\theta_i^{({\rm new})} - \theta_i^{({\rm old})} \big|
        }{
            \max_i \big| \theta_i^{({\rm old})} \big|
        } < 10^{-4},
\end{align}
where $\theta_i$ denotes the $i$-th element of the parameter $\theta$.
This convergence criterion is employed in several existing machine learning libraries, e.g., scikit-learn \footnote{\url{https://scikit-learn.org/stable/modules/generated/sklearn.linear_model.Lasso.html}}.

\subsubsection{Results}
\begin{table}[!t]
    \centering
    \caption{
           Mean and standard deviation of RMSE of three prediction models.
    }
    \vspace{0.5\baselineskip}
    \label{tbl:cp-mse}
        \begin{tabular}{lc}
            \hline
            Model & \multicolumn{1}{c}{RMSE (log J/kg $\!\cdot\!$ K)} \\
            \hline
            $y = \alpha_0 + \alpha_1 f_s + \epsilon_\sigma$   & 0.1403 $\pm$ 0.0461 \\
            $y = f_s + \bmx^\top \bmga + \epsilon_\sigma$ & 0.1368 $\pm$ 0.04265 \\
            $y = \alpha_0 + \alpha_1 f_s 
    - (\beta f_s + 1) \bmx^\top \bmga + \epsilon_\sigma$
 & \textbf{0.1357 $\pm$ 0.04173} \\
            \hline
        \end{tabular}
\end{table}
\begin{figure*}[!t]
    \centering
    \includegraphics[clip, width=\linewidth]{./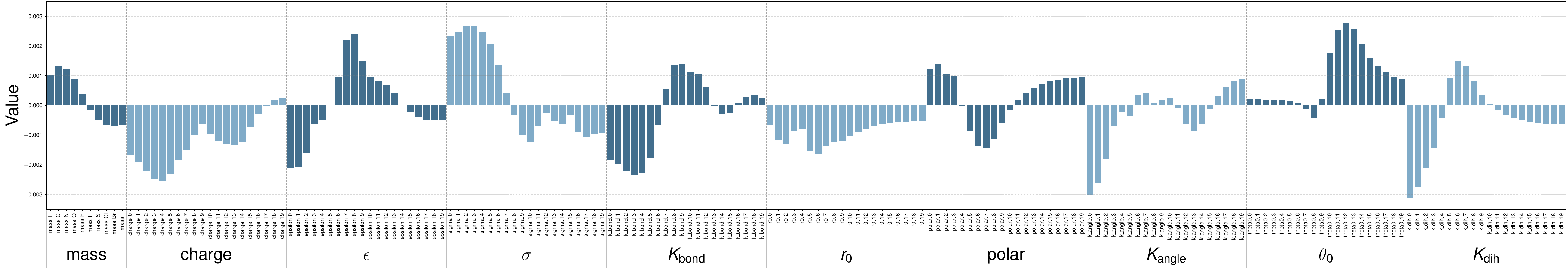}
    \caption{
    Bar plot of regression coefficients $\bmga$ of linear calibrator filling the discrepancy between experimental and MD-calculated specific heat capacity of amorphous polymers.
    }
    \label{fig:cp-params}
\end{figure*}
Table \ref{tbl:cp-mse} summarizes the prediction performance (RMSE) of the three models. 
The ordinary linear model $y = \alpha_0 + \alpha_1 f_s + \epsilon_\sigma$, which ignores the force field descriptors, exhibited the lowest prediction performance. 
The other two calibration models $y = f_s + \bmx^\top \bmga + \epsilon_\sigma$ and the full model in Eq.~\eqref{eq:fullmodel} reached almost the same accuracy, but the latter had achieved slightly better prediction accuracy. 
The estimated parameters of the full model were $\alpha_1 \approx 0.889$ and $\beta \approx -0.004$.
The model form is highly consistent with the theoretical equation in  Eq.~\eqref{eq:exp-md} as well as the restricted model ($\alpha_1=1, \beta=0$). 
This supports the validity of the theoretical model in \cite{hayashi2022radonpy}.

It is expected that physicochemical insights can be obtained by examining the estimated coefficient $\bmga$, which would capture the contribution of the force field parameters to the quantum effects. 
The magnitude of the quantum effect is a monotonically increasing function of the frequency $\omega$, and is known to be highly related to the descriptors $\epsilon$, $K_\text{dih}$, $K_\text{bond}$, $K_\text{angle}$ and $\text{mass}$. 
According to physicochemical intuition, it is considered that as $\epsilon$, $K_\text{bond}$, $K_\text{angle}$, and $K_\text{dih}$ decrease, their potential energy surface becomes shallow, which leads to the decrease of $\omega$, and in turn the decrease of quantum effects. 
Furthermore, because the molecular vibration of light-weight atoms is faster than that of heavy atoms, $\omega$ and quantum effects should theoretically increase with decreasing $\text{mass}$. 

Figure \ref{fig:cp-params} shows the mean values of the estimated parameter $\bmga$ for the full calibration model.
The physical relationships described above can be captured consistently with the estimated coefficients. 
The coefficients in lower regions of $\epsilon$, $K_\text{bond}$, $K_\text{angle}$ and $K_\text{dih}$ showed large negative values, indicating that polymers containing more atoms, bonds, angles, and dihedral angles with lower values will have smaller quantum effects. 
Conversely, the coefficients in lower regions of $\text{mass}$ showed positive large values, meaning that polymers containing more atoms with smaller masses will have larger quantum effects. 
As illustrated in this example, separate inclusion of the domain-common and domain-specific factors in the affine transfer model enables us to infer the features relevant to the cross-domain differences.

\section{Experimental Details}
\label{sec:apd-exp-detail}
Instructions for obtaining the datasets used in the experiments are described in the code.

\subsection{Kinematics of the Robot Arm}
\label{sec:apd-sarcos}
\subsubsection{Data}
We used the SARCOS dataset in \citep{williams2006gaussian}.
The task is to predict the feed-forward torque required to follow the desired trajectory in the seven joints of the SARCOS anthropomorphic robot arm.
The twenty one features representing the joints' position, velocity, and acceleration were used as $\bmx$. 
The observed values of six torques other than the torque at the joint in the target domain were given to the source features $\bmfs$.
The dataset includes 44,484 training samples and 4,449 test samples.
We selected $\{5,10,15,20,30,40,50\}$ samples randomly from the training set.
The prediction performances of the trained models were evaluated using the 4,449 test samples.
Repeated experiments were conducted 20 times with different independently sampled datasets.

\subsubsection{Model Definition and Hyperparameter Search}
\paragraph{Source model}
For each target task, a multi-task neural network was trained to predict the torque values of the remaining six source tasks.
The source model shares four layers (256-128-64-32) up to the final layer, and only the output layer is task-specific.
We used all training data and Adagrad \citep{duchi2011adaptive} with learning rate of $0.01$.

\paragraph{Direct, Only source, Augmented, HTL-offset, HTL-scale}
For each procedure, we used kernel ridge regression with the RBF kernel
$
    k(\bmx, \bmx') = \exp (-\frac{1}{2\ell^2} \| \bmx-\bmx' \|^2_2 ).
$
The scale parameter $\ell$ was set to the square root of the input dimension as $\ell = \sqrt{21}$ for {\bf Direct}, {\bf HTL-offset} and {\bf HTL-scale}, $\ell = \sqrt{6}$ for {\bf Only source} and $\ell = \sqrt{27}$ for {\bf Augmented}.
The regularization parameter $\lambda$ was selected in 5-fold cross-validation in which the grid search was performed over 50 grid points in the interval $[10^{-4}, 10^2]$. 

\paragraph{AffineTL-full, AffineTL-const}
%
\begin{algorithm}[!t]
    \caption{Block relaxation algorithm for {\bf AffineTL-full}.}
    \label{alg:apd-block}
    \begin{algorithmic}
        \STATE \textbf{Initialize:} $\bma_0 \leftarrow (K_1 + \lambda_1 I_n)^{-1}\bmy$, $\bmb_0 \sim \mathcal{N}({\bf 0}, I_n)$, $\bmc_0 \sim \mathcal{N}({\bf 0}, I_n)$, $d_0 \leftarrow 0.5$
        \REPEAT 
            \STATE $\bma \leftarrow (K_1 + \lambda_1 I_n)^{-1}(\bmy - (K_2\bmb+{\bf 1}) \circ (K_3\bmc) - d{\bf 1})$
            \STATE $\bmb \leftarrow ({\rm Diag}(K_3\bmc)^2K_2+\lambda_2 I_n)^{-1}((K_3\bmc)\circ(\bmy-K_1\bma-K_3\bmc-d{\bf 1}))$
            \STATE $\bmc \leftarrow ({\rm Diag}(K_2 \bmb + {\bf 1})^2 K_3 + \lambda_3 I_n)^{-1}((K_2 \bmb + {\bf 1}) \circ (\bmy - K_1 \bma-d{\bf 1}))$
            \STATE $d \leftarrow \langle \bmy-K_1\bma-(K_2\bmb + {\bf 1}) \circ (K_3\bmc), {\bf 1} \rangle / n$
        \UNTIL{ convergence }
    \end{algorithmic}
\end{algorithm}
%
We considered the following kernels:
\begin{align*}
    k_1(\bmfs(\bmx), \bmfs(\bmx')) &= \exp \Bigl( -\frac{1}{2 \ell^2 } \| \bmfs(\bmx) - \bmfs(\bmx')\|^2_2 \Bigr) \ (\ell = \sqrt{6}),\\
    k_2(\bmfs(\bmx), \bmfs(\bmx')) &= \exp \Bigl( -\frac{1}{2 \ell^2 } \| \bmfs(\bmx) - \bmfs(\bmx')\|^2_2 \Bigr) \ (\ell = \sqrt{6}),\\
    k_3(\bmx,\bmx') &= \exp \Bigl( -\frac{1}{2 \ell^2} \| \bmx - \bmx'\|^2_2 \Bigr) \ (\ell = \sqrt{27}),
\end{align*}
for $g_1, g_2$ and $g_3$ in the affine transfer model, respectively.

Hyperparameters to be optimized are the three regularization parameters $\lambda_1, \lambda_2$ and $\lambda_3$.
We performed 5-fold cross-validation to identify the best hyperparameter set from the candidate points; $\{10^{-3}, 10^{-2}, 10^{-1}, 1\}$ for $\lambda_1$ and $\{10^{-2}, 10^{-1}, 1, 10\}$ for each of $\lambda_2$ and $\lambda_3$.

To learn the {\bf AffineTL-full} and {\bf AffineTL-const}, we used the following objective functions:
\begin{description}
\setlength{\leftskip}{20pt}
    \item[AffineTL-full]
        $ \displaystyle \| \bmy -  (K_1\bma + (K_2\bmb + {\bf 1}) \circ (K_3\bmc) + d {\bf 1})\|^2_2 + \lambda_1 \bma^\top K_1 \bma + \lambda_2 \bmb^\top K_2 \bmb + \lambda_3 \bmc^\top K_3 \bmc $,
    \item[AffineTL-const]
        $ \displaystyle \frac{1}{n}\| \bmy -  (K_1\bma + K_3\bmc + d{\bf 1})\|^2_2 + \lambda_1 \bma^\top K_1 \bma + \lambda_3 \bmc^\top K_3 \bmc$.
\end{description}
Algorithm \ref{alg:apd-block} summarizes the block relaxation algorithm for {\bf AffineTL-full}.
For {\bf AffineTL-const}, we found the optimal parameters as follows:
\begin{align*}
    \Biggl[
    \begin{array}{c}
        \hat{\bma}\\
        \hat{\bmc}\\
        \hat{d}
    \end{array}
    \Biggr]
    =
    \Biggl(
        \Biggl[
        \begin{array}{c}
            K_1 \\
            K_3 \\
            {\bf 1}^\top
        \end{array}
        \Biggr]
        [
        \begin{array}{ccc}
            K_1 & K_3 & {\bf 1}
        \end{array}
        ]
        +
        \Biggl[
        \begin{array}{ccc}
            \lambda_1 K_1 & & \\
             & \lambda_3 K_3 & \\
             & & 0
        \end{array}
        \Biggr]
    \Biggr)^{-1}
    \Biggl[
    \begin{array}{c}
        K_1 \\
        K_3 \\
        {\bf 1}^\top
    \end{array}
    \Biggr]
    \bmy
\end{align*}

The stopping criterion for the algorithm was the same as Eq.~\eqref{eq:apd-stop}.

\paragraph{Fine-tuning}
The target network was constructed by adding a one-dimensional output layer to the shared layers of the source network.
As initial values for the training, we used the weights of the source neural network for the shared layer and the average of the multidimensional output layer of the source network for the output layer.
Adagrad \citep{duchi2011adaptive} was used for the optimization.
The learning rate was fixed at $0.01$ and the number of training epochs was selected from $\{1, 2, 3, 4, 5, 6, 7, 8, 9, 10, 20, 50, 100 \}$ through 5-fold cross-validation.

\paragraph{MAML} 
A fully connected neural network with 256-64-32-16-1 layer width was used, and the initial values were searched through MAML \citep{Finn2017ModelAgnosticMF} using the six source tasks.
The obtained base model was fine-tuned with the target samples.
Adam \citep{Kingma2014AdamAM} with a fixed learning rate of $0.01$ was used for the optimization.
The number of training epochs was selected from $\{ 1, 2, 3, 4, 5, 6, 7, 8, 9, 10, 20, 50, 100 \}$ through 5-fold cross-validation.

\paragraph{${\bm L^2}$-SP}
$L^2$-SP is a regularization method proposed by \citep{xuhong2018explicit} in which the following regularization term is added so that the weights of the target network are estimated in the neighborhood of the weights of the source network:
\begin{equation}
\label{eq:l2-sp}
    \Omega(\bm{w}) = \frac{\lambda}{2} \| \bm{w} - \bm{w_s} \|^2_2,
\end{equation}
where $\bm{w}$ and $\bm{w_s}$ are the weights of the target and source model, respectively, and $\lambda > 0$ is a hyperparameter.
We used the weights of the source network as the initial point for the training of the target model, and added a regularization parameter as in Eq. \eqref{eq:l2-sp}.
Adagrad \citep{duchi2011adaptive} were used for the optimizer, and the regularization parameters and learning rate were fixed at $0.01$ and $0.001$, respectively.
The number of training epochs was selected from $\{ 1, 2, 3, 4, 5, 6, 7, 8, 9, 10, 20, 50, 100 \}$ through 5-fold cross-validation.

\paragraph{PAC-Net}
PAC-Net, proposed in \citep{myung2022pac}, is a TL method that leverages pruning of the weights of the source network.
Its training strategy consists of three steps: identifying the important weights in the source model, fine-tuning them using the source samples, and updating the remaining weights using the target samples.

Firstly, we pruned the bottom $10 \%$ of weights, based on absolute value, from the pre-trained source network. Following this, the remaining weights were retrained using the stochastic gradient descent (SGD). Finally, the pruned weights were retrained using target samples. 
For the final training phase, SGD with learning rate $0.01$, was employed, and the number of training epochs was selected from $\{ 1, 2, 3, 4, 5, 6, 7, 8, 9, 10, 20, 50, 100 \}$ through 5-fold cross-validation.

\begin{table*}[!t]
    \centering
	\caption{
        Performance on predicting the torque values at the first and seventh joints of the SARCOS robot arm. 
        Definitions of an asterisk and bold type are the same as in Table \ref{tbl:sarcos-main}.
 }
 \vspace{-0.5\baselineskip}
    \resizebox{\columnwidth}{!}{%
    \begin{tabular}{ccrrrrrrr}
	\toprule
	\multicolumn{1}{c}{\multirow{3}{*}{Target}}
    & \multicolumn{1}{c}{\multirow{3}{*}{Model}} 
	& \multicolumn{7}{c}{Number of training samples}\\
	& &\multicolumn{3}{c}{$n<d$} 
	& \multicolumn{1}{c}{$n \approx d$} 
	& \multicolumn{3}{c}{$n>d$} \\
	\addlinespace[-0.5mm] 
	\cmidrule(lr){3-5}\cmidrule(lr){6-6}\cmidrule(lr){7-9}
	\addlinespace[-0.5mm] 
	& & \multicolumn{1}{c}{5} 
      & \multicolumn{1}{c}{10}
      & \multicolumn{1}{c}{15}
      & \multicolumn{1}{c}{20} 
	 & \multicolumn{1}{c}{30} 
	 & \multicolumn{1}{c}{40} 
	 & \multicolumn{1}{c}{50}
	 \\
	\midrule
	\multirow{11}{*}{Torque 1} 
    & Direct 
        & 21.3 $\pm$ 2.04 
        & 18.9 $\pm$ 2.11 
        & \empbest{17.4 $\pm$ 1.79} 
        & 15.8 $\pm$ 1.70 
        & 13.7 $\pm$ 1.26
        & 12.2 $\pm$ 1.61 
        & 10.8 $\pm$ 1.23 \\
    & Only source 
        & 24.0 $\pm$ 6.37 
        & 22.3 $\pm$ 3.10 
        & 21.0 $\pm$ 2.49 
        & 19.7 $\pm$ 1.34
        & 18.5 $\pm$ 1.92 
        & 17.6 $\pm$ 1.59 
        & 17.3 $\pm$ 1.31 \\
    & Augmented 
        & 21.8 $\pm$ 2.88 
        & 19.2 $\pm$ 1.37 
        & 17.8 $\pm$ 2.30 
        & \empbest{15.7 $\pm$ 1.53} 
        & \empbest{13.3 $\pm$ 1.19} 
        & \empbest{11.9 $\pm$ 1.37} 
        & \empbest{10.7 $\pm$ 0.954} \\
    & HTL-offset 
        & 23.7 $\pm$ 6.50 
        & 21.2 $\pm$ 3.85 
        & 19.8 $\pm$ 3.23 
        & 17.8 $\pm$ 2.35 
        & 16.2 $\pm$ 3.31 
        & 15.0 $\pm$ 3.16
        & 15.1 $\pm$ 2.76 \\
    & HTL-scale 
        & 23.3 $\pm$ 4.47 
        & 22.1 $\pm$ 5.31 
        & 20.4 $\pm$ 3.84 
        & 18.5 $\pm$ 2.72 
        & 17.6 $\pm$ 2.41 
        & 16.9 $\pm$ 2.10 
        & 16.7 $\pm$ 1.74 \\
    \cdashlinelr{2-9}
    & AffineTL-full 
        & \empbest{21.2 $\pm$ 2.23} 
        & \empbest{18.8 $\pm$ 1.31} 
        & 18.6 $\pm$ 2.83 
        & 15.9 $\pm$ 1.65 
        & 13.7 $\pm$ 1.53 
        & 12.3 $\pm$ 1.45 
        & 11.1 $\pm$ 1.12 \\
    & AffineTL-const 
        & 21.2 $\pm$ 2.21
        & 18.8 $\pm$ 1.44 
        & 17.7 $\pm$ 2.44
        & 15.9 $\pm$ 1.58 
        & 13.4 $\pm$ 1.15 
        & 12.2 $\pm$ 1.54
        & 10.9 $\pm$ 1.02 \\
    \cmidrule{2-9}
    & Fine-tune 
        & 25.0 $\pm$ 7.11 
        & 20.5 $\pm$ 3.33
        & 18.6 $\pm$ 2.10
        & 17.6 $\pm$ 2.55 
        & 14.1 $\pm$ 1.39 
        & 12.6 $\pm$ 1.13 
        & 11.1 $\pm$ 1.03 \\
    & MAML 
        & 29.8 $\pm$ 12.3 
        & 22.5 $\pm$ 3.21 
        & 20.8 $\pm$ 2.12 
        & 20.3 $\pm$ 3.14 
        & 16.7 $\pm$ 3.00
        & 14.4 $\pm$ 1.85
        & 13.4 $\pm$ 1.19 \\
    & $L^2$-SP 
        & 24.9 $\pm$ 7.09
        & 20.5 $\pm$ 3.30
        & 18.8 $\pm$ 2.04 
        & 18.0 $\pm$ 2.45 
        & 14.5 $\pm$ 1.36 
        & 13.0 $\pm$ 1.13 
        & 11.6 $\pm$ 0.983 \\
    & PAC-Net 
        & 25.2 $\pm$ 8.68 
        & 22.7 $\pm$ 5.60
        & 20.7 $\pm$ 2.65 
        & 20.1 $\pm$ 2.16 
        & 18.5 $\pm$ 2.77 
        & 17.6 $\pm$ 1.85 
        & 17.1 $\pm$ 1.38 \\

	\midrule
	\addlinespace[1mm] 
 
	\multirow{11}{*}{Torque 2} 
    & Direct 
        & 15.8 $\pm$ 2.37 
        & 13.0 $\pm$ 1.41 
        & 11.5 $\pm$ 0.985 
        & 10.4 $\pm$ 0.845 
        & 9.20 $\pm$ 0.827 
        & 8.35 $\pm$ 0.802 
        & 7.78 $\pm$ 0.780 \\
    & Only source 
        & 14.9 $\pm$ 1.77 
        & 13.6 $\pm$ 2.51
        & 12.3 $\pm$ 1.77 
        & 11.2 $\pm$ 1.16 
        & 10.6 $\pm$ 1.22 
        & 9.74 $\pm$ 0.920 
        & 9.06 $\pm$ 0.785 \\
    & Augmented 
        & 15.2 $\pm$ 1.95
        & \empbest{12.3 $\pm$ 0.923} 
        & \empbest{11.4 $\pm$ 1.48}
        & \empbest{10.2 $\pm$ 0.813}
        & \empbest{9.07 $\pm$ 0.983} 
        & \empbest{8.06 $\pm$ 0.862} 
        & \empbest{7.23 $\pm$ 0.629} \\
    & HTL-offset 
        & 14.8 $\pm$ 1.71 
        & 13.4 $\pm$ 2.41 
        & 12.2 $\pm$ 1.81
        & 10.9 $\pm$ 1.29 
        & 10.4 $\pm$ 1.37 
        & 9.32 $\pm$ 1.11 
        & 8.78 $\pm$ 0.829 \\
    & HTL-scale 
        & 14.8 $\pm$ 1.71 
        & 13.4 $\pm$ 2.47 
        & 12.2 $\pm$ 1.82 
        & 11.0 $\pm$ 1.32 
        & 10.5 $\pm$ 1.28 
        & 9.39 $\pm$ 1.01 
        & 8.91 $\pm$ 0.946 \\
    \cdashlinelr{2-9}
    & AffineTL-full 
        & 14.7 $\pm$ 1.83 
        & 13.0 $\pm$ 1.34
        & 11.9 $\pm$ 1.22 
        & 11.3 $\pm$ 1.39 
        & 9.38 $\pm$ 0.842 
        & 8.25 $\pm$ 0.932 
        & 7.34 $\pm$ 0.605 \\
    & AffineTL-const 
        & \empbest{14.6 $\pm$ 1.47} 
        & 12.6 $\pm$ 1.09 
        & 11.5 $\pm$ 0.807 
        & 10.5 $\pm$ 1.19
        & 9.28 $\pm$ 0.828 
        & 8.35 $\pm$ 1.06 
        & 7.33 $\pm$ 0.57 \\
    \cmidrule{2-9}
    & Fine-tune 
        & 24.4 $\pm$ 5.87
        & 15.0 $\pm$ 2.01 
        & 13.6 $\pm$ 2.31 
        & 11.9 $\pm$ 1.21 
        & 10.7 $\pm$ 0.897 
        & 9.52 $\pm$ 0.774
        & 8.43 $\pm$ 0.907 \\
    & MAML 
        & 21.8 $\pm$ 7.33 
        & 14.8 $\pm$ 4.51
        & 13.1 $\pm$ 2.69 
        & 11.5 $\pm$ 2.24 
        & 9.77 $\pm$ 1.24 
        & 8.90 $\pm$ 1.10
        & 7.89 $\pm$ 0.713 \\
    & $L^2$-SP 
        & 24.4 $\pm$ 5.87 
        & 15.1 $\pm$ 2.02 
        & 13.6 $\pm$ 2.29 
        & 12.0 $\pm$ 1.22 
        & 10.8 $\pm$ 0.886 
        & 9.70 $\pm$ 0.78 
        & 8.68 $\pm$ 0.868 \\
    & PAC-Net 
        & 24.0 $\pm$ 6.94 
        & 16.7 $\pm$ 4.14 
        & 13.7 $\pm$ 2.36 
        & 13.2 $\pm$ 2.49 
        & 12.4 $\pm$ 2.05 
        & 11.6 $\pm$ 0.844
        & 11.2 $\pm$ 0.706 \\

	\midrule
	\addlinespace[1mm] 

	\multirow{11}{*}{Torque 3} 
    & Direct 
        & 9.91 $\pm$ 1.65 
        & 8.15 $\pm$ 1.01 
        & 7.39 $\pm$ 1.21 
        & 6.84 $\pm$ 0.878 
        & 5.90 $\pm$ 0.850
        & 5.26 $\pm$ 0.774 
        & 4.66 $\pm$ 0.523 \\
    & Only source 
        & 9.00 $\pm$ 1.44 
        & 7.51 $\pm$ 1.05 
        & 6.90 $\pm$ 1.15 
        & 6.51 $\pm$ 0.930 
        & 5.67 $\pm$ 0.890
        & 5.29 $\pm$ 0.840
        & 4.89 $\pm$ 0.604 \\
    & Augmented 
        & 9.47 $\pm$ 1.35 
        & 7.72 $\pm$ 1.05 
        & 6.99 $\pm$ 1.25 
        & 6.29 $\pm$ 0.967 
        & 5.42 $\pm$ 0.938 
        & \empbest{4.76 $\pm$ 0.826} 
        & \empbest{4.32 $\pm$ 0.592} \\
    & HTL-offset 
        & \empbest{8.96 $\pm$ 1.42} 
        & 7.47 $\pm$ 1.06 
        & 6.88 $\pm$ 1.15
        & 6.39 $\pm$ 0.952 
        & 5.58 $\pm$ 0.856 
        & 5.18 $\pm$ 0.821 
        & 4.83 $\pm$ 0.603 \\
    & HTL-scale 
        & 9.05 $\pm$ 1.40 
        & 7.49 $\pm$ 1.08
        & 6.89 $\pm$ 1.18
        & 6.63 $\pm$ 1.03 
        & 5.60 $\pm$ 0.955 
        & 5.21 $\pm$ 0.836 
        & 4.86 $\pm$ 0.503 \\
    \cdashlinelr{2-9}
    & AffineTL-full
        & 9.24 $\pm$ 1.46
        & \empbest{7.45 $\pm$ 1.25}
        & 6.85 $\pm$ 1.23 
        & 6.28 $\pm$ 0.930 
        & 5.54 $\pm$ 1.15 
        & 4.89 $\pm$ 0.907
        & 4.46 $\pm$ 0.733 \\
    & AffineTL-const 
        & 9.08 $\pm$ 1.21 
        & 7.55 $\pm$ 0.974 
        & \empbest{6.67 $\pm$ 1.00} 
        & \empbest{6.17 $\pm$ 0.916}
        & \empbest{5.42 $\pm$ 0.971} 
        & 4.85 $\pm$ 0.752 
        & 4.42 $\pm$ 0.614 \\
    \cmidrule{2-9}
    & Fine-tune
        & 9.00 $\pm$ 2.14
        & 7.38 $\pm$ 1.09 
        & 6.72 $\pm$ 1.01 
        & \ttest{5.91 $\pm$ 0.734}
        & \ttest{5.26 $\pm$ 0.541}
        & 4.86 $\pm$ 0.488
        & 4.41 $\pm$ 0.325 \\
    & MAML
        & 9.50 $\pm$ 4.94 
        & \ttest{7.11 $\pm$ 0.966} 
        & \ttest{6.44 $\pm$ 1.01} 
        & \ttest{5.92 $\pm$ 0.793}
        & \ttest{5.22 $\pm$ 0.626}
        & 4.87 $\pm$ 0.539 
        & 4.79 $\pm$ 0.525 \\
    & $L^2$-SP 
        & 9.00 $\pm$ 2.14 
        & 7.39 $\pm$ 1.08 
        & 6.73 $\pm$ 1.02
        & \ttest{5.91 $\pm$ 0.73} 
        & 5.39 $\pm$ 0.633
        & 4.89 $\pm$ 0.493 
        & 4.46 $\pm$ 0.319 \\
    & PAC-Net 
        & 9.14 $\pm$ 2.11 
        & \ttest{7.31 $\pm$ 1.03}
        & \ttest{6.33 $\pm$ 0.841} 
        & \ttest{5.96 $\pm$ 0.926} 
        & 5.34 $\pm$ 0.633 
        & 5.17 $\pm$ 0.474
        & 5.05 $\pm$ 0.371 \\

	\midrule
	\addlinespace[1mm] 

	\multirow{11}{*}{Torque 4} 
    & Direct 
        & 14.2 $\pm$ 2.30
        & 11.1 $\pm$ 2.28 
        & 9.49 $\pm$ 2.19
        & 7.78 $\pm$ 1.02 
        & 6.86 $\pm$ 0.768 
        & 6.13 $\pm$ 0.714 
        & 5.48 $\pm$ 0.592 \\
    & Only source 
        & 13.1 $\pm$ 3.36 
        & 9.62 $\pm$ 2.05 
        & 8.38 $\pm$ 2.06 
        & 7.06 $\pm$ 1.32 
        & 6.36 $\pm$ 1.24 
        & 5.79 $\pm$ 0.768
        & 5.37 $\pm$ 0.897 \\
    & Augmented 
        & 13.5 $\pm$ 2.83 
        & 9.69 $\pm$ 1.89 
        & 8.51 $\pm$ 1.84 
        & \ttest{6.96 $\pm$ 1.03} 
        & \ttest{6.09 $\pm$ 0.931}
        & \ttest{\empbest{5.39 $\pm$ 0.685}} 
        & \ttest{\empbest{4.87 $\pm$ 0.618}} \\
    & HTL-offset 
        & \empbest{13 $\pm$ 3.38} 
        & 9.62 $\pm$ 2.05 
        & 8.34 $\pm$ 2.00 
        & 7.02 $\pm$ 1.24 
        & 6.26 $\pm$ 1.17 
        & 5.76 $\pm$ 0.764
        & 5.36 $\pm$ 0.897 \\
    & HTL-scale 
        & 13.0 $\pm$ 3.35 
        & 9.63 $\pm$ 2.07
        & \empbest{8.30 $\pm$ 1.95}
        & 7.01 $\pm$ 1.16 
        & 6.30 $\pm$ 1.17 
        & 5.77 $\pm$ 0.758 
        & 5.37 $\pm$ 0.902 \\
    \cdashlinelr{2-9}
    & AffineTL-full
        & 13.0 $\pm$ 2.69 
        & 9.48 $\pm$ 2.10 
        & 8.38 $\pm$ 1.85
        & 7.14 $\pm$ 1.62 
        & \ttest{5.91 $\pm$ 0.838} 
        & \ttest{5.45 $\pm$ 0.777} 
        & \ttest{4.94 $\pm$ 0.603} \\
    & AffineTL-const
        & 13.2 $\pm$ 3.16 
        & \ttest{\empbest{9.32 $\pm$ 1.99}} 
        & 8.39 $\pm$ 1.84 
        & \ttest{\empbest{6.88 $\pm$ 1.00}} 
        & \ttest{\empbest{5.85 $\pm$ 0.710}}
        & \ttest{5.55 $\pm$ 0.679} 
        & \ttest{4.94 $\pm$ 0.581} \\
    \cmidrule{2-9}
    & Fine-tune 
        & \ttest{11.7 $\pm$ 2.70} 
        & \ttest{8.24 $\pm$ 1.31} 
        & \ttest{6.71 $\pm$ 1.02} 
        & \ttest{5.90 $\pm$ 0.971} 
        & \ttest{5.17 $\pm$ 0.785} 
        & \ttest{4.59 $\pm$ 0.442} 
        & \ttest{4.21 $\pm$ 0.376} \\
    & MAML 
        & 14.3 $\pm$ 7.75
        & 10.9 $\pm$ 3.44 
        & 9.55 $\pm$ 1.99 
        & 9.41 $\pm$ 2.33
        & 7.98 $\pm$ 2.36 
        & 6.70 $\pm$ 1.25 
        & 6.18 $\pm$ 1.35 \\
    & $L^2$-SP
        & \ttest{11.7 $\pm$ 2.70}
        & \ttest{8.24 $\pm$ 1.31} 
        & \ttest{6.73 $\pm$ 1.01}
        & \ttest{5.92 $\pm$ 0.959} 
        & \ttest{5.22 $\pm$ 0.765} 
        & \ttest{4.67 $\pm$ 0.45} 
        & \ttest{4.28 $\pm$ 0.363} \\
    & PAC-Net 
        & 11.2 $\pm$ 5.24 
        & \ttest{8.84 $\pm$ 2.75} 
        & \ttest{7.64 $\pm$ 1.17} 
        & 7.34 $\pm$ 1.56 
        & 6.77 $\pm$ 0.966
        & 6.29 $\pm$ 0.536 
        & 6.02 $\pm$ 0.446 \\

	\midrule
	\addlinespace[1mm] 
	\multirow{11}{*}{Torque 5} 
    & Direct 
        & 1.07 $\pm$ 0.157 
        & 0.993 $\pm$ 0.0903
        & 0.910 $\pm$ 0.119 
        & \empbest{0.847 $\pm$ 0.129} 
        & \empbest{0.744 $\pm$ 0.113} 
        & \empbest{0.686 $\pm$ 0.0996}
        & \empbest{0.623 $\pm$ 0.0944} \\
    & Only source 
        & 1.15 $\pm$ 0.214 
        & 1.04 $\pm$ 0.0775 
        & 0.998 $\pm$ 0.145 
        & 0.975 $\pm$ 0.133 
        & 0.863 $\pm$ 0.111 
        & 0.826 $\pm$ 0.155 
        & 0.775 $\pm$ 0.106 \\
    & Augmented 
        & \empbest{1.04 $\pm$ 0.113}
        & 0.987 $\pm$ 0.109 
        & 0.907 $\pm$ 0.120 
        & 0.874 $\pm$ 0.136 
        & 0.755 $\pm$ 0.130
        & 0.710 $\pm$ 0.110 
        & 0.637 $\pm$ 0.0893 \\
    & HTL-offset 
        & 1.14 $\pm$ 0.221 
        & 1.02 $\pm$ 0.0864 
        & 0.965 $\pm$ 0.157 
        & 0.925 $\pm$ 0.141 
        & 0.837 $\pm$ 0.104
        & 0.800 $\pm$ 0.156 
        & 0.738 $\pm$ 0.101 \\
    & HTL-scale 
        & 1.13 $\pm$ 0.194 
        & 1.01 $\pm$ 0.0786 
        & 0.980 $\pm$ 0.177
        & 0.914 $\pm$ 0.132
        & 0.830 $\pm$ 0.114 
        & 0.844 $\pm$ 0.171
        & 0.785 $\pm$ 0.123 \\
    \cdashlinelr{2-9}
    & AffineTL-full
        & 1.04 $\pm$ 0.121 
        & 0.989 $\pm$ 0.175
        & 0.907 $\pm$ 0.162 
        & 0.860 $\pm$ 0.170 
        & 0.747 $\pm$ 0.117 
        & 0.691 $\pm$ 0.0924 
        & 0.654 $\pm$ 0.0716 \\
    & AffineTL-const 
        & 1.05 $\pm$ 0.106 
        & \empbest{0.974 $\pm$ 0.102} 
        & \empbest{0.899 $\pm$ 0.123} 
        & 0.854 $\pm$ 0.121 
        & 0.756 $\pm$ 0.106 
        & 0.700 $\pm$ 0.0869 
        & 0.638 $\pm$ 0.0796 \\
    \cmidrule{2-9}
    & Fine-tune 
        & 1.22 $\pm$ 0.356 
        & 1.04 $\pm$ 0.105 
        & 0.976 $\pm$ 0.0878 
        & 0.913 $\pm$ 0.137 
        & 0.749 $\pm$ 0.111 
        & 0.688 $\pm$ 0.103
        & 0.598 $\pm$ 0.0697 \\
    & MAML 
        & 1.45 $\pm$ 0.479 
        & 1.18 $\pm$ 0.183 
        & 1.07 $\pm$ 0.208 
        & 0.999 $\pm$ 0.193
        & 0.816 $\pm$ 0.211 
        & 0.703 $\pm$ 0.124 
        & 0.613 $\pm$ 0.0634 \\
    & $L^2$-SP
        & 1.22 $\pm$ 0.355 
        & 1.04 $\pm$ 0.105 
        & 0.973 $\pm$ 0.0873 
        & 0.917 $\pm$ 0.133 
        & 0.756 $\pm$ 0.109 
        & 0.699 $\pm$ 0.113 
        & 0.606 $\pm$ 0.0644 \\
    & PAC-Net 
        & 1.27 $\pm$ 0.319
        & 1.10 $\pm$ 0.115 
        & 1.03 $\pm$ 0.108 
        & 0.985 $\pm$ 0.145
        & 0.881 $\pm$ 0.151 
        & 0.806 $\pm$ 0.118 
        & 0.781 $\pm$ 0.124 \\

	\midrule
	\addlinespace[1mm] 

	\multirow{11}{*}{Torque 6} 
    & Direct 
        & 1.86 $\pm$ 0.248 
        & 1.67 $\pm$ 0.192 
        & \empbest{1.50 $\pm$ 0.162} 
        & \empbest{1.36 $\pm$ 0.159}
        & \empbest{1.22 $\pm$ 0.163} 
        & 1.12 $\pm$ 0.102 
        & 1.05 $\pm$ 0.0916 \\
    & Only source 
        & 1.91 $\pm$ 0.230 
        & 1.87 $\pm$ 0.357 
        & 1.76 $\pm$ 0.179 
        & 1.64 $\pm$ 0.190
        & 1.53 $\pm$ 0.255 
        & 1.36 $\pm$ 0.153
        & 1.26 $\pm$ 0.0883 \\
    & Augmented 
        & 1.86 $\pm$ 0.180 
        & \empbest{1.66 $\pm$ 0.17} 
        & 1.55 $\pm$ 0.219 
        & 1.45 $\pm$ 0.231 
        & 1.27 $\pm$ 0.265 
        & \empbest{1.11 $\pm$ 0.130} 
        & \empbest{1.01 $\pm$ 0.0944} \\
    & HTL-offset 
        & 1.88 $\pm$ 0.214 
        & 1.81 $\pm$ 0.369 
        & 1.67 $\pm$ 0.246 
        & 1.54 $\pm$ 0.239 
        & 1.46 $\pm$ 0.267 
        & 1.33 $\pm$ 0.142 
        & 1.21 $\pm$ 0.133 \\
    & HTL-scale 
        & 2.05 $\pm$ 0.649 
        & 1.88 $\pm$ 0.389 
        & 1.71 $\pm$ 0.241 
        & 1.60 $\pm$ 0.308 
        & 1.62 $\pm$ 0.474
        & 1.37 $\pm$ 0.158 
        & 1.25 $\pm$ 0.0922 \\
    \cdashlinelr{2-9}
    & AffineTL-full
        & \empbest{1.82 $\pm$ 0.232}
        & 1.74 $\pm$ 0.209
        & 1.55 $\pm$ 0.242
        & 1.43 $\pm$ 0.231
        & 1.27 $\pm$ 0.230 
        & 1.13 $\pm$ 0.122
        & 1.06 $\pm$ 0.191 \\
    & AffineTL-const 
        & 1.83 $\pm$ 0.173 
        & 1.68 $\pm$ 0.207 
        & 1.55 $\pm$ 0.236 
        & 1.40 $\pm$ 0.227 
        & 1.23 $\pm$ 0.198
        & 1.12 $\pm$ 0.113
        & 1.02 $\pm$ 0.0953 \\
    \cmidrule{2-9}
    & Fine-tune 
        & 2.41 $\pm$ 0.375 
        & 2.03 $\pm$ 0.387 
        & 1.71 $\pm$ 0.463 
        & 1.49 $\pm$ 0.297 
        & 1.27 $\pm$ 0.257 
        & 1.15 $\pm$ 0.110 
        & 1.07 $\pm$ 0.0817 \\
    & MAML 
        & 2.69 $\pm$ 0.676 
        & 2.17 $\pm$ 0.511 
        & 1.96 $\pm$ 0.526 
        & 1.68 $\pm$ 0.373 
        & 1.42 $\pm$ 0.365 
        & 1.23 $\pm$ 0.111
        & 1.16 $\pm$ 0.0829 \\
    & $L^2$-SP
        & 2.41 $\pm$ 0.375 
        & 2.03 $\pm$ 0.371 
        & 1.72 $\pm$ 0.455 
        & 1.49 $\pm$ 0.299 
        & 1.28 $\pm$ 0.254 
        & 1.16 $\pm$ 0.108 
        & 1.09 $\pm$ 0.0777 \\
    & PAC-Net 
        & 2.47 $\pm$ 0.385 
        & 2.22 $\pm$ 0.550 
        & 2.22 $\pm$ 0.599 
        & 1.99 $\pm$ 0.372 
        & 1.91 $\pm$ 0.355 
        & 1.74 $\pm$ 0.144 
        & 1.69 $\pm$ 0.0595 \\

	\midrule
	\addlinespace[1mm] 
	\multirow{11}{*}{Torque 7} 
    & Direct 
        & 2.66 $\pm$ 0.307 
        & 2.13 $\pm$ 0.420 
        & 1.85 $\pm$ 0.418 
        & 1.54 $\pm$ 0.353 
        & 1.32 $\pm$ 0.200 
        & 1.18 $\pm$ 0.138 
        & 1.05 $\pm$ 0.111 \\
    & Only source 
        & 2.31 $\pm$ 0.618 
        & \ttest{1.73 $\pm$ 0.560} 
        & \ttest{1.49 $\pm$ 0.513} 
        & \ttest{1.22 $\pm$ 0.269} 
        & \ttest{1.09 $\pm$ 0.232} 
        & \ttest{0.969 $\pm$ 0.144} 
        & \ttest{0.927 $\pm$ 0.170} \\
    & Augmented 
        & 2.47 $\pm$ 0.406 
        & 1.90 $\pm$ 0.515 
        & 1.67 $\pm$ 0.552 
        & \ttest{1.31 $\pm$ 0.214} 
        & 1.16 $\pm$ 0.225 
        & \ttest{0.984 $\pm$ 0.149} 
        & \ttest{0.897 $\pm$ 0.138} \\
    & HTL-offset 
        & 2.29 $\pm$ 0.621 
        & \ttest{\empbest{1.69 $\pm$ 0.507}} 
        & \ttest{1.49 $\pm$ 0.513} 
        & \ttest{1.22 $\pm$ 0.269} 
        & \ttest{1.09 $\pm$ 0.233} 
        & \ttest{0.969 $\pm$ 0.144}
        & \ttest{0.925 $\pm$ 0.171} \\
    & HTL-scale 
        & 2.32 $\pm$ 0.599 
        & \ttest{1.71 $\pm$ 0.516}
        & 1.51 $\pm$ 0.513 
        & \ttest{1.24 $\pm$ 0.271}
        & \ttest{1.12 $\pm$ 0.234} 
        & \ttest{0.999 $\pm$ 0.175}
        & 0.948 $\pm$ 0.172 \\
    \cdashlinelr{2-9}
    & AffineTL-full 
        & \ttest{\empbest{2.23 $\pm$ 0.554}} 
        & \ttest{1.71 $\pm$ 0.501} 
        & \ttest{\empbest{1.45 $\pm$ 0.458}}
        & \ttest{1.21 $\pm$ 0.256} 
        & \ttest{1.06 $\pm$ 0.219} 
        & \ttest{0.974 $\pm$ 0.164}
        & \ttest{\empbest{0.870 $\pm$ 0.121}} \\
    & AffineTL-const 
        & \ttest{2.30 $\pm$ 0.565}
        & \ttest{1.73 $\pm$ 0.420} 
        & \ttest{1.48 $\pm$ 0.527}
        & \ttest{\empbest{1.20 $\pm$ 0.243}}
        & \ttest{\empbest{1.04 $\pm$ 0.217}} 
        & \ttest{\empbest{0.963 $\pm$ 0.161}}
        & \ttest{0.884 $\pm$ 0.136} \\
    \cmidrule{2-9}
    & Fine-tune 
        & \ttest{2.33 $\pm$ 0.511} 
        & \ttest{1.62 $\pm$ 0.347} 
        & \ttest{1.35 $\pm$ 0.340} 
        & \ttest{1.12 $\pm$ 0.165} 
        & \ttest{0.959 $\pm$ 0.12} 
        & \ttest{0.848 $\pm$ 0.0824}
        & \ttest{0.790 $\pm$ 0.0547} \\
    & MAML 
        & 2.54 $\pm$ 1.29 
        & 1.90 $\pm$ 0.507
        & 1.67 $\pm$ 0.313 
        & 1.63 $\pm$ 0.282 
        & 1.28 $\pm$ 0.272 
        & 1.20 $\pm$ 0.199 
        & 1.06 $\pm$ 0.111 \\
    & $L^2$-SP 
        & \ttest{2.33 $\pm$ 0.509}
        & \ttest{1.65 $\pm$ 0.378} 
        & \ttest{1.35 $\pm$ 0.340}
        & \ttest{1.12 $\pm$ 0.165} 
        & \ttest{0.968 $\pm$ 0.114} 
        & \ttest{0.858 $\pm$ 0.0818} 
        & \ttest{0.802 $\pm$ 0.0535} \\
    & PAC-Net 
        & 2.24 $\pm$ 0.706 
        & \ttest{1.61 $\pm$ 0.394} 
        & \ttest{1.43 $\pm$ 0.389}
        & \ttest{1.24 $\pm$ 0.177}
        & \ttest{1.18 $\pm$ 0.100} 
        & 1.13 $\pm$ 0.0726 
        &1.100 $\pm$ 0.0589 \\

    \bottomrule
	\end{tabular}
} 
	\label{tbl:sarcos-apd}
\end{table*}
\clearpage

\subsection{Evaluation of Scientific Papers}
\label{sec:apd-scidoc}

\subsubsection{Data}
We used SciRepEval benchmark dataset for scientific documents, proposed in \citep{williams2006gaussian}. 
This dataset comprises abstracts, review scores, and decision statuses of papers submitted to various machine learning conferences. 
We conducted two primary tasks: predicting the average review score (a regression task) and determining the acceptance or rejection status of each paper (a binary classification task). 
The original input $\bmx$, was represented as a two-gram bag-of-words vector derived from the abstract. 
As for source features $\bmfs$, we employed text embeddings from the abstracts, which were generated by various pre-trained language models, including BERT \citep{devlin2018bert}, SciBERT \citep{Beltagy2019SciBERTAP}, T5 \citep{2020t5}, and GPT-3 \citep{brown2020language}. 
When building the vocabulary for the bag-of-words, we ignore phrases with document frequencies strictly higher than 0.9 or strictly lower than 0.01. 
Additionally, we eliminated certain stop-words using the default settings in scikit-learn \citep{scikit-learn}.
The sentences containing URLs were removed from the abstracts because accepted papers tend to include GitHub links in their abstracts after acceptance, which may cause leakage of information to the prediction.
The models were trained on a dataset comprising 8,166 instances, and their performance were subsequently evaluated on a test dataset of 2,043 instances.

\subsubsection{Model Definition and Hyperparamter Search}
For both the affine model transfer and feature extraction, we employed neural networks with ReLU activation and dropout layer with 0.1 dropout rate. 
The parameters were optimized using Adagrad \citep{duchi2011adaptive} with 0.01 learning rate.

\paragraph{Affine Model Transfer}
For functions $g_1$ and $g_2$ in the affine model transfer, we used a neural network composed of layers with widths 128, 64, 32, and 16, wherein the source features $\bmfs$ were used as inputs. 
The number of layers of each width was determined based on Bayesian optimization.
Sigmoid activation was employed to the output of $g_2$ in order to facilitate the interpretation of $g_3$. 
For $g_3$, we employed a linear model with the input $\bmx$. 
To prevent overfitting and promote model simplicity, we applied $\ell_1$ regularization to the parameters of $g_3$ with a regularization parameter of 0.01.
The final output of the model was computed as $g_1+g_2 \cdot g_3$. 
In the case of the binary classification task, we applied sigmoid activation function to this final output.

\paragraph{Feature Extraction}
As in the affine model transfer, we used a neural network composed of layers with widths 128, 64, 32, and 16, wherein the source features $\bmfs$ were used as inputs. 
The number of layers of each width was determined based on Bayesian optimization.
In the case of the binary classification task, we applied sigmoid activation function to the final output.

\section{Proofs}
\subsection{Proof of Theorem 2.4}
\label{sec:apd-proof-mainTheorem}
\begin{proof}
    According to Assumption \ref{asmp:consist}, it holds that for any $p_t(y|\bmx)$,
    \begin{equation}
        \label{eq:proof-consist}
        \psi_{f_s} \biggl( \int \phi_{f_s}(y) p_t(y|\bmx) {\rm d}y \biggr) 
        = \int y p_t(y|\bmx) {\rm d}y.
    \end{equation}

    (i)
    Let $\delta_{y_0}$ be the Dirac delta function supported on $y_0$. Substituting $p_t(y|\bmx) = \delta_{y_0}$ into Eq.~\eqref{eq:proof-consist}, we have 
    \[
        \psi_{f_s} (\phi_{f_s}(y_0)) = y_0 \ (\forall y_0 \in \mathcal{Y}).
    \]
    Under Assumption \ref{asmp:consist}, this implies the property (i).

    (ii)
    For simplicity, we assume the inputs $\bmx$ are fixed and $p_t(y|\bmx) > 0$.
    Applying the property (i) to Eq.~\eqref{eq:proof-consist} yields
    \[
    \int \phi_{f_s}(y) p_t(y|\bmx) {\rm d}y = \phi_{f_s} \biggl(\int yp_t(y|\bmx){\rm d}y \biggr).
    \]
    We consider a two-component mixture $p_t(y|\bmx) = (1-\epsilon) q(y|\bmx) + \epsilon h(y|\bmx)$ with a mixing rate $\epsilon \in [0,1]$, where $q$ and $h$ denote arbitrary probability density functions. 
    Then, we have
    \[
    \int \phi_{f_s}(y) \big\{ (1-\epsilon) q(y|\bmx) + \epsilon h(y|\bmx) \big\} {\rm d}y 
    = \phi_{f_s} \biggl(\int y \big\{ (1-\epsilon) q(y|\bmx) + \epsilon h(y|\bmx) \big\} {\rm d}y \biggr).
    \]
    Taking the derivative at $\epsilon = 0$, we have 
    \[
        \int \phi_{f_s}(y) \big\{ h(y|\bmx)-q(y|\bmx) \big\} {\rm d}y
        = \phi_{f_s}' \biggl( \int y q(y|\bmx) {\rm d}y \biggr) \biggl( \int y\big\{h(y|\bmx)-q(y|\bmx)\big\} {\rm d}y\biggr),
    \]
    which yields 
    \begin{equation}
        \label{eq:proof-deriv}
        \int \big\{ h(y|\bmx)-q(y|\bmx) \big\} \bigl\{ \phi_{f_s}(y) - \phi_{f_s}' \bigl( \mathbb{E}_q[Y|\bm{X}=\bmx] \bigr)y \bigr\} {\rm d}y 
        = 0.
    \end{equation}
    For Eq.~\eqref{eq:proof-deriv} to hold for any $q$ and $h$,
    $
        \phi_{f_s}(y) - \phi_{f_s}' \bigl( \mathbb{E}_q[Y|\bm{X}=\bmx] \bigr)y
    $
    must be independent of $y$. 
    Thus, the function $\phi_{f_s}$ and its inverse $\psi_{f_s} = \phi_{f_s}^{-1}$ are limited to affine transformations with respect to $y$. Since $\phi$ depends on $y$ and $\bmfs(\bmx)$, it takes the form $\phi(y,\bmfs(\bmx)) = g_1(\bmfs(\bmx)) + g_2(\bmfs(\bmx)) y$. 
\end{proof}
\subsection{Proof of Theorem 4.1}
\label{sec:apd-proof-gen}
To bound the generalization error, we use the empirical and population Rademacher complexity $\hat{\mathfrak{R}}_S(\mathcal{F})$ and $\mathfrak{R}(\mathcal{F})$ of hypothesis class $\mathcal{F}$, defined as:
\[
    \hat{\mathfrak{R}}_S(\mathcal{F}) = \mathbb{E}_\sigma \sup_{f \in \mathcal{F}} \frac{1}{n} \sum_{i=1}^n \sigma_i f(\bmx_i),
    \hspace{50pt}
    \mathfrak{R}(\mathcal{F}) = \mathbb{E}_S \hat{\mathfrak{R}}_S(\mathcal{F}),
\]
where $\{ \sigma_i \}_{i=1}^n$ is a set of Rademacher variables that are independently distributed and each take
one of the values in $\{-1, +1\}$ with equal probability, and $S$ denotes a set of samples.
The following proof is based on the one of Theorem 11 shown in \citep{kuzborskij2017fast}.
\begin{proof}[Proof of Theorem 4.1]
For any hypothesis class $\mathcal{F}$ with feature map $\Phi$ where $\| \Phi \|^2 \leq 1$, the following inequality holds:
\[
    \mathbb{E}_\sigma \sup_{\| \bm{\theta} \|^2 \leq \Lambda} \frac{1}{n} \sum_{i=1}^n \sigma_i \langle \bm{\theta} , \Phi(\bmx_i) \rangle \leq \sqrt{\frac{\Lambda}{n}}.
\]
The proof is given, for example, in Theorem 6.12 of \citep{Mohri2012FoundationsOM}.
Thus, the empirical Rademacher complexity of $\tilde{\mathcal{H}}$ is bounded as
\begin{align}
\label{eq:apd-gen-rad}
    \hat{\mathfrak{R}}_S(\tilde{\mathcal{H}}) 
        &= 
            \mathbb{E}_\sigma \underset{ 
                \tiny{\begin{array}{c} 
                    \|\bmal\|^2_{\mathcal{H}_1} \leq \lambda_{\bmal}^{-1} \hat{R}_s, \\
                    \|\bmbe\|^2_{\mathcal{H}_2} \leq \lambda_{\bmbe}^{-1} \hat{R}_s, \\
                    \|\bmga\|^2_{\mathcal{H}_3} \leq \lambda_{\bmga}^{-1} \hat{R}_s 
                \end{array}}
                }{{\rm sup}} 
            \frac{1}{n} \sum_{i=1}^n \sigma_i \bigg\{
                \langle \bmal, \Phi_1(\bmfs(\bmx_i)) \rangle_{\mathcal{H}_1}
                + \langle \bmbe, \Phi_2(\bmfs(\bmx_i)) \rangle_{\mathcal{H}_2}
                \langle \bmga , \Phi(\bmx_i) \rangle_{\mathcal{H}_3}
            \bigg\} \nonumber\\
        &\leq
            \mathbb{E}_\sigma \underset{
                \|\bmal\|^2_{\mathcal{H}_1} \leq \lambda_{\bmal}^{-1} \hat{R}_s
            }{{\rm sup}} 
            \frac{1}{n} \sum_{i=1}^n \sigma_i \langle \bmal, \Phi_1(\bmfs(\bmx_i)) \rangle_{\mathcal{H}_1} \nonumber\\
            &\hspace{100pt}
            + \underset{
                \tiny{\begin{array}{c} 
                    \|\bmbe\|^2_{\mathcal{H}_2}\leq \lambda_{\bmbe}^{-1} \hat{R}_s, \\
                    \|\bmga\|^2_{\mathcal{H}_3} \leq \lambda_{\bmga}^{-1} \hat{R}_s
                \end{array}}
            }{{\rm sup}}
            \frac{1}{n} \sum_{i=1}^n \sigma_i \langle \bmbe \otimes \bmga , \Phi_2(\bmfs(\bmx_i)) \otimes \Phi(\bmx_i) \rangle_{\mathcal{H}_2 \otimes \mathcal{H}_3}  \nonumber\\
        &\leq
            \mathbb{E}_\sigma \underset{
                \|\bmal\|^2_{\mathcal{H}_1} \leq \lambda_{\bmal}^{-1} \hat{R}_s
            }{{\rm sup}} 
            \frac{1}{n} \sum_{i=1}^n \sigma_i \langle \bmal, \Phi_1(\bmfs(\bmx_i)) \rangle_{\mathcal{H}_1} \nonumber\\
            &\hspace{70pt}
            + \underset{
                \| \bmbe \otimes \bmga \|^2_{\mathcal{H}_2 \otimes {\mathcal{H}_3}} \leq \lambda_{\bmbe}^{-1} \lambda_{\bmga}^{-1} \hat{R}_s^2
            }{{\rm sup}}
            \frac{1}{n} \sum_{i=1}^n \sigma_i \langle \bmbe \otimes \bmga , \Phi_2(\bmfs(\bmx_i)) \otimes \Phi(\bmx_i) \rangle_{\mathcal{H}_2 \otimes {\mathcal{H}_3}}  \nonumber\\
        &\leq
            \sqrt{\frac{\hat{R}_s}{\lambda_{\bmal} n}} 
            + \sqrt{ 
                \frac{\hat{R}_s^2}{\lambda_{\bmbe} \lambda_{\bmga} n}
            }\\
        &\leq 
            \sqrt{\frac{\hat{R}_s}{n}} \bigg\{
                \sqrt{\frac{1}{\lambda_{\bmal}}} 
                + \sqrt{ \frac{L}{\lambda_{\bmbe} \lambda_{\bmga}}}
            \bigg\}.\nonumber
\end{align}
The first inequality follows from the subadditivity of supremum.
The last inequality follows from the fact that 
$
    \hat{R}_s \leq P_n \ell(y, \langle 0 , \Phi_1 \rangle ) + \lambda_{\bmal} \|0\|^2 \leq L
$.

Let 
$
    C = \sqrt{\frac{1}{\lambda_{\bmal}}} 
    + \sqrt{ \frac{L}{\lambda_{\bmbe} \lambda_{\bmga}}},
$
and applying Talagrand's lemma \citep{Mohri2012FoundationsOM} and Jensen's inequality, we obtain 
\begin{equation*}
    \mathfrak{R}(\mathcal{L}) 
    =
        \mathbb{E} \hat{\mathfrak{R}}_S(\mathcal{L})
    \leq
        \mu_\ell \mathbb{E} \hat{\mathfrak{R}}_S(\tilde{\mathcal{H}})
    \leq
        C\mu_\ell \mathbb{E} \sqrt{\frac{\hat{R}_s}{n}}
    \leq
        C\mu_\ell \sqrt{\frac{\mathbb{E}\hat{R}_s}{n}}.
\end{equation*}
To apply Corollary 3.5 of \citep{Bartlett2005LocalRC}, we should solve the equation
\begin{equation}
\label{eq:apd-gen-r}
    r = C\mu_\ell \sqrt{\frac{r}{n}},
\end{equation}
and obtain
$
    r^* = \frac{\mu_\ell^2 C^2}{n}.
$
Thus, for any $\eta>0$, with probability at least $1-e^{-\eta}$, there exists a constant $C'>0$ that satisfies
\begin{equation}
\label{eq:apd-gen-pn}
    P_n \ell (y, \, h) 
        \leq C' \bigg( 
            \mathbb{E}\hat{R}_s
            + \frac{\mu_\ell^2 C^2}{n}
            + \frac{\eta}{n}
        \bigg)
        \leq C' \bigg( 
            R_s
            + \frac{\mu_\ell^2 C^2}{n}
            + \frac{\eta}{n}
        \bigg).
\end{equation}
Note that, for the last inequality, because
$
    \hat{R}_s \leq P_n\ell(y, \langle \bmal, \Phi_1 \rangle) + \lambda_{\bmal} \| \bmal \|^2
$
for any $\bmal$, taking the expectation of both sides yields
$
    \mathbb{E}\hat{R}_s \leq P\ell(y, \langle \bmal, \Phi_1 \rangle) + \lambda_{\bmal} \| \bmal \|^2
$
, and this gives 
$
    \mathbb{E}\hat{R}_s \leq \inf_{\bmal} \{ P\ell(y, \langle \bmal, \Phi_1 \rangle) + \lambda_{\bmal} \| \bmal \|^2 \} = R_s
$.
Consequently, applying Theorem 1 of \citep{srebro2010smoothness}, we have
\begin{equation}
\label{eq:apd-gen-result}
    P \ell (y, \, h(\bmx)) \leq 
        P_n \ell (y, \, h(\bmx))
        + \ {\tilde O} \bigg( 
            \bigg( 
                \sqrt{\frac{R_s}{n}}
                +  \frac{\mu_\ell C + \sqrt{\eta}}{n}
            \bigg) \bigg( 
                \sqrt{L}C + \sqrt{L\eta}
            \bigg)
            + \frac{C^2L + L\eta}{n}
        \bigg).
\end{equation}
Here, we use
$
\hat{\mathfrak{R}}_S(\tilde{\mathcal{H}}) \leq C \sqrt{\frac{\hat{R}_s}{n}} \leq C \sqrt{\frac{L}{n}}.
$
\end{proof}
\begin{remark}
As in \citep{kuzborskij2017fast}, without the estimation of the parameters $\bmal$ and $\bmbe$, the right-hand side of Eq.~\eqref{eq:apd-gen-rad} becomes
$
    \frac{1}{\sqrt{n}} \Big( c_1 + c_2 \sqrt{\hat{R}_s} \Big)
$
with some constant $c_1 > 0$ and $c_2 > 0$, and Eq.~\eqref{eq:apd-gen-r} becomes
\begin{equation*}
    r = \frac{1}{\sqrt{n}} (c_1 + c_2 \sqrt{r}).
\end{equation*}
This yields the solution
\begin{align*}
    r^* = \Bigg( \frac{c_2}{2\sqrt{n}} + \sqrt{\bigg( \frac{c_2}{2\sqrt{n}} \bigg)^2 + \frac{c_1}{\sqrt{n}}} \Bigg)^2 \leq \frac{c_2^2}{n} + \frac{c_1}{\sqrt{n}},
\end{align*}
where we use the inequality $\sqrt{x} + \sqrt{x+y} \leq \sqrt{4x+2y}$.
Thus, Eq.~\eqref{eq:apd-gen-pn} becomes
\begin{equation*}
    P_n \ell (y, \, h) 
        \leq C' \bigg( 
            R_s
            + \frac{c_2^2}{n} + \frac{c_1}{\sqrt{n}}
            + \frac{\eta}{n}
        \bigg).
\end{equation*}
Consequently, we have the following result:
\begin{align*}
    P \ell &(y, \, h(\bmx)) \leq 
        P_n \ell (y, \, h(\bmx)) \\
        &+ \ {\tilde O} \bigg( 
            \bigg( 
                \sqrt{\frac{R_s}{n}}
                +  \frac{\sqrt{c_1}}{n^{3/4}} + \frac{c_2+\sqrt{\eta}}{n}
            \bigg) \bigg( 
                c_1 + c_2\sqrt{L} + \sqrt{L\eta}
            \bigg)
            + \frac{(c_1+c_2\sqrt{L})^2 + L\eta}{n}
        \bigg).
\end{align*}
This means that even if $R_s = \tilde{O}(n^{-1})$, the resulting rate only improves to $\tilde{O}(n^{-3/4})$.
\end{remark}

\subsection{Proof of Theorem 4.4}
\label{sec:apd-proof-excess}
Recall that loss function $\ell(\cdot, \cdot)$ is assumed to be $\mu_\ell$-Lipschitz for the first argument.
In addition, we impose the following assumption.
\begin{assumption}
\label{assump:apd}
There exists a constant $B \ge 1$ such that for every $h \in \mathcal{H}$,
$
P(h-h^*) \leq B P(\ell(y,h)-\ell(y,h^*)).
$
\end{assumption}

Because we consider $\ell(y,y')=(y-y')^2$ in Theorem \ref{thm:excess-rate}, Assumption \ref{assump:apd} holds as stated in \citep{Bartlett2005LocalRC}.

First, we show the following corollary, which is a slight modification of Theorem 5.4 of \citep{Bartlett2005LocalRC}.
\begin{corollary}
    \label{thm:r-hat}
    Let $\hat{h}$ be any element of $\mathcal{H}$ satisfying $P_n \ell (y, \hat{h}) = \inf_{h \in \mathcal{H}} P_n \ell (y, h)$, and let $\hat{h}^{(m)}$ be any element of $\mathcal{H}^{(m)}$ satisfying $P_n \ell (y, \hat{h}^{(m)}) = \inf_{h \in \mathcal{H}^{(m)}} P_n \ell (y, h)$.
    Define
    \begin{equation*}
        \hat{\psi}(r) 
        = c_1 \hat{\mathfrak{R}}_S \{ 
            h \in \mathcal{H} 
            : \max_{m\in\{1,2\}} P_n(h^{(m)} - \hat{h}^{(m)})^2 \leq c_3 r \} + \frac{c_2\eta}{n},
    \end{equation*}
    where $c_1, c_2$ and $c_3$ are constants depending only on $B$ and $\mu_\ell$.
    Then, for any $\eta > 0$, with probability at least $1-5e^{-\eta}$,
    \begin{equation*}
        P\ell(y, \hat{h}) - P\ell(y,{h^*}) \leq \frac{705}{B} \hat{r}^* + \frac{(11\mu_{\ell}+27B)\eta}{n},
    \end{equation*}
    where $\hat{r}^*$ is the fixed point of $\hat{\psi}$.
\end{corollary}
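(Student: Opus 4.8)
The plan is to follow the proof architecture of \citet{Bartlett2005LocalRC} — the data-dependent excess-risk bound for empirical risk minimizers driven by the fixed point of a sub-root function built from local Rademacher complexities — and to modify only the set over which the local complexity is measured. Write $g_h(x,y) = \ell(y,h(x)) - \ell(y,h^*(x))$ for the excess-loss functions. By Assumption \ref{asmp:reg} together with the variance (Bernstein-type) condition $P(h-h^*)^2 \le B\,P g_h$, which holds here because $\ell$ is the squared loss, the class $\{g_h : h \in \mathcal{H}\}$ satisfies the hypotheses needed to run the peeling argument of \citet{Bartlett2005LocalRC} on its star-shaped hull around $0$. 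The conclusion of that argument is exactly an inequality of the form $P g_{\hat h} \le \frac{705}{B}\hat r^* + \frac{(11\mu_\ell + 27B)\eta}{n}$ with probability $1 - 5 e^{-\eta}$, provided $\hat\psi$ is a valid sub-root upper bound for the localized empirical Rademacher complexity of the loss class; the factor $5$ and the explicit constants are inherited verbatim from their Theorem 5.4, since the modification below does not touch any of the concentration (Talagrand) or symmetrization steps.

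The only thing to verify is therefore that the $\hat\psi$ displayed in the statement is such a valid sub-root bound. First, by Talagrand's contraction lemma the $\mu_\ell$-Lipschitzness of $\ell$ lets us replace the local Rademacher complexity of the loss class by a constant multiple of $\hat{\mathfrak{R}}_S\{h \in \mathcal{H} : P_n(h-\hat h)^2 \le c' r\}$ plus an $\eta/n$ term, as in \citet{Bartlett2005LocalRC}. Next — and this is the new ingredient — I would show that this set is contained in $\{h \in \mathcal{H} : \max_{m}P_n(h^{(m)}-\hat h^{(m)})^2 \le c_3 r\}$, where $h = h^{(1)}+h^{(2)}$ is the norm-minimal decomposition of $h$ along the sum-kernel splitting $\mathcal{H}^{(1)} + \mathcal{H}^{(2)}$ and $\hat h^{(m)}$ is the empirical risk minimizer over $\mathcal{H}^{(m)}$. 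This uses (i) the numeric inequality $(a+b)^2 \le 2a^2 + 2b^2$ to split $P_n(h-\hat h)^2$, after passing through $\hat h = \hat h^{(1)} + \hat h^{(2)}$ up to an error controlled by the near-optimality of the $\hat h^{(m)}$ in empirical risk, and (ii) the standard empirical/population $L_2$-equivalence over RKHS balls from \citet{Bartlett2005LocalRC} to absorb the remaining mismatch; tracking these gives the constant $c_3$. Finally, subadditivity of the supremum yields $\hat{\mathfrak{R}}_S\{h : \max_m P_n(h^{(m)}-\hat h^{(m)})^2 \le c_3 r\} \le \sum_{m=1,2}\hat{\mathfrak{R}}_S\{h^{(m)} \in \mathcal{H}^{(m)} : P_n(h^{(m)}-\hat h^{(m)})^2 \le c_3 r\}$, so $\hat\psi$ is $c_1$ times a sum of two genuine single-RKHS local complexities plus $c_2\eta/n$; each summand is non-decreasing and $r\mapsto(\cdot)/\sqrt r$ is non-increasing, hence $\hat\psi$ is sub-root and has a unique positive fixed point $\hat r^*$.

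With $\hat\psi$ confirmed sub-root and dominating the localized complexity of the loss class, plugging $\hat r^*$ into the data-dependent ERM theorem of \citet{Bartlett2005LocalRC} delivers the stated bound. I expect the main obstacle to be steps (i)--(ii) above: making the containment $\{h : P_n(h-\hat h)^2 \le c' r\} \subseteq \{h : \max_m P_n(h^{(m)}-\hat h^{(m)})^2 \le c_3 r\}$ fully rigorous requires care about the decomposition $h = h^{(1)}+h^{(2)}$ (it is unique only when $\mathcal{H}^{(1)} \cap \mathcal{H}^{(2)} = \{0\}$; otherwise one works with the norm-minimizing choice and checks it behaves well under the Gram-matrix computations), about the gap between the single global reference $h^*$ used for the variance condition and the two component-wise references $\hat h^{(m)}$, and about keeping the failure probability within the $5 e^{-\eta}$ budget when several empirical/population equivalences are union-bounded simultaneously. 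Everything downstream — in particular deriving the eigenvalue-tail form of Theorem \ref{thm:excess} from $\hat r^*$ via the standard bound on the local Rademacher complexity of a kernel class in terms of tail sums of Gram-matrix eigenvalues — is routine and handled separately.
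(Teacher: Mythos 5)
There is a genuine gap at the step you yourself flag as the main obstacle, and the tools you propose cannot close it. You want the containment $\{h : P_n(h-\hat h)^2 \le c' r\} \subseteq \{h : \max_{m} P_n(h^{(m)}-\hat h^{(m)})^2 \le c_3 r\}$, i.e.\ you want smallness of the empirical $L_2$ distance of the \emph{sum} $h=h^{(1)}+h^{(2)}$ from $\hat h$ to force smallness of each \emph{component's} distance from $\hat h^{(m)}$. The inequality $(a+b)^2 \le 2a^2+2b^2$ only yields the reverse inclusion (components small $\Rightarrow$ sum small); nothing controls the components by the sum, since the two components can be individually large and cancel (and even with the norm-minimal decomposition there is no quantitative bound without an angle condition between $\mathcal{H}^{(1)}$ and $\mathcal{H}^{(2)}$). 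Likewise, "near-optimality of the $\hat h^{(m)}$ in empirical risk" does not make $\hat h^{(1)}+\hat h^{(2)}$ close to $\hat h$ in $L_2(P_n)$. So the set appearing in $\hat\psi$ is (up to constants) \emph{smaller} than the globally localized set you start from, and your argument that $\hat\psi$ dominates the localized complexity of the loss class does not go through.

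The paper's proof avoids this by never passing through the global localization $P_n(h-\hat h)^2\le c'r$. It defines the population sub-root function $\psi$ with the component-wise localization $\max_m P(h^{(m)}-h^{(m)*})^2\le r$ from the outset, and then shows $\psi \le \hat\psi$ by working \emph{inside each $\mathcal{H}^{(m)}$ separately}: Corollary~5.3 of \citet{Bartlett2005LocalRC} applied to the ERM $\hat h^{(m)}$ over $\mathcal{H}^{(m)}$ (this is where the hypothesis on $\hat h^{(m)}$ is actually used) bounds $P(\hat h^{(m)}-h^{(m)*})^2$; Corollary~2.2 converts population balls to empirical balls within $\mathcal{H}^{(m)}$; and the triangle inequality in $L_2(P_n)$ recentres from $h^{(m)*}$ to $\hat h^{(m)}$. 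Since the set in $\psi$ is an intersection over $m$ of single-space balls, each of these containments is one-dimensional in $m$ and goes in the right direction, giving $\psi(r)\le\hat\psi(r)$ and hence $r^*\le\hat r^*$ by Lemma~4.3. If you restructure your argument to localize component-wise on the population side and transfer each component separately, rather than trying to deduce component-wise control from control of the sum, the rest of your outline (sub-rootness, the $5e^{-\eta}$ budget from union-bounding the per-component events, and plugging into the data-dependent master theorem) is sound.
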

\begin{proof}
Define the function $\psi$ as
\begin{equation*}
    \psi(r) = \frac{c_1}{2} \mathfrak{R} \{ h \in \mathcal{H} : \mu_{\ell}^2 \max P(h^{(m)} - h^{(m)*})^2 \leq r \} + \frac{(c_2-c_1)\eta}{n}.
\end{equation*}
Because $\mathcal{H}, \mathcal{H}^{(1)}$ and $\mathcal{H}^{(2)}$ are all convex and thus star-shaped around each of its points, Lemma 3.4 of \citep{Bartlett2005LocalRC} implies that $\psi$ is a sub-root.
Also, define the sub-root function $\psi_m$ as
\begin{equation*}
    \psi_m(r) = \frac{c_1^{(m)}}{2} \mathfrak{R} \{ h^{(m)} \in \mathcal{H}^{(m)} : \mu_{\ell}^2 P(h^{(m)} - h^{(m)*})^2 \leq r \} + \frac{(c_2-c_1)\eta}{n}.
\end{equation*}
Let $r_m^*$ be the fixed point of $\psi_m(r_m)$.
Now, for $r_m \ge \psi_m(r_m)$, Corollary 5.3 of \citep{Bartlett2005LocalRC} and the condition on the loss function imply that, with probability at least $1-e^{-\eta}$,
\begin{equation*}
    \mu_{\ell}^2 P(\hat{h}^{(m)} - h^{(m)*})^2 
    \leq B \mu_{\ell}^2 P(\ell(y, \hat{h}^{(m)}) - \ell(y, \hat{h}^{(m)*})) 
    \leq 705 \mu_{\ell}^2 r_m + \frac{(11\mu_{\ell}+27B)B\mu_{\ell}^2\eta}{n}.
\end{equation*}
Denote the right-hand side by $s_m$, and define $r=\max r_m$ and $s = \max s_m$.
Because $s \ge s_m \ge r_m \ge r^*_m$, we obtain $s \ge \psi_m(s)$ according to Lemma 3.2 of \citep{Bartlett2005LocalRC}, and thus,
\begin{equation*}
    s 
    \ge 10\mu_{\ell}^2 \mathfrak{R} \{ 
        h^{(m)} \in \mathcal{H}^{(m)} 
        : \mu_{\ell}^2 P(h^{(m)} - h^{(m)*})^2 \leq s 
        \} 
        + \frac{11\mu_{\ell}^2\eta}{n}.
\end{equation*}
Therefore, applying Corollary 2.2 of \citep{Bartlett2005LocalRC} to the class $\mu_{\ell}\mathcal{H}^{(m)}$, it follows that with probability at least $1-e^{-\eta}$,
\begin{equation*}
    \{ h^{(m)} \in \mathcal{H}^{(m)} : \mu_{\ell}^2 P(h^{(m)} - h^{(m)*})^2 \leq s \}
    \subseteq \{ h^{(m)} \in \mathcal{H}^{(m)} : \mu_{\ell}^2 P_n (h^{(m)}-h^{(m)*})^2 \leq 2s\}.
\end{equation*}
This implies that with probability at least $1-2e^{-\eta}$,
\begin{align*}
    P_n (\hat{h}^{(m)} - h^{(m)*})^2 
    &\leq 2 \left( 705 r + \frac{(11\mu_{\ell}+27B)B\eta}{n} \right) \\
    &\leq 2 \left( 705 + \frac{(11\mu_{\ell}+27B)B}{n} \right)r, \\
\end{align*}
where the second inequality follows from $r \ge \psi(r) \ge \frac{c_2\eta}{n}$.
Define $2 \left( 705 + \frac{(11\mu_{\ell}+27B)B}{n} \right) = c'$. According to the triangle inequality in $L_2(P_n)$,
it holds that
\begin{align*}
    P_n(h^{(m)} - \hat{h}^{(m)})^2 
    &\leq 
        \Big( 
            \sqrt{P_n(h^{(m)}-h^{(m)*})^2} 
            + \sqrt{P_n(h^{(m)*}-\hat{h}^{(m)})^2} 
        \Big)^2 \\
    &\leq \Big( \sqrt{P_n(h^{(m)}-h^{(m)*})^2} + \sqrt{c'r} \Big)^2.
\end{align*}
Again, applying Corollary 2.2 of \citep{Bartlett2005LocalRC} to $\mu_{\ell}\mathcal{H}^{(m)}$ as before, but now for $r \ge \psi_m(r)$, it follows that with probability at least $1-4 e^{-\eta}$,
\begin{align*}
        &\{ h \in \mathcal{H} : \mu_{\ell}^2 \max P(h^{(m)}-h^{(m)*})^2 \leq r \} \\
    &\hspace{50pt}= 
        \bigcap_{m=1}^2 \{ 
            h^{(m)} \in \mathcal{H}^{(m)} : \mu_{\ell}^2 P(h^{(m)} - h^{(m)*})^2 \leq r 
        \} \\
    &\hspace{50pt}\subseteq 
        \bigcap_{m=1}^2 \{ 
            h^{(m)} \in \mathcal{H}^{(m)} : \mu_{\ell}^2 P_n(h^{(m)} - h^{(m)*})^2 \leq 2r 
        \} \\
    &\hspace{50pt}\subseteq 
        \bigcap_{m=1}^2 \{ 
            h^{(m)} \in \mathcal{H}^{(m)} : \mu_{\ell}^2 P_n(h^{(m)} - \hat{h}^{(m)})^2 \leq ( \sqrt{2r} + \sqrt{c'r} )^2 \} \\
    &\hspace{50pt}= 
        \{ h \in \mathcal{H} : \mu_{\ell}^2 \max P_n(h^{(m)} - \hat{h}^{(m)})^2 \leq c_3 r \},
\end{align*}
where $c_3 = ( \sqrt{2} + \sqrt{c'} )^2 $.
Combining this with Lemma A.4 of \citep{Bartlett2005LocalRC} leads to the following inequality: with probability at least $1-5e^{-x}$
\begin{align*}
        \psi(r)
    &= 
        \frac{c_1}{2} \mathfrak{R} \{ 
            h \in \mathcal{H} 
            : \mu_{\ell}^2 \max P(h^{(m)} - h^{(m)*})^2 \leq r
        \} + \frac{(c_2-c_1)\eta}{n} \\
    &\leq 
        c_1 \hat{\mathfrak{R}}_S \{ 
            h \in \mathcal{H} 
            : \mu_{\ell}^2 \max P(h^{(m)} - h^{(m)*})^2 \leq r
        \} + \frac{c_2\eta}{n} \\
    &\leq 
        c_1 \hat{\mathfrak{R}}_S \{ 
            h \in \mathcal{H} 
            : \mu_{\ell}^2 \max P_n(h^{(m)} - \hat{h}^{(m)})^2 \leq c_3r
        \} + \frac{c_2\eta}{n} \\
    &= 
        \hat{\psi}(r).
\end{align*}
Letting $r = r^*$ and using Lemma 4.3 of \citep{Bartlett2005LocalRC}, we obtain $r^* \leq \hat{r}^*$, thus proving the statement.
\end{proof}
Under Assumption \ref{asmp:reg}, we obtain the following excess risk bound for the proposed model class using Corollary \ref{thm:r-hat}.
The proof is based on \cite{Bartlett2005LocalRC}.
\begin{theorem}
\label{thm:excess}
Let $\hat{h}$ be any element of $\mathcal{H}$ satisfying $P_n \ell(y, \hat{h}(\bmx)) = \inf_{h \in \mathcal{H}} P_n \ell (y, h(\bmx))$.
Suppose that Assumption \ref{asmp:reg} is satisfied, then there exists a constant $c$ depending only on $\mu_{\ell}$ such that for any $\eta > 0$, with probability at least $1-5e^{-\eta}$,
\begin{align*}
    &P(y - \hat{h}(\bmx))^2 - P(y - h^*(\bmx))^2  \\
    &\hspace{50pt}\leq c \left( 
            \min_{0 \leq \kappa_1, \kappa_2 \leq n} \left\{ 
            \frac{\kappa_1 + \kappa_2}{n}  + \left( \frac{1}{n} \sum_{j=\kappa_1+1}^n \hat{\lambda}_j^{(1)} + \sum_{j=\kappa_2+1}^n \hat{\lambda}_j^{(2)} \right)^\frac{1}{2}
        \right\} 
        + \frac{\eta}{n}
    \right).
\end{align*}
\end{theorem}
Theorem \ref{thm:excess} is a multiple-kernel version of Corollary 6.7 of \citep{Bartlett2005LocalRC}, and a data-dependent version of Theorem 2 of \citep{Kloft2011TheLR} which considers the eigenvalues of the Hilbert-Schmidt operators on $\mathcal{H}$ and $\mathcal{H}^{(m)}$.
Theorem \ref{thm:excess} concerns the eigenvalues of the Gram matrices $K^{(m)}$ computed from the data.
\begin{proof}[Proof of Theorem \ref{thm:excess}]
Define
$
    R = \max_m \sup_{h \in \mathcal{H}^{(m)}} P_n (y - h(\bmx))^2.
$
For any $h \in \mathcal{H}^{(m)}$, we obtain
\begin{equation*}
     P_n(h^{(m)}(\bmx) - \hat{h}^{(m)}(\bmx))^2 
        \leq 2 P_n (y - h^{(m)}(\bmx))^2 + 2 P_n(y - \hat{h}^{(m)}(\bmx))^2
        \leq 4 \sup_{h \in \mathcal{H}^{(m)}} P_n (y - h(\bmx))^2 
        \leq 4 R.
\end{equation*}
From the symmetry of the $\sigma_i$ and the fact that $\mathcal{H}^{(m)}$ is convex and symmetric, we obtain the following: 
\begin{align*}
        &\hat{\mathfrak{R}}_S \{
            h \in \mathcal{H} 
            : \max P_n (h^{(m)} - \hat{h}^{(m)})^2 \leq 4R \} \\
    &\hspace{100pt}=
        \mathbb{E}_\sigma 
        \sup_{\substack{
            h^{(m)} \in \mathcal{H}^{(m)} \\ 
            P_n (h^{(m)} - \hat{h}^{(m)})^2 \leq 4R
        }}
        \frac{1}{n} \sum_{i=1}^n \sigma_i \sum_{m=1}^2 h^{(m)}(\bmx_i) \\
    &\hspace{100pt}=
        \mathbb{E}_\sigma 
        \sup_{\substack{
            h^{(m)} \in \mathcal{H}^{(m)} \\
            P_n (h^{(m)} - \hat{h}^{(m)})^2 \leq 4R
        }}
        \frac{1}{n} \sum_{i=1}^n \sigma_i \sum_{m=1}^2 (h^{(m)}(\bmx_i)-\hat{h}^{(m)}(\bmx_i)) \\
    &\hspace{100pt}\leq
        \mathbb{E}_\sigma 
        \sup_{\substack{
            h^{(m)}, g^{(m)} \in \mathcal{H}^{(m)} \\
            P_n (h^{(m)} - g^{(m)})^2 \leq 4R
        }}
        \frac{1}{n} \sum_{i=1}^n \sigma_i \sum_{m=1}^2 (h^{(m)}(\bmx_i)-g^{(m)}(\bmx_i)) \\
    &\hspace{100pt}=
        2\mathbb{E}_\sigma 
        \sup_{\substack{
            h^{(m)} \in \mathcal{H}^{(m)} \\
            P_n h^{(m)2} \leq R
        }}
        \frac{1}{n} \sum_{i=1}^n \sigma_i \sum_{m=1}^2 h^{(m)}(\bmx_i) \\
    &\hspace{100pt}\leq
        2 \sum_{m=1}^2 \mathbb{E}_\sigma 
        \sup_{\substack{
            h^{(m)} \in \mathcal{H}^{(m)} \\
            P_n h^{(m)2} \leq R
        }}
        \frac{1}{n} \sum_{i=1}^n \sigma_i h^{(m)}(\bmx_i) \\
    &\hspace{100pt}\leq
        2\sum_{m=1}^2 \left\{ 
            \frac{2}{n} \sum_{j=1}^n \min \{ R, \hat{\lambda}_j^{(m)} \} \right
        \}^\frac{1}{2} \\
    &\hspace{100pt}\leq
        \left\{ 
            \frac{16}{n} \sum_{m=1}^2\sum_{j=1}^n \min \{ R, \hat{\lambda}_j^{(m)} \} \right
        \}^\frac{1}{2}.
\end{align*}
The second inequality comes from the subadditivity of supremum and the third inequality follows from Theorem 6.6 of \citep{Bartlett2005LocalRC}. To obtain the last inequality, we use $\sqrt{x} + \sqrt{y} \leq \sqrt{2(x+y)}$.
Thus, we have
\begin{align*}
    & 
        2c_1 \hat{\mathfrak{R}}_S \{
            h \in \mathcal{H} 
            : \max P_n (h^{(m)} - \hat{h}^{(m)})^2 \leq 4R 
        \} 
        + \frac{(c_2+2)\eta}{n} \\
    &  \hspace{100pt} 
        \leq 4c_1 \left\{
            \frac{16}{n} \sum_{m=1}^2 \sum_{j=1}^{n} \min \big\{ R, \hat{\lambda}_j^{(m)} \big\}
        \right\}^\frac{1}{2}
         + \frac{(c_2+2)\eta}{n}, 
\end{align*}
for some constants $c_1$ and $c_2$. To apply Corollary \ref{thm:r-hat}, we should solve the following inequality for $r$
\begin{equation*}
    r \leq 4c_1 \left\{
        \frac{16}{n} \sum_{m=1}^2 \sum_{j=1}^{n} \min \big\{ r, \hat{\lambda}_j^{(m)} \bigg\}
    \right\}^\frac{1}{2}.
\end{equation*}
For any integers $\kappa_m \in [0, n]$, the right-hand side is bounded as
\begin{align*}
        4c_1 \left\{
            \frac{16}{n} \sum_{m=1}^2 \sum_{j=1}^{n} \min \big\{ r, \hat{\lambda}_j^{(m)} \bigg\}
        \right\}^\frac{1}{2}
    &\leq
        4c_1 \left\{
            \frac{16}{n} \sum_{m=1}^2 \left( \sum_{j=1}^{\kappa_m} r + \sum_{j=\kappa_m+1}^n \hat{\lambda}_j^{(m)} \right)
        \right\}^\frac{1}{2} \\
    &=
        \left\{
            \left( \frac{256c_1^2}{n} \sum_{m=1}^2 \kappa_m \right) r + \frac{256c_1^2}{n} \sum_{m=1}^2 \sum_{j=\kappa_m + 1}^n \hat{\lambda}_j^{(m)}
        \right\}^\frac{1}{2}, \\
\end{align*}
and we obtain the solution $r^*$ as 
\begin{align*}
        r^{*} 
    &\leq 
        \frac{128c_1^2}{n} \sum_{m=1}^2 \kappa_m
        + \left(
            \left\{ 
                \frac{128c_1^2}{n} \sum_{m=1}^2 \kappa_m \right
            \}^2
            + \frac{256c_1^2}{n} \sum_{m=1}^2 \sum_{j=\kappa_m + 1}^n \hat{\lambda}_j^{(m)}
        \right)^\frac{1}{2} \\
    &\leq 
        \frac{256c_1^2}{n} \sum_{m=1}^2 \kappa_m 
        + \left( 
            \frac{256c_1^2}{n} \sum_{m=1}^2 \sum_{j=\kappa_m + 1}^n \hat{\lambda}_j^{(m)} 
        \right)^{\frac{1}{2}}.
\end{align*}
Optimizing the right-hand side with respect to $\kappa_1$ and $\kappa_2$, we obtain the solution as
\begin{equation*}
    r^* \leq \min_{0 \leq \kappa_1,\kappa_2 \leq n} \left\{ 
        \frac{256c_1^2}{n} \sum_{m=1}^2 \kappa_m
        + \left( 
            \frac{256c_1^{2}}{n} \sum_{m=1}^2 \sum_{j=\kappa_m+1}^n \hat{\lambda}_j^{(m)} 
        \right)^\frac{1}{2}
    \right\}.
\end{equation*}
Furthermore, according to Corollary \ref{thm:r-hat}, there exists a constant $c$ such that with probability at least $1-5e^{-\eta}$,
\begin{align*}
    &P(y - \hat{h}(x))^2 - P(y - h^*(x))^2 \\
        &\hspace{60pt}\leq c \left( 
            \min_{0 \leq \kappa_1, \kappa_2 \leq n} \left\{ 
            \frac{1}{n} \sum_{m=1}^2 \kappa_m + \left( \frac{1}{n} \sum_{m=1}^2 \sum_{j=\kappa_m+1}^n \hat{\lambda}_j^{(m)} \right)^\frac{1}{2}
        \right\} 
        + \frac{\eta}{n}
    \right).
\end{align*}
\end{proof}

With Theorem \ref{thm:excess} and Assumption \ref{asmp:decay}, we prove Theorem \ref{thm:excess} as follows.
\begin{proof}[Proof of Theorem 4.4]
Using the inequality $\sqrt{x+y} \leq \sqrt{x} + \sqrt{y}$ for $x\ge0, y\ge0$, we have
\begin{align*}
    &P(y - \hat{h}(\bmx))^2 - P(y - h^*(\bmx))^2 \\
    &\hspace{50pt}= 
    O \Bigg( 
        \min_{0 \leq \kappa_1, \kappa_2 \leq n} \Bigg\{ 
            \frac{\kappa_1 + \kappa_2}{n}
            + \left( 
                \frac{1}{n}\sum_{j=\kappa_1+1}^n \hat{\lambda}_j^{(1)} 
                + \frac{1}{n}  \sum_{j=\kappa_2+1}^n \hat{\lambda}_j^{(2)}  
            \right)^\frac{1}{2}
        \Bigg\} 
        + \frac{\eta}{n}
    \Bigg) \\
    &\hspace{50pt}\leq 
    O \left( 
        \min_{0 \leq \kappa_1, \kappa_2 \leq n} \left\{ 
            \frac{\kappa_1+\kappa_2}{n} 
            + \left( 
                \frac{1}{n} \sum_{j=\kappa_1+1}^n j^{-\frac{1}{s_1}} 
                + \frac{1}{n} \sum_{j=\kappa_2+1}^n j^{-\frac{1}{s_2}}  
            \right)^\frac{1}{2}
        \right\} 
        + \frac{\eta}{n}
    \right) \\
    &\hspace{50pt}\leq 
    O \left( 
        \min_{0 \leq \kappa_1, \kappa_2 \leq n} \left\{ 
            \frac{\kappa_1+\kappa_2}{n} 
            + \left( 
                \frac{1}{n} \sum_{j=\kappa_1+1}^n j^{-\frac{1}{s_1}}
            \right)^\frac{1}{2}
            +\left(
                \frac{1}{n} \sum_{j=\kappa_2+1}^n j^{-\frac{1}{s_2}}  
            \right)^\frac{1}{2}
        \right\} 
        + \frac{\eta}{n}
    \right).
\end{align*}
Because it holds that
\begin{equation*}
    \sum_{j=\kappa_m+1}^n j^{-\frac{1}{s_m}}
    < \int_{\kappa_m}^\infty x^{-\frac{1}{s_m}} {\rm d}x
    < \left[ \frac{1}{1-\frac{1}{s_m}}x^{1-\frac{1}{s_m}} \right]_{\kappa_m}^\infty
    = \frac{s_m}{1-s_m}\kappa_m^{1-\frac{1}{s_m}},
\end{equation*}
for $m=1,2$, we should solve the following minimization problem:
\begin{equation*}
    \min_{0 \leq \kappa_1, \kappa_2 \leq n} \left\{ 
        \frac{\kappa_1 + \kappa_2}{n} 
        + \left( 
            \frac{1}{n}\frac{s_1}{1-s_1}\kappa_1^{1-\frac{1}{s_1}}
        \right)^\frac{1}{2}
        +\left(
            \frac{1}{n}\frac{s_1}{1-s_1}\kappa_2^{1-\frac{1}{s_1}}
        \right)^\frac{1}{2}
    \right\}
    \equiv g(\kappa).
\end{equation*}
Taking the derivative, we have
\begin{equation*}
    \frac{\partial g(\kappa)}{\partial \kappa_1}
    = \frac{1}{n} 
        + \frac{1}{2} \left( 
            \frac{1}{n}\frac{s_1}{1-s_1}\kappa_1^{1-\frac{1}{s_1}}
        \right)^{-\frac{1}{2}}
        \bigg( -\frac{\kappa_1^{-\frac{1}{s_1}}}{n}\bigg).
\end{equation*}
Setting this to zero, we find the optimal $\kappa_1$ as
\begin{equation*}
    \kappa_1 = \left( \frac{s_1}{1-s_1}\frac{4}{n}\right)^{\frac{s_1}{1+s_1}}.
\end{equation*}
Similarly, we have
\begin{equation*}
    \kappa_2 = \left( \frac{s_2}{1-s_2}\frac{4}{n}\right)^{\frac{s_2}{1+s_2}},
\end{equation*}
and
\begin{align*}
    &P(y - \hat{h}(\bmx))^2 - P(y - h^*(\bmx))^2\\
    &\hspace{10pt} \leq 
    O \bigg( 
        \frac{1}{n} \left( \frac{s_1}{1-s_1}\frac{4}{n}\right)^{\frac{s_1}{1+s_1}}
        + \frac{1}{n} \left( \frac{s_2}{1-s_2}\frac{4}{n}\right)^{\frac{s_2}{1+s_2}} \\
    &\hspace{80pt}    + 2^{\frac{1-s_1}{1+s_1}} \bigg( \frac{s_1}{1-s_1}\frac{1}{n} \bigg)^\frac{1}{1+s_1}
        + 2^{\frac{1-s_2}{1+s_2}} \bigg( \frac{s_2}{1-s_2}\frac{1}{n} \bigg)^\frac{1}{1+s_2}
        + \frac{\eta}{n}
    \bigg) \\
    &\hspace{10pt}= 
    O \! \left( \,
        n^{-\frac{1}{1+s_1}} + n^{-\frac{1}{1+s_2}}
    \right) \\
    &\hspace{10pt}= 
    O \! \left( \,
        n^{-\frac{1}{1+ \max\{s_1,s_2\}}}
    \right). \\
\end{align*}
\end{proof}



\end{document}